\documentclass{article}
\usepackage[T1]{fontenc}
\usepackage{iclr2027_conference,times}
\usepackage{float}
\usepackage{amsmath,amssymb,amsthm,booktabs,graphicx,microtype,tabularx,longtable}
\usepackage{hyperref,url,xcolor,colortbl}
\usepackage{wrapfig,needspace}
\definecolor{scorebetter}{RGB}{222,241,226}
\definecolor{scoreworse}{RGB}{250,224,224}
\usepackage{placeins}
\title{What Makes High-Magnification Knowledge Transferable?
A Study of Cross-Resolution Distillation in Whole-Slide Imaging}
\author{Zhiyuan Yang \& Jiahao Cheng \\
Department of Computer Science and Software Engineering (CSSE)\\
Concordia University\\
Montreal, Canada\\
\texttt{\{zhiyuan.yang,jiahao.cheng\}@mail.concordia.ca}\\
\AND
Mahdi S. Hosseini\thanks{Corresponding author.}\\
Department of Computer Science and Software Engineering (CSSE)\\
Concordia University\\
Montreal, Canada\\
Mila -- Quebec AI Institute\\
Montreal, Canada\\
\texttt{mahdi.hosseini@concordia.ca}
}
\iclrfinalcopy 

\newcommand{\five}{$5\times$}
\newcommand{\twenty}{$20\times$}

\newcommand{\R}{\mathbb{R}}
\begin{document}
\maketitle
\lhead{Under review as a conference paper at ICLR 2027}
\addtocontents{toc}{\protect\setcounter{tocdepth}{-1}}
\begin{abstract}
Cross-resolution knowledge distillation aims to improve low-magnification whole-slide analysis by transferring high-magnification representations, yet the conditions for useful transfer remain unclear. We develop a decomposition-based analysis of teacher access, representation loss, and model excess, motivating three questions: whether \textbf{(a) teacher targets help the task, (b) low-magnification students can predict them,} and  \textbf{(c) slide models benefit from those predictions.} We investigate them through controlled experiments across ten pathology cohorts spanning classification, grading, and survival prediction. In the main comparison, providing teacher regional means alongside native low-magnification features improves downstream performance in all ten cohorts. Direct prediction achieves lower reconstruction error than residual prediction, yet the predicted features underrepresent variation in the teacher targets. Moreover, better reconstruction does not consistently improve downstream scores, and retaining native features changes performance even when the predicted teacher features are held fixed. Together, these findings expose a gap between reconstructing teacher representations and realizing their downstream value. They challenge the sufficiency of reconstruction error as a measure of cross-resolution transfer and provide a diagnostic framework for examining where that transfer breaks down. Future distillation designs must account for both what students can predict and how slide models use those predictions.
\end{abstract}
\section{Introduction}\label{sec:introduction}
It is believed that knowledge transfer across resolutions can improve low-magnification task models in computational pathology \citep{XMAG,LRMIL}, but the conditions that make high-magnification supervision transferable remain poorly understood. Slide models aggregate large bags of patch features for diagnosis and prognosis \citep{ilse2018attention,xu2024gigapath,chief2024,cpath,ding2025titan,vorontsov2026prism2}. For the same tissue area and patch dimensions, \twenty{} processing requires sixteen times as many patches as \five{}. Cross-resolution transfer seeks to use this detailed supervision during training while retaining low-magnification inputs at deployment, as illustrated in Figure~\ref{fig:magnification_gap_intro}a. Choosing what to transfer is part of the problem. Adapting teacher guidance to student capacity can improve distillation \citep{qian2025adapt}, while stronger teachers may not lead to comparable student gains \citep{lee2025customkd}. A regional mean averages high-magnification features within a low-magnification field, while spatial targets retain variation within that field. The low-magnification input may not determine all of this detail, reducing the prediction diversity (Figure \ref{fig:magnification_gap_intro}b). Even an accurate prediction may provide little task benefit if it replaces useful native features or is difficult for the slide model to use.

\begin{figure}[H]
\centering
\includegraphics[width=\linewidth]{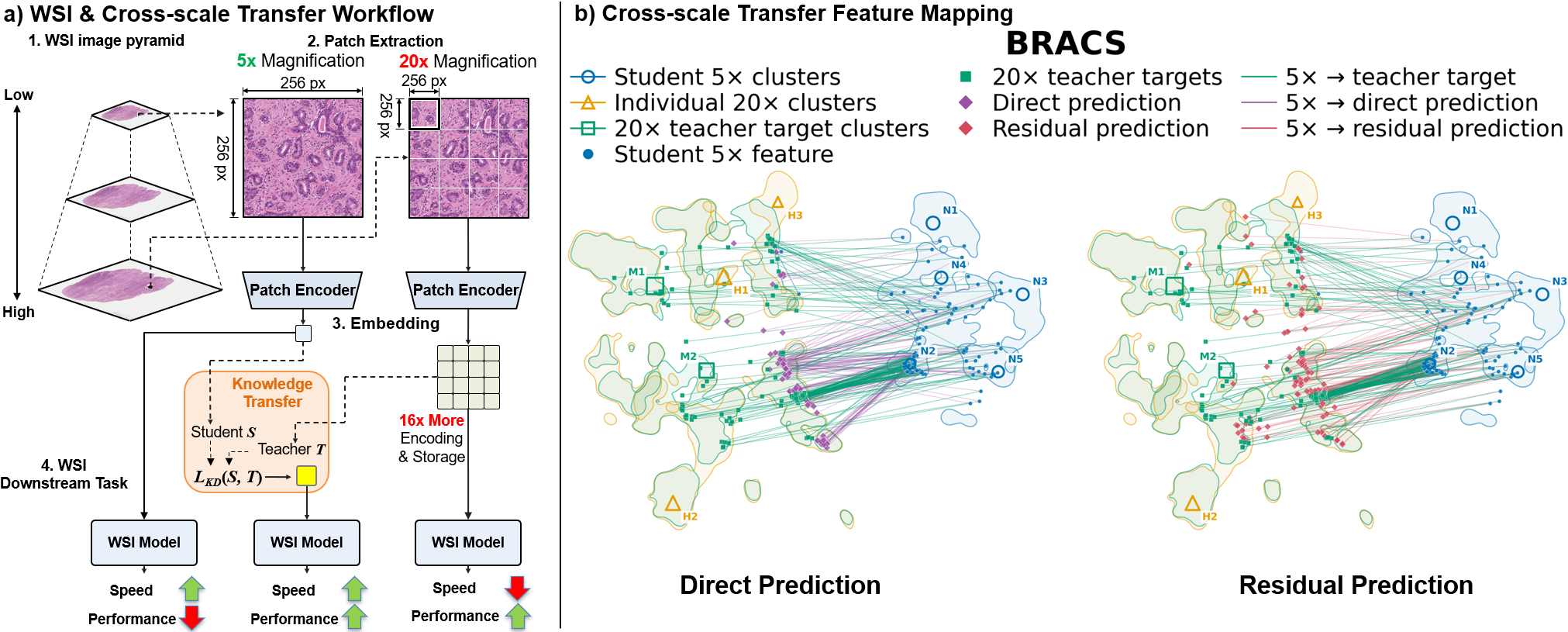}
\caption{
\textbf{Cross-resolution transfer in WSI representation learning.}
(a) General WSI workflow for improving slide-level prediction through high-magnification supervision while retaining a low-magnification token budget. (b) Direct transfers predict the teacher target from student input directly, while residual transfers predict the difference between the student input and teacher target. In either case, the predictions fail to cover the full diversity of their high-magnification teacher targets.
}
\label{fig:magnification_gap_intro}
\end{figure}

Recent systems align representations across magnifications \citep{MagAlign}, combine regional and spatial teacher guidance \citep{XMAG}, or transfer patch and slide supervision to a low-magnification student \citep{LRMIL}. These studies evaluate complete transfer systems and their components. Task gains alone do not establish whether (a) the teacher target is useful, (b) low-magnification input can predict it, and (c) the slide model benefits from that prediction. In our study, we start from mathematics to careful experimental designs to answer these questions. We study these questions for high-resolution to low-resolution transfer across five classification and grading and five survival cohorts. Formal analysis and matched experiments connect observed teacher targets, student predictions and their use by the slide model. Our contributions are as follows.

\textbf{Task risk decomposition.} We formulate cross-resolution transfer through a decomposition of task risk. The formulation separates the value of teacher information from information lost in the student representation and the task model's ability to use what remains.

\textbf{Controlled experiments across ten cohorts.} We study these distinctions across ten classification, grading and survival cohorts with five realizations. Our controls let us examine teacher content, prediction and delivery separately for clean answers to each question.

\textbf{Empirical findings on useful transfer.} We find that observed regional teacher means can complement native features, yet better reconstruction does not consistently improve task scores. Predictions often fail to cover full teacher target diversity. Further, retaining native features changes performance. Therefore, the way predictions are used becomes an essential part of the transfer evaluation.

\section{Related Work}\label{sec:related}
\textbf{Knowledge transfer across representations.} Knowledge distillation transfers teacher predictions, features or attention \citep{hinton2015distilling, romero2015fitnets, zagoruyko2017attention}. Recent methods adapt guidance to student capacity \citep{qian2025adapt} or preserve useful student features during teacher matching \citep{zhang2025rsd}. Residual error distillation trains an assistant to predict differences between teacher and student features \citep{ResErrKD}. Our direct and residual predictors share the same frozen native input and regional target. Cross-resolution transfer additionally changes the information available at deployment, connecting it to learning with privileged information \citep{vapnik2009new,lopezpaz2016unifying}.

\textbf{Cross-resolution learning.} Earlier histology studies transfer high-resolution feature, output and class-specific attention guidance to lower-resolution classifiers \citep{RDPath,InterRKD}. Recent approaches align magnification-specific representations \citep{MagAlign}, distill regional means and local features \citep{XMAG}, or combine aligned patch supervision with slide predictions and attention \citep{LRMIL}. Their component ablations assess particular systems. Our controls evaluate observed teacher targets separately from student predictions, prediction beyond a training-derived constant, and native retention with the prediction unchanged. 

\textbf{Reconstruction and useful information.} Closer agreement with teacher predictions need not improve generalization \citep{stanton2021does,tiapkin2025teacherhacking}. Usable information and conditional probing distinguish what a representation contains from what a restricted model can exploit, including beyond a baseline \citep{xu2020usable, hewitt2021conditional}. Work on spatial token compression within tiles evaluates both reconstruction and slide-task performance \citep{li2025pathvq}. These distinctions motivate our separate tests of teacher prediction and task benefit when native features are retained.

\newtheorem{crdtheorem}{Theorem}
\newtheorem{crdproposition}{Proposition}
\newtheorem{crdlemma}{Lemma}

\section{Problem Formulation and Analytical Rationale}\label{sec:method}
Cross-resolution knowledge transfer involves choosing a teacher target, learning from low-magnification input and delivering the resulting representation to a task-specific model. We distinguish these choices through a decomposition of population risk. It separates the benefit of teacher access from information lost in the representation and the task model's ability to use what remains.

\textbf{Information access and task risk.} We denote task outcome as $Y$, complete low magnification input as $B$, including raw images or feature embeddings. For each region $i$, a summary function $\tau$ maps the aligned high-magnification features $H_i$ to a target $t_i$, and $T=(t_i)_i$ collects the teacher targets. A student $f_\theta$ with parameters $\theta$ produces $Q$ from $B$, either a teacher prediction or an adapted encoder output. To describe how outputs reach the task model, we use a delivery function $D$ that combines or replaces native features to form representation $Z$. The task model $g$ maps $Z$ to its prediction $\widehat Y$,
\begin{equation}
t_i=\tau(H_i),\qquad Q=f_\theta(B),\qquad Z=D(B,Q)\text{ or }Z=D(Q),\qquad \widehat Y=g(Z).
\label{eq:teacher_targets}
\end{equation}
The analysis concerns an independent evaluation population after training is complete. Training data, transformations and parameters remain fixed, so $Z$ is a deterministic function of $B$. Let $U$ denote the input supplied to a task model. It can be the native input $B$, the student representation $Z$ or the joint input $(B,T)$. A nonnegative loss $\ell$ compares a task prediction with outcome $Y$, and $\mathbb E$ averages over the evaluation population. We write the population risk of a trained model $g$ using $U$ as $\mathcal R(g;U)=\mathbb E[\ell(g(U),Y)]$. To distinguish the information in an input from a particular model's ability to use it, let $a$ range over all measurable prediction rules using $U$ with the allowed task outputs. The infimum risk $\rho(U)=\inf_a\mathbb E[\ell(a(U),Y)]$ is the lower limit permitted by $U$ without restricting network architecture. We write $\rho(B,T)$ for the joint input $(B,T)$. For a trained model $g$ using $Z$, the excess risk relative to $\rho(Z)$ is $\varepsilon_g=\mathcal R(g;Z)-\rho(Z)$. This gap includes the effects of architecture, finite training data, optimization and model selection.

\begin{crdtheorem}
\label{thm:transfer-risk}
Suppose the defined risks are finite and all functions are measurable. Then
\begin{equation}
\mathcal R(g;Z)-\rho(B,T)
=\underbrace{\rho(B)-\rho(B,T)}_{\text{teacher access}}
+\underbrace{\rho(Z)-\rho(B)}_{\text{representation loss}}
+\underbrace{\varepsilon_g}_{\text{model excess}}.
\label{eq:transfer-risk}
\end{equation}
Each term is nonnegative. If $B$ can be recovered from $Z$, representation loss is zero. Let $Z_P$ and $Z_I$ be two deterministic augmentations retaining the same complete $B$, with trained task models $g_P$ and $g_I$ and respective excess risks $\varepsilon_{g_P}$ and $\varepsilon_{g_I}$. Then
\begin{equation}
\mathcal R(g_P;Z_P)-\mathcal R(g_I;Z_I)=\varepsilon_{g_P}-\varepsilon_{g_I}.
\label{eq:retained-input-risk}
\end{equation}
\end{crdtheorem}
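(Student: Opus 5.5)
The decomposition \eqref{eq:transfer-risk} is a telescoping identity. Writing $\mathcal R(g;Z)=\rho(Z)+\varepsilon_g$ by the definition of $\varepsilon_g$, and adding and subtracting $\rho(B)$, gives
\begin{equation*}
\mathcal R(g;Z)-\rho(B,T)=\bigl(\rho(B)-\rho(B,T)\bigr)+\bigl(\rho(Z)-\rho(B)\bigr)+\bigl(\mathcal R(g;Z)-\rho(Z)\bigr).
\end{equation*}
Finiteness of all risks guarantees that no $\infty-\infty$ expression appears, so the rearrangement is legitimate. The substantive content is the nonnegativity of each term, which I will establish by a single monotonicity principle: if $U'=\phi(U)$ for a measurable $\phi$, then $\rho(U)\le\rho(U')$. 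Indeed, for any measurable rule $a'$ on $U'$, the composition $a'\circ\phi$ is a measurable rule on $U$ with the same output space and the same expected loss. So the infimum over rules on $U$ ranges over a superset of values.

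I will apply this principle three times.
\begin{itemize}
\item \textbf{Teacher access.} $B$ is the coordinate projection of $(B,T)$, so $\rho(B,T)\le\rho(B)$.
\item \textbf{Representation loss.} Since training is complete and parameters are fixed, $Z=D(B,f_\theta(B))$ (or $D(f_\theta(B))$) is a measurable deterministic function of $B$. Hence $\rho(B)\le\rho(Z)$.
\item \textbf{Model excess.} The trained $g$ is itself an admissible measurable rule on $Z$, so $\mathcal R(g;Z)\ge\inf_a\mathbb E[\ell(a(Z),Y)]=\rho(Z)$, giving $\varepsilon_g\ge 0$.
\end{itemize}
The \textbf{recoverability claim} follows from the same principle: if $B=\psi(Z)$ for measurable $\psi$, then $\rho(Z)\le\rho(B)$, which combined with the previous inequality yields equality, so the representation loss is zero.

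For \eqref{eq:retained-input-risk}, each of $Z_P$ and $Z_I$ is a deterministic function of $B$ that retains $B$. Taking $\psi$ to be the projection onto the retained block, $B$ is recoverable from each. By the previous paragraph, $\rho(Z_P)=\rho(B)=\rho(Z_I)$. Then
\begin{equation*}
\mathcal R(g_P;Z_P)-\mathcal R(g_I;Z_I)=\rho(Z_P)+\varepsilon_{g_P}-\rho(Z_I)-\varepsilon_{g_I}=\varepsilon_{g_P}-\varepsilon_{g_I}.
\end{equation*}

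The only step needing care is the measurability bookkeeping in the monotonicity principle. I need the composition $a'\circ\phi$ to be admissible, meaning measurable and taking allowed task outputs. I also need "retaining the same complete $B$" to mean that a measurable recovery map exists, not merely that $B$ appears in some non-measurable sense. I will state this interpretation explicitly, since the hypothesis that all functions are measurable is exactly what covers it. Everything else is routine algebra.
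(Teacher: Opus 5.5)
Your proposal is correct and follows the paper's proof essentially step for step. The paper likewise proves $\rho(U)\le\rho(V)$ for $V=c(U)$ by composing rules, and applies it to get $\rho(B,T)\le\rho(B)\le\rho(Z)$; it uses admissibility of $g$ for $\varepsilon_g\ge0$, and obtains the recovery and retained-input claims by applying the inequality in the reverse direction via a measurable recovery map, which is the same interpretation of ``retaining $B$'' you make explicit.
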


The teacher access measures the value of observing $T$ in addition to $B$, while representation loss measures the task information discarded when $B$ becomes $Z$. Model excess is the remaining gap between the trained task model and the best rule using $Z$. Through these terms, we distinguish an informative teacher from a useful student. The distinction matters because an observed teacher target may add information about the evaluation case, whereas a deterministic student prediction can only transform information already present in $B$. Such a transformation can still make learning easier for a finite model \citep{xu2020usable}. If both representations retain the complete native input, their representation loss is zero. Using Equation~\ref{eq:retained-input-risk}, we can then attribute their difference in population loss to model excess. Removing the native block does not by itself imply that relevant information has been lost, since the student output may preserve it through a replacement or residual sum. The decomposition applies to classification log-loss and discrete survival NLL with common observation weights. Reconstruction error and aggregate task metrics describe different quantities and are evaluated separately. Likewise, a model receiving only $T$ does not implement the joint reference $(B,T)$. We use these distinctions to decide which empirical comparisons can inform each term, with further details in Appendix \ref{app:transfer-risk}.

\textbf{Mean supervision for a shared regional output.} Individual teacher features contain variation that their regional mean removes. Whether this variation gives the student a distinct training signal depends on the learning objective. For our prediction setting in Equation~\ref{eq:teacher_targets}, let $q_i=f_\theta(B)_i$ be one output shared across all teacher vectors in region $i$. With this shared output, we compare squared error against the individual teacher features with squared error against their mean.

\begin{crdlemma}
\label{lem:shared-output-mean}
For region $i$, let $H_i=(h_{ij})_{j=1}^{K_i}$ contain $K_i\ge1$ fixed teacher vectors in $\mathbb R^d$; $d$ being the feature dimension. Define their mean $m_i=K_i^{-1}\sum_jh_{ij}$. Every shared prediction $q_i\in\mathbb R^d$ satisfies
\begin{equation}
\frac1{K_i}\sum_{j=1}^{K_i}\|q_i-h_{ij}\|_2^2
=\|q_i-m_i\|_2^2+\frac1{K_i}\sum_{j=1}^{K_i}\|h_{ij}-m_i\|_2^2.
\label{eq:shared_child_mean}
\end{equation}
The final term is independent of $q_i$ and contributes no predictor gradient.
\end{crdlemma}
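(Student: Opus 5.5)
The plan is to prove Equation~\ref{eq:shared_child_mean} by the standard bias--variance (parallel-axis) identity. For each $j$ we insert the regional mean and write $q_i-h_{ij}=(q_i-m_i)+(m_i-h_{ij})$. Expanding the squared Euclidean norm gives
\begin{equation*}
\|q_i-h_{ij}\|_2^2=\|q_i-m_i\|_2^2+2\langle q_i-m_i,\,m_i-h_{ij}\rangle+\|m_i-h_{ij}\|_2^2 .
\end{equation*}
Next we average over $j=1,\dots,K_i$. The first term does not depend on $j$, so its average is itself, and the last term averages to $K_i^{-1}\sum_j\|h_{ij}-m_i\|_2^2$.

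The only step needing an argument is that the cross term vanishes. Since $q_i-m_i$ does not depend on $j$, linearity of the inner product lets us pull it out. The average of the cross term then becomes $2\langle q_i-m_i,\,m_i-K_i^{-1}\sum_j h_{ij}\rangle$. By the definition of $m_i$, this is $2\langle q_i-m_i,0\rangle=0$. This establishes Equation~\ref{eq:shared_child_mean}, and the argument is valid for every $K_i\ge1$. When $K_i=1$, both sides reduce trivially because $m_i=h_{i1}$.

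For the final claim, the term $K_i^{-1}\sum_j\|h_{ij}-m_i\|_2^2$ depends only on the fixed teacher vectors $H_i$ and not on $q_i$. Since $q_i=f_\theta(B)_i$, this term is also constant in $\theta$. Its gradient with respect to $q_i$, and by the chain rule with respect to $\theta$, is therefore zero. Hence the two objectives have identical gradients in $\theta$ and identical minimizers, and the unique minimizer over $q_i$ is $m_i$.

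I expect no real obstacle. The only care required is to state that the $h_{ij}$ are treated as fixed (non-differentiated) targets, so that the constant term contributes no gradient. This is exactly the hypothesis that the teacher vectors are fixed.
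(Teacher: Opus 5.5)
Your proposal is correct and matches the paper's proof essentially step for step. The paper also writes $q-h_j=(q-m)+(m-h_j)$ and drops the cross term because $K^{-1}\sum_j(m-h_j)=0$. It then notes that both objectives give the output gradient $2(q-m)$, hence the same parameter gradient $2J_\theta^\top(q_\theta-m)$ by the chain rule, with the teacher features held fixed.
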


In our study, $K_i=16$, so uniform squared error against all sixteen features gives the same gradient as supervision from their mean. Sampling one feature uniformly preserves the expected gradient, although its variability and the resulting training trajectory may change. An objective can provide a different expected training signal by using separate outputs or inputs that depend on the teacher target, which depart from the lemma's assumptions. We therefore interpret the equivalence as a statement about supervision, leaving the task information contained in the teacher collection as a separate question. Appendix~\ref{app:shared-output-mean} gives the proof and specifies its gradient conditions.

\textbf{Direct and residual prediction under squared-error.} Once a target $t_i$ in Equation~\ref{eq:teacher_targets} is chosen, the student can predict it directly or learn a correction to the native feature. Let $x_i$ denote the native feature and $q_i$ the student prediction, with $x_i, q_i, t_i\in\mathbb R^d$ and $B$ containing $x_i$. To compare these two prediction forms, we use the conditional mean $\mu_i(B)=\mathbb E[t_i\mid B]$, which averages possible teacher targets given the available input. This expectation differs from the regional mean $m_i$ in Lemma~\ref{lem:shared-output-mean}, which averages the teacher vectors observed in one region.

\begin{crdproposition}
\label{prop:conditional-mean}
Suppose $t_i$, $x_i$ and $q_i$ have finite second moments, with $x_i$ and $q_i$ measurable from $B$. Then
\begin{equation}
\mathbb E\|t_i-q_i\|_2^2=
\mathbb E\|t_i-\mu_i(B)\|_2^2+
\mathbb E\|\mu_i(B)-q_i\|_2^2.
\label{eq:conditional_mean}
\end{equation}
Over all square integrable functions of $B$, direct prediction is minimized uniquely almost surely by $q_i=\mu_i(B)$. A residual function $r_i$ defines prediction $q_i=x_i+r_i(B)$ and has optimum $r_i(B)=\mu_i(B)-x_i$, giving the same output.
\end{crdproposition}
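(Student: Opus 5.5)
The plan is to use the standard orthogonality (Pythagorean) property of conditional expectation in $L^2$. Write $t_i-q_i=(t_i-\mu_i(B))+(\mu_i(B)-q_i)$ and expand the squared norm:
\[
\|t_i-q_i\|_2^2=\|t_i-\mu_i(B)\|_2^2+\|\mu_i(B)-q_i\|_2^2+2\langle t_i-\mu_i(B),\,\mu_i(B)-q_i\rangle .
\]
Everything is integrable. The vectors $t_i$, $q_i$ are square integrable by assumption. The conditional mean $\mu_i(B)$ is square integrable by conditional Jensen, since $\|\mathbb E[t_i\mid B]\|_2^2\le \mathbb E[\|t_i\|_2^2\mid B]$. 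The cross term is then integrable by Cauchy--Schwarz, so expectations can be taken term by term.

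The key step is showing that the cross term has zero expectation. Since $q_i$ is measurable from $B$, so is $v=\mu_i(B)-q_i$. By the tower property and pulling out what is known,
\[
\mathbb E\langle t_i-\mu_i(B),v\rangle=\mathbb E\bigl[\langle \mathbb E[t_i\mid B]-\mu_i(B),v\rangle\bigr]=0 .
\]
The step that needs the most care is "taking out what is known" when $v$ need not be bounded. I would handle it coordinatewise: each product $(t_{i,k}-\mu_{i,k})v_k$ is integrable by Cauchy--Schwarz, and for integrable products with a $B$-measurable factor the identity $\mathbb E[Xv_k]=\mathbb E[\mathbb E[X\mid B]v_k]$ holds. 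This gives Equation~\ref{eq:conditional_mean}.

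Minimization and uniqueness follow directly. The first term on the right of Equation~\ref{eq:conditional_mean} does not depend on $q_i$. The second term is nonnegative and equals zero exactly when $q_i=\mu_i(B)$ almost surely. So over square-integrable $B$-measurable $q_i$, the minimizer is $\mu_i(B)$, unique almost surely.

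For the residual form, set $q_i=x_i+r_i(B)$. Because $x_i$ is $B$-measurable with finite second moment, the map $r_i\mapsto x_i+r_i$ is a bijection of the square-integrable $B$-measurable functions onto themselves, with inverse $q_i\mapsto q_i-x_i$. Minimizing over $r_i$ is therefore the same as minimizing over $q_i$. The optimum is $r_i(B)=\mu_i(B)-x_i$, almost surely unique, and it yields the same output $\mu_i(B)$.
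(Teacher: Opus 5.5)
Your proposal is correct and follows essentially the same route as the paper's proof. Both use the split $t_i-q_i=(t_i-\mu_i(B))+(\mu_i(B)-q_i)$, kill the cross term via $\mathbb E[t_i-\mu_i(B)\mid B]=0$ with integrability from conditional Jensen and Cauchy--Schwarz, and handle the residual case through the bijection $r_i\mapsto x_i+r_i$ on square-integrable functions of $B$; your coordinatewise justification for pulling the $B$-measurable factor out of the conditional expectation is slightly more explicit than the paper's.
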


The first term in Equation~\ref{eq:conditional_mean} is target variation that the specified input $B$ cannot determine, while the second measures how far the predictor remains from the conditional mean. Although direct and residual learning use the same information and share the same unrestricted optimum, differences in capacity, initialization, and optimization can still lead to different reconstruction errors. The regional mean averages teacher features across patch locations and remains unchanged when those locations are rearranged. Spatial variation describes how the features differ across positions within the region, capturing additional structure. A teacher summary can keep some of this structure through additional spatial components. This variation across patch locations is distinct from uncertainty about a target given $B$. We apply the decomposition separately to the regional mean and each spatial component. Appendix~\ref{app:conditional-mean-proof} gives the proof.


\section{Study Design}
\label{sec:experiments}
\label{sec:comparison_definitions}
\edef\StudySavedColumnsep{\the\columnsep}
\edef\StudySavedIntextsep{\the\intextsep}
\setlength{\columnsep}{6pt}
\setlength{\intextsep}{6pt}
\begin{wrapfigure}{R}{.65\textwidth}
\setlength{\abovecaptionskip}{5pt}
\centering
\includegraphics[width=\linewidth]{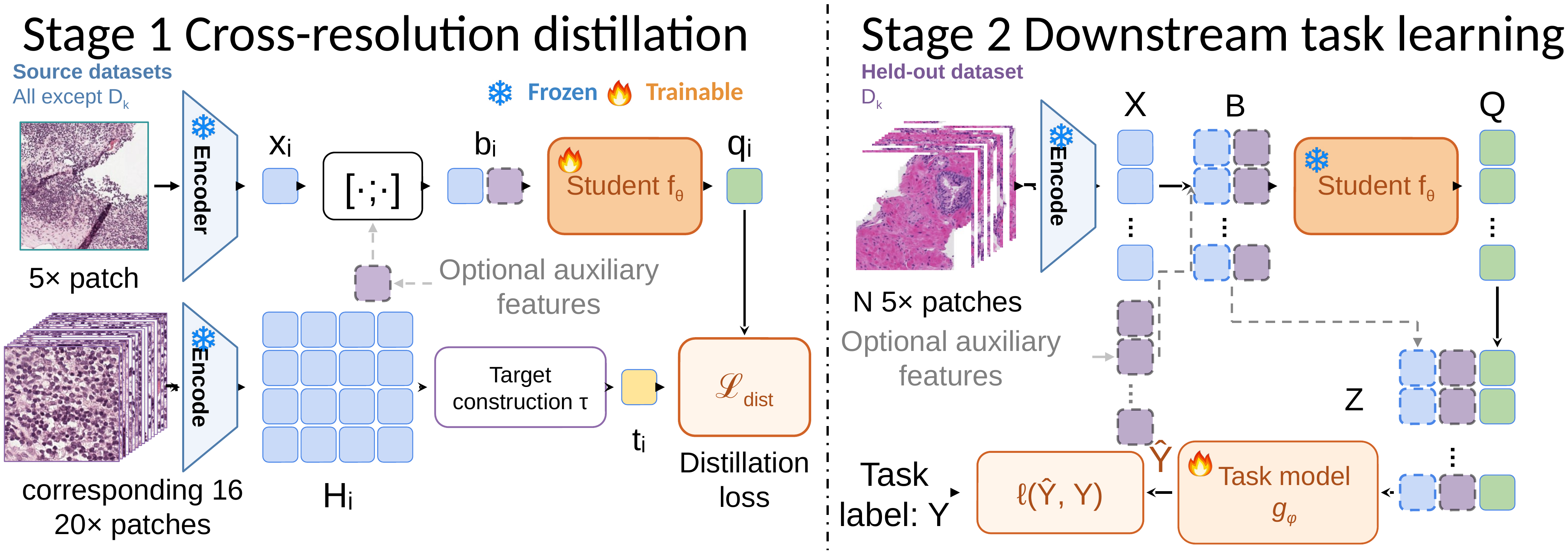}
\caption{\textbf{Cross-resolution feature prediction and evaluation pipeline.} \textbf{Stage~1} learns to predict \twenty{} teacher summaries from \five{} features on datasets without task labels. \textbf{Stage~2} evaluates the frozen predictor on an unseen dataset, using the predicted features with optionally concatenated native \five{} features for supervised slide-level learning.}
\label{fig:evaluation-protocol}
\end{wrapfigure}

Theorem~\ref{thm:transfer-risk} separates the information provided by a teacher, the representation learned by a student, and the application of that representation by a downstream task model. We design the study to identify the points where these stages no longer reinforce each other. First, we use teacher targets to define which outputs are useful for prediction. Next, we conduct reconstruction tests to evaluate whether low-magnification inputs can predict these targets more effectively than taking an average. Finally, we perform downstream comparisons to isolate the contribution of feature prediction itself and the presence of native features.

We develop experiments in two stages as shown in Figure~\ref{fig:evaluation-protocol} using CAMELYON16/17~\citep{cam16,cam17}, BRACS\citep{bracs}, PANDA\citep{panda}, and the private colorectal polyp cohort Private-CRC for classification, and TCGA-(KIRC, KIRP, LUAD, STAD, and UCEC)~\citep{NCI_TCGA} for survival analysis. We use AtlasPatch~\citep{atlaspatch} to extract non-overlapping patches of $256\times256$ pixels at \five{} and \twenty{} magnification. We choose UNI-v2~\citep{UNI} as the patch encoder for most experiments. For summary prediction, we encode the same \five{} patches with frozen DINOv3~\citep {simeoni2025dinov3}. Figure~\ref{fig:dataset-diversity}(a) summarizes the tissue and task coverage, while (b) shows how effecient the number of input token is saved using features at low-magnification. See Appendix~\ref{app:protocol} for details.

In Stage~1, we train a predictor to estimate teacher representations from \twenty{} patches using features from the corresponding \five{} regions, keeping patch encoders frozen. We use a two-layer MLP with GELU activations, which is trained on paired features excluding the dataset used for the downstream task. In Stage~2, we freeze the predictor and train the downstream model using an unseen dataset to evaluate the performance. TransMIL~\citep{TransMIL} is used as the primary downstream model. We report the AUC for CAM16, macro $F_1$ for CAM17, private-CRC and BRACS, quadratically weighted Cohen's $\kappa$ for PANDA, and the C-index for survival tasks. All experiments use the same five random seeds for dataset split, model initialization, and training. See Appendix~\ref{app:protocol} for details.

\begin{wrapfigure}{R}{.55\textwidth}
\centering
\includegraphics[width=\linewidth]
{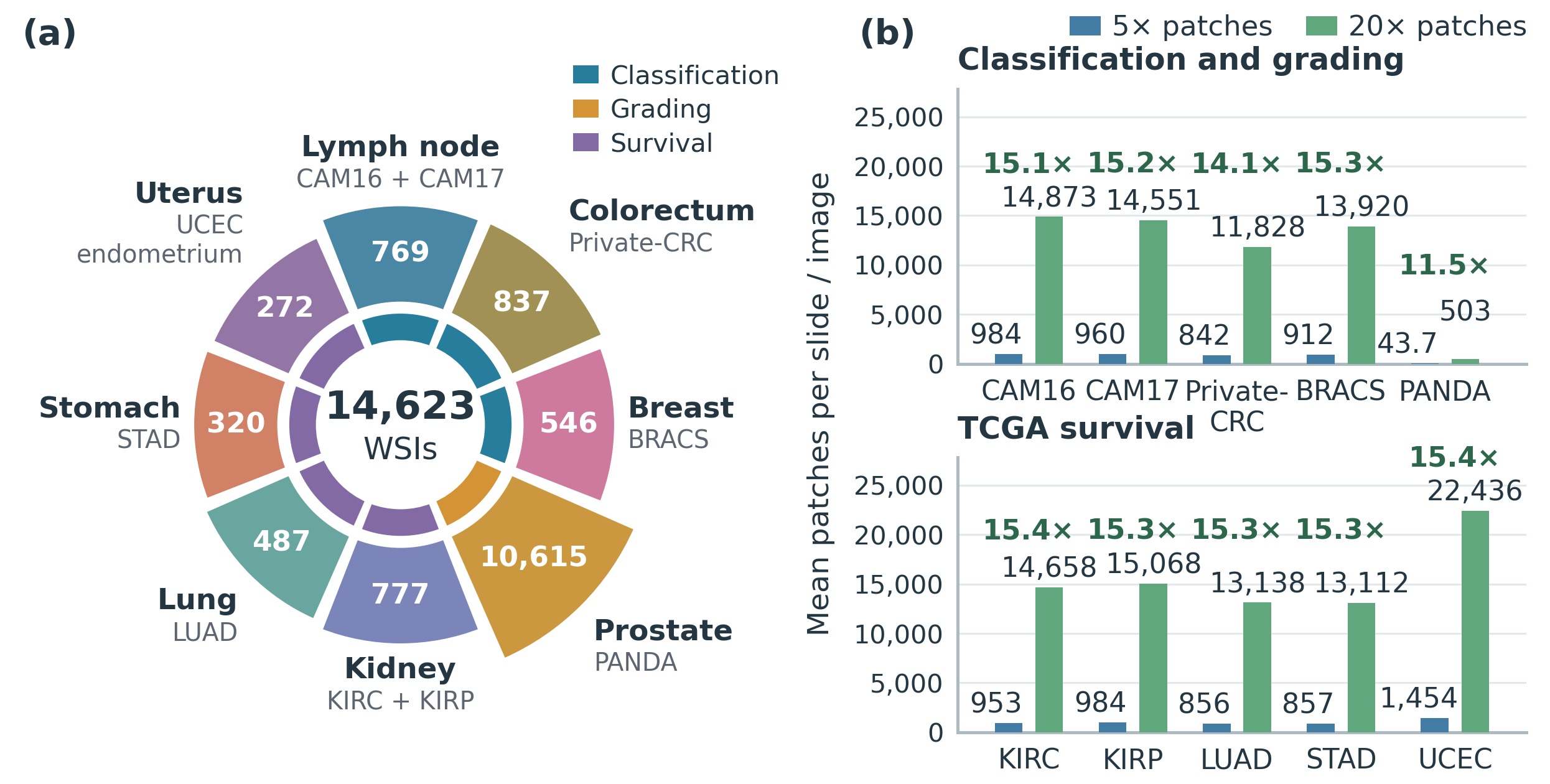}
\caption{\textbf{Data coverage and processing scale.} \textbf{(a)} We use $14,623$ slides across ten datasets covering classification, grading, and survival prediction. \textbf{(b)} We compare the average number of tokens at \five{} and \twenty{}.}
\label{fig:dataset-diversity}
\label{fig:native-patch-counts}
\end{wrapfigure}

\textbf{Evaluating teacher content separately from token count.} In Equation~\ref{eq:teacher_targets}, $\tau$ determines which teacher information is available for transfer. For each \five{} region, we average its sixteen aligned \twenty{} feature vectors, following XMAG \citep{XMAG}. We compare the resulting regional means $M$ with native \five{} features $X$, so both inputs contain one token per region. We also construct $M^{16}$ by repeating each $M$ sixteen times to compare $M^{16}$ with $M$ to test the effect of repetition without adding feature information. In addition, we compare individual \twenty{} features $H$ with $M^{16}$ to evaluate the value of the variation within the region at the same token count. To test whether a compact summary can retain useful spatial variation beyond the mean, we adapt Fourier Compressor \citep{wang2026fourier}. We apply a discrete cosine transform to the $4\times4$ grid of independently encoded \twenty{} patch features and retain four low-frequency components. The constant component is rescaled to match the regional mean, while the other three (spatial components) demonstrate how features vary across the region. (Appendix~\ref{app:spatial-components})

The teacher access term in Theorem~\ref{thm:transfer-risk} examines whether teacher summaries improve prediction when added to native \five{} features. To investigate this, we concatenate the native UNI2-h \citep{UNI} and DINOv3\citep{simeoni2025dinov3} features and then append either a teacher mean, a mean with spatial components, or zeros. We use zero padding to keep a fixed input width. This comparison evaluates whether the joint native and teacher inputs could benefit downstream tasks. (Appendix~\ref{app:joint-teacher})

\textbf{Predicting regional means and spatial variation.} For $f_\theta$ in Equation~\ref{eq:teacher_targets}, we compare two ways to predict the regional teacher mean $m_i$ from native features. Let $x_i$ be the native UNI2-h vector. Direct prediction produces $q_i=f_\theta(B)_i$, while residual prediction learns $m_i-x_i$ through a network $r_\phi$ with parameters $\phi$, giving $q_i=x_i+r_\phi(B)_i$. Both approaches use the same MLP architecture with a GELU activation \citep{hendrycks2016gelu}, taking $x_i$ and its slide mean as input. The final layer maps the hidden activations through trainable weights and a bias, producing either the direct estimate or the residual correction. Proposition~\ref{prop:conditional-mean} establishes a common unrestricted optimum under squared error. To examine how finite networks approach this optimum, we compare both prediction forms under zero and random initialization of the final layer. Zero initialization sets its weights and bias to zero, so direct prediction initially returns $q_i=0$ and residual prediction returns $q_i=x_i$. Instead, random initialization starts with an untrained direct estimate or an untrained correction to $x_i$. 

To examine whether prediction enhances both regional content and spatial variation, we train a separate MLP with GELU activation to jointly predict regional mean and spatial components. We compare this model to a constant predictor that returns the average training target for each component, regardless of input. This comparison tests whether incorporating a region's low-magnification features improves prediction beyond the training average. By reporting mean and spatial errors separately, we identify which components contribute to reconstruction progress. Our evaluation includes datasets excluded from predictor training. See Appendix~\ref{app:source-learning} for details.

\textbf{Separating native feature preservation from distillation.} We examine whether native features remain useful alongside learned teacher predictions. By keeping each prediction $q_i$, we train task models on either $[x_i;q_i]$ or $[0;q_i]$. In this scenario, only direct access to $x_i$ changes, and the predictions and their reconstruction erros remain identical. We repeat this comparison for direct and residual predictors under both final-layer initializations. Retaining the complete native bag guarantees zero representation loss in Theorem~\ref{thm:transfer-risk}. Because the residual output already includes $x_i$ in its sum, omitting a separate native block does not remove all of that information. Appendix~\ref{app:native-retention} provides details.

To examine model excess, we distinguish the benefit of distillation from the benefit of appending an untrained network's output. For summary augmentation, we retain the native bag $B=(b_i)_i$, where $b_i=[x_i;u_i]$ concatenates the native UNI2-h feature $x_i$ and DINOv3 feature $u_i$. Let $t_i$ denote the teacher summary, $q_i$ the trained predictor's output, and $q_i^I$ its output at the exact initialization used before distillation. We compare four inputs of equal width:
\begin{equation}
A_0=([b_i;0])_i,\quad A_P=([b_i;q_i])_i,\quad
A_M=([b_i;t_i])_i,\quad A_I=([b_i;q_i^I])_i.
\label{eq:summary_inputs}
\end{equation}
Each input trains a separate task model. Teacher summaries evaluate task performance with joint access to $(B,T)$, without assuming that its task model attains the optimal risk $\rho(B,T)$. To assess learned spatial outputs, we keep the trained mean constant and append trained, initial or zero spatial components. See Appendix~\ref{app:n3-joint}.

\textbf{Changing teacher supervision during encoder adaptation.} Lemma~\ref{lem:shared-output-mean} shows why averaging squared errors against individual features does not change the gradient of one shared output. To examine supervision that can distinguish these targets, we use conditional flow matching \citep{lipman2023flow}. A network conditioned on the \five{} encoder feature learns the velocity along a path from noise to a teacher target. The target determines its path input, violating the lemma's assumption of a shared output. Paths toward regional means or sampled individual features allow us to assess how teacher variation shapes the encoder. Both approaches use a common mean KD objective and flow weight .1, underusing low-rank adaptation (LoRA) with rank eight for UNI2-h \citep{hu2022lora}. Each of the three distillation partitions uses six cohorts and excludes its evaluation cohort. In Stage~2, we discard both training heads and use only adapted encoder features. With raw images as $B$ in Equation~\ref{eq:teacher_targets}, the encoder can alter both representation loss and model excess in Theorem~\ref{thm:transfer-risk}. See Appendix~\ref{app:encoder-target-law}.

\par
\setlength{\columnsep}{\StudySavedColumnsep}
\setlength{\intextsep}{\StudySavedIntextsep}

\edef\ResultsSavedColumnsep{\the\columnsep}
\edef\ResultsSavedIntextsep{\the\intextsep}
\setlength{\columnsep}{6pt}
\setlength{\intextsep}{6pt}
\newsavebox{\ResultsTableBox}

\section{Results}
\label{sec:results}

\begin{lrbox}{\ResultsTableBox}
\fontsize{7.5}{8.5}\selectfont
\setlength{\tabcolsep}{5pt}
\begin{tabular}{@{}lccc@{}}\toprule
Dataset / metric & \shortstack{Native \five{}\\$[b_i;0_{768}]$} & \shortstack{Native \five{} +\\\twenty{} mean\\$[b_i;\widetilde m_i;o_{576 }]$} & \shortstack{Native \five{} +\\\twenty{} mean, spatial\\$[b_i;\widetilde m_i;s_i]$}\\\midrule
CAM16 / AUC & $0.8455$ & \cellcolor{scorebetter}$0.8621$ & \cellcolor{scorebetter}$\mathbf{0.8932}$\\
CAM17 / $F_1$ & $0.5197$ & \cellcolor{scorebetter}$\mathbf{0.5594}$ & \cellcolor{scorebetter}$0.5198$\\
Private-CRC / $F_1$ & $0.8305$ & \cellcolor{scorebetter}$\mathbf{0.8410}$ & $0.8305$\\
PANDA / $\kappa$ & $0.8386$ & \cellcolor{scorebetter}$0.8446$ & \cellcolor{scorebetter}$\mathbf{0.8456}$\\
BRACS / $F_1$ & $0.4211$ & \cellcolor{scorebetter}$\mathbf{0.4257}$ & \cellcolor{scoreworse}$0.4094$\\
\midrule
KIRC / C-index & $0.6823$ & \cellcolor{scorebetter}$0.6892$ & \cellcolor{scorebetter}$\mathbf{0.6907}$\\
KIRP / C-index & $0.7352$ & \cellcolor{scorebetter}$\mathbf{0.7782}$ & \cellcolor{scorebetter}$0.7563$\\
LUAD / C-index & $0.5381$ & \cellcolor{scorebetter}$\mathbf{0.5795}$ & \cellcolor{scorebetter}$0.5495$\\
STAD / C-index & $0.5104$ & \cellcolor{scorebetter}$\mathbf{0.5112}$ & \cellcolor{scoreworse}$0.4947$\\
UCEC / C-index & $0.6499$ & \cellcolor{scorebetter}$\mathbf{0.6793}$ & \cellcolor{scorebetter}$0.6672$\\
\bottomrule\end{tabular}
\end{lrbox}
\begin{wraptable}[15]{R}{\wd\ResultsTableBox}
\setlength{\abovecaptionskip}{0pt}
\begingroup
\small
\caption{\textbf{Downstream performance with teacher summaries added to native \five{} features.} Inputs retain $b_i$, the projected mean $\widetilde m_i$, and spatial component $s_i$. Teacher inputs use \twenty{} features at evaluation.}
\label{tab:joint-teacher-main}
\endgroup
\usebox{\ResultsTableBox}
\end{wraptable}

Teacher information can enhance a task model, but accurate reconstruction does not guarantee that student predictions offer the same benefit. We present results in response to three questions. Section~\ref{sec:content_results} identifies which teacher targets are useful, Section~\ref{sec:transfer_results} compares how low magnification predicts them, and Section~\ref{sec:use_results} examines whether the task model benefits from those predictions.

\subsection[Is a High Magnification Target Useful?]{Is a High Magnification\\Target Useful?}\label{sec:content_results}

\textbf{Regional teacher mean supplements the student representation.} Adding the regional \twenty{} mean to native \five{} features increases the mean primary task score as shown in Table~\ref{tab:joint-teacher-main}. These results support the teacher mean as a useful addition to the low-magnification representation under this training procedure. The joint input $(B,T)$ follows the teacher access setting in Theorem~\ref{thm:transfer-risk}. The task model retains $B$ and receives an observation from the teacher, so it does not remove useful native information. A reduction in task loss can reflect both the additional teacher information and the model's ability to use the new representation. The empirical gain confirms the regional mean could be a valuable reference target for transfer.

\begin{lrbox}{\ResultsTableBox}
\fontsize{7}{9}\selectfont
\setlength{\tabcolsep}{2.3pt}
\begin{tabular}{@{}lcccc@{}}\toprule
 & \five{} input & \multicolumn{3}{c}{\twenty{} teacher construction}\\
\cmidrule(lr){3-5}
Dataset / metric & \shortstack{Native \five{}\\$X$} & \shortstack{Regional\\mean $M$} & \shortstack{Repeated\\mean $M^{16}$} & \shortstack{Individual\\\twenty{} $H$}\\
Tokens per region & 1 & 1 & 16 & 16\\\midrule
CAM16 / AUC & $0.8430$ & \cellcolor{scorebetter}$0.8993$ & \cellcolor{scorebetter}$0.8872$ & \cellcolor{scorebetter}$\mathbf{0.9047}$\\
CAM17 / $F_1$ & $0.5054$ & \cellcolor{scoreworse}$0.4856$ & \cellcolor{scoreworse}$0.4904$ & \cellcolor{scorebetter}$\mathbf{0.5418}$\\
Private-CRC / $F_1$ & $0.8273$ & \cellcolor{scoreworse}$0.8146$ & \cellcolor{scorebetter}$0.8290$ & \cellcolor{scorebetter}$\mathbf{0.8456}$\\
PANDA / $\kappa$ & $0.8535$ & \cellcolor{scorebetter}$0.8660$ & \cellcolor{scorebetter}$0.8706$ & \cellcolor{scorebetter}$\mathbf{0.8726}$\\
BRACS / $F_1$ & $0.3747$ & \cellcolor{scoreworse}$0.3731$ & \cellcolor{scorebetter}$\mathbf{0.3849}$ & \cellcolor{scorebetter}$0.3767$\\
\midrule
KIRC / C-index & $0.7127$ & \cellcolor{scoreworse}$0.7105$ & \cellcolor{scoreworse}$0.6951$ & \cellcolor{scorebetter}$\mathbf{0.7144}$\\
KIRP / C-index & $0.7793$ & \cellcolor{scorebetter}$\mathbf{0.7802}$ & \cellcolor{scoreworse}$0.7735$ & \cellcolor{scoreworse}$0.7745$\\
LUAD / C-index & $0.5494$ & \cellcolor{scorebetter}$\mathbf{0.5640}$ & \cellcolor{scorebetter}$0.5523$ & \cellcolor{scoreworse}$0.5444$\\
STAD / C-index & $0.5197$ & \cellcolor{scorebetter}$0.5402$ & \cellcolor{scorebetter}$\mathbf{0.5640}$ & \cellcolor{scorebetter}$0.5533$\\
UCEC / C-index & $0.6383$ & \cellcolor{scoreworse}$0.6158$ & \cellcolor{scorebetter}$\mathbf{0.6602}$ & \cellcolor{scorebetter}$0.6589$\\
\bottomrule\end{tabular}
\end{lrbox}
\begin{wraptable}[20]{L}{\wd\ResultsTableBox}
\setlength{\abovecaptionskip}{0pt}
\begingroup
\small
\caption{\textbf{Teacher content and token count have distinct task effects.} TransMIL receives native \five{} features, a regional \twenty{} mean, sixteen copies of that mean, or the sixteen individual features from the same tissue regions. Teacher inputs require \twenty{} images at evaluation.}
\label{tab:teacher-targets}
\endgroup
\usebox{\ResultsTableBox}
\end{wraptable}

Adding spatial components does not consistently improve on the teacher mean alone. Mean primary scores decrease in seven of the ten datasets. While additional observations cannot worsen the unrestricted optimum in Theorem~\ref{thm:transfer-risk}, the trained model's excess risk may still increase. This does not imply that spatial information is intrinsically harmful. An unrestricted predictor could ignore the extra components, whereas a trained model can generalize worse when they are supplied. Appendix~\ref{app:joint-teacher} provides further details.


\textbf{Repeating tokens changes performance without adding information.}
Table~\ref{tab:teacher-targets} shows that individual teacher features generally improve classification over native input.

However, regional means, which are the most commonly used teacher targets in prior cross-scale WSI knowledge transfer works \citep{XMAG, yang2026sigliphd, das-mil}, provide less consistent gains if not paired with native \five{} features as in Table \ref{tab:joint-teacher-main}. Repeating each mean sixteen times also changes task performance, although it adds no feature information. 

Although repeated and single means allow the same unrestricted predictions under the information argument in Theorem~\ref{thm:transfer-risk}, it provides different sequence lengths and positions to the downstream model. At the same token count, individual features $H$ do not consistently outperform repeated means $M^{16}$. Thus, neither a longer sequence nor greater within-region variation guarantees a better task representation. Frequency analysis motivates compact summaries of spatial variation, but their task effects remain mixed in Appendix~\ref{app:numbers}. Ablation on more MIL architectures is presented in Appendix~\ref{app:mil-robustness}.
\label{sec:r-content-variation}%

\begin{lrbox}{\ResultsTableBox}
\fontsize{8}{9}\selectfont
\setlength{\tabcolsep}{2.3pt}
\begin{tabular}{@{}lcccc@{}}\toprule
Dataset & \shortstack{Native \five{}\\$x_i$} & \shortstack{Constant\\$\bar m_{\mathrm{train}}$} & \shortstack{Direct\\$f_\theta(B)_i$} & \shortstack{Residual\\$x_i+r_\phi(B)_i$}\\\midrule
CAM16 & $0.1759$ & \cellcolor{scorebetter}$0.0875$ & \cellcolor{scorebetter}$\mathbf{0.0667}$ & \cellcolor{scorebetter}$0.0935$\\
CAM17 & $0.1939$ & \cellcolor{scorebetter}$0.0912$ & \cellcolor{scorebetter}$\mathbf{0.0776}$ & \cellcolor{scorebetter}$0.1064$\\
Private-CRC & $0.1876$ & \cellcolor{scorebetter}$\mathbf{0.1051}$ & \cellcolor{scorebetter}$0.1331$ & \cellcolor{scorebetter}$0.1675$\\
PANDA & $0.1554$ & \cellcolor{scorebetter}$\mathbf{0.0837}$ & \cellcolor{scorebetter}$0.0988$ & \cellcolor{scorebetter}$0.1231$\\
BRACS & $0.1882$ & \cellcolor{scorebetter}$0.1099$ & \cellcolor{scorebetter}$\mathbf{0.0936}$ & \cellcolor{scorebetter}$0.1102$\\
\midrule
KIRC & $0.1697$ & \cellcolor{scorebetter}$0.1218$ & \cellcolor{scorebetter}$\mathbf{0.1084}$ & \cellcolor{scorebetter}$0.1182$\\
KIRP & $0.1549$ & \cellcolor{scorebetter}$0.1099$ & \cellcolor{scorebetter}$\mathbf{0.0968}$ & \cellcolor{scorebetter}$0.1040$\\
LUAD & $0.2074$ & \cellcolor{scorebetter}$0.0954$ & \cellcolor{scorebetter}$\mathbf{0.0953}$ & \cellcolor{scorebetter}$0.1333$\\
STAD & $0.1828$ & \cellcolor{scorebetter}$0.0927$ & \cellcolor{scorebetter}$\mathbf{0.0866}$ & \cellcolor{scorebetter}$0.1080$\\
UCEC & $0.1857$ & \cellcolor{scorebetter}$\mathbf{0.1118}$ & \cellcolor{scorebetter}$0.1183$ & \cellcolor{scorebetter}$0.1261$\\
\bottomrule\end{tabular}
\end{lrbox}
\begin{wraptable}[19]{R}{\wd\ResultsTableBox}
\setlength{\abovecaptionskip}{0pt}
\begingroup
\small
\caption{\textbf{Direct and residual learning of the regional teacher mean.} Each column estimates the target $m_i$, and native uses the unchanged \five{} feature $x_i$. The constant $\bar m_{\mathrm{train}}$ is the target mean estimated only from distillation training groups. Direct and residual predictions use the networks defined in Section 4, with zero initialization of their final layers. }
\label{tab:regional-mean-reconstruction}
\endgroup
\usebox{\ResultsTableBox}
\end{wraptable}

Next, we evaluate how well the student predicts teacher targets from \five{} inputs. We compare direct and residual prediction, and examine prediction of the mean and spatial components separately.

\subsection{Can Low-Mag Predict the Target?}
\label{sec:transfer_results}

\textbf{Direct prediction reduces the gap to the target.} Direct prediction has lower MSE than residual prediction on every dataset in Table~\ref{tab:regional-mean-reconstruction}. The same ordering also holds under random final-layer initialization in Appendix~\ref{app:raw-test-prediction}. Although the two forms share the same optimum over unrestricted functions (Proposition~\ref{prop:conditional-mean}), their finite networks differ in parameterization and optimization. A residual connection does not guarantee better reconstruction of the teacher target. A lower error than the native feature is not sufficient evidence of success on the downstream task. The training target mean remains more accurate on Private-CRC, PANDA and UCEC, with nearly equal average error on LUAD. By contrast, all four predictors improve on the constant on separate evaluation groups from the nine datasets used for Stage~1 training.

\begin{wrapfigure}[31]{L}{.48\textwidth}
\setlength{\abovecaptionskip}{5pt}
\centering
\includegraphics[width=\linewidth]{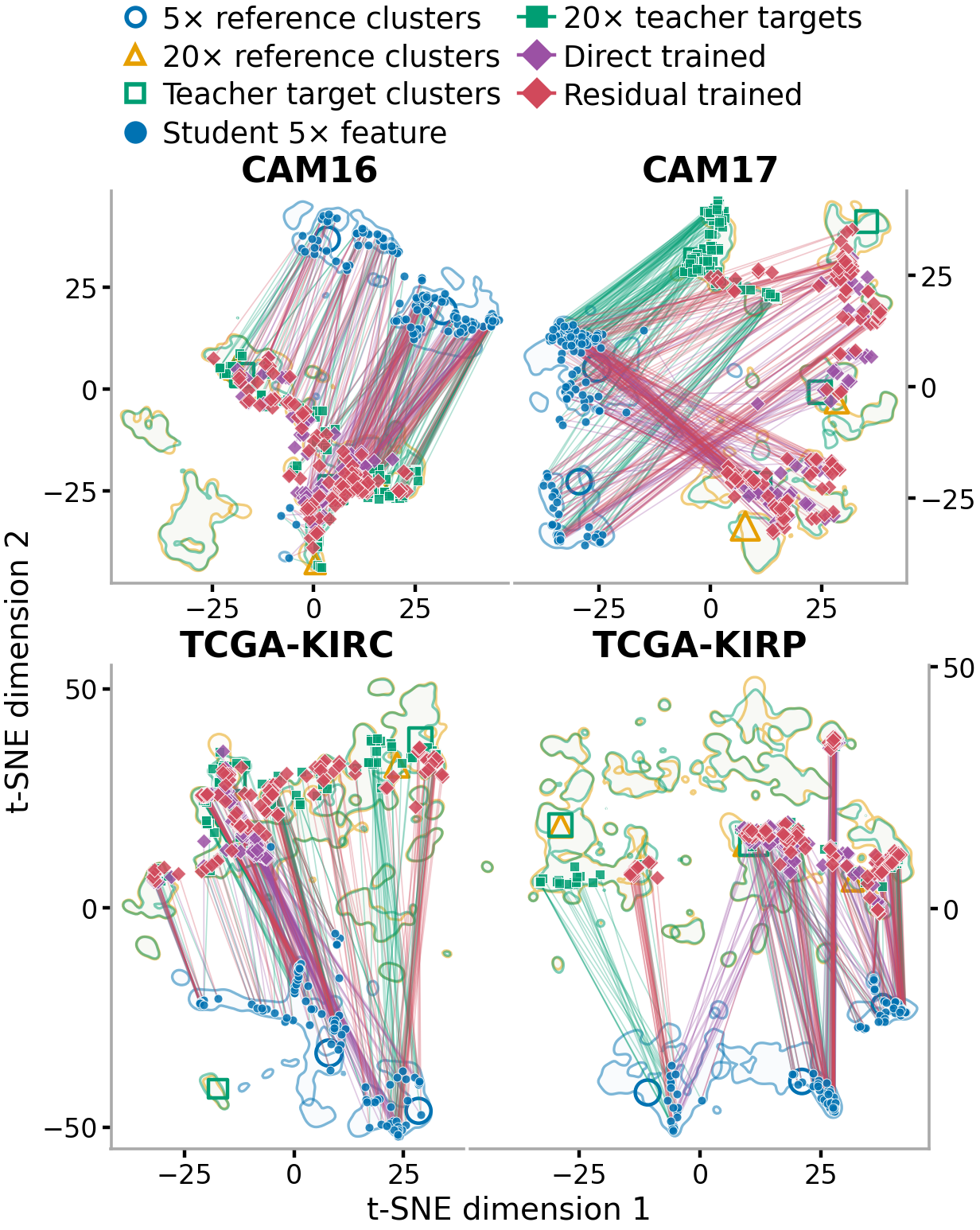}
\caption{\textbf{Regional correspondence across four datasets.} Both prediction forms are shown for one test slide per dataset. Teacher targets are regional means. Lines connect native inputs to their targets and trained outputs. Each dataset has independent t-SNE coordinates. Appendix~\ref{app:teacher-feature-maps} gives sampling details, original-coordinate measurements and the remaining six datasets.}
\label{fig:teacher-correspondence-main}
\end{wrapfigure}

Figure~\ref{fig:teacher-correspondence-main} shows different prediction patterns across four test slides. We can see that student outputs can overlap high-magnification feature groups without reproducing their corresponding regional teacher targets, while predictions do not cover the diversity of the teacher targets, showing that some information is not predictable directly from the student. The teacher means overlap the individual \twenty{} reference distribution, while student predictions cover these target groups unevenly. Both prediction forms overlap teacher features while retaining errors for their matched regional targets.

In the KIRC and KIRP slides, residual outputs extend into teacher neighbourhoods covered less extensively by direct predictions. Nevertheless, direct prediction has lower reconstruction error on all four slides in the original coordinates (Table~\ref{tab:slide-feature-errors}). Visual coverage and accuracy against corresponding regional targets describe different aspects. The quantitative analysis in Appendix~\ref{app:feature-distribution} measures variation in the original coordinates across five realizations. Direct outputs have lower centered variance than teacher means, while residual outputs have higher variance in every dataset. For both forms, distances between predicted features remain positively correlated with distances between their teacher targets. These measurements distinguish the spread of predictions from their accuracy against matched targets. Proposition~\ref{prop:conditional-mean} explains why squared error prediction need not reproduce the full variation of observed targets. Its optimum averages possible targets given the available input, rather than reproducing every possible outcome. The two prediction forms share this unrestricted optimum, but their finite architectures differ. 


\begin{wrapfigure}[16]{R}{.30\textwidth}
\setlength{\abovecaptionskip}{5pt}
\centering
\includegraphics[width=\linewidth]{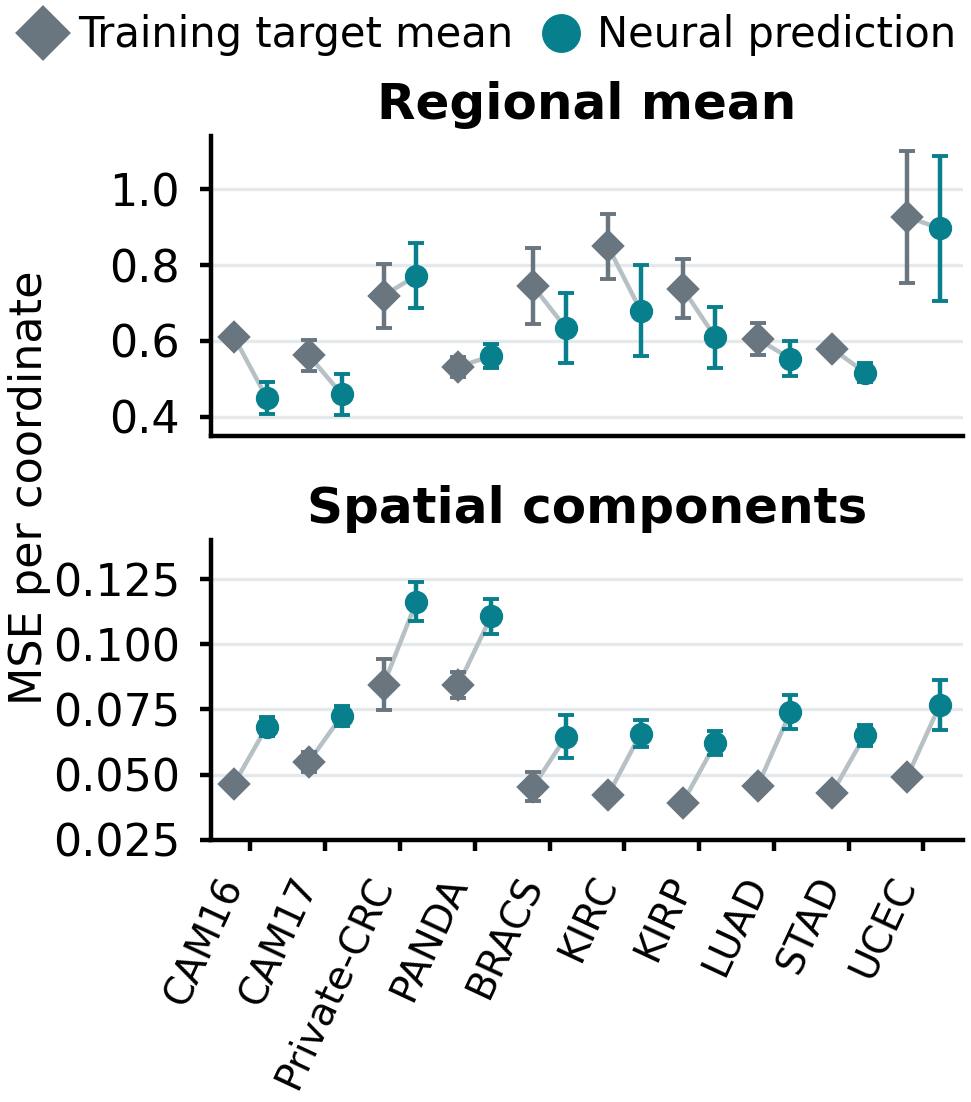}
\caption{\textbf{Reconstruction gains in the regional mean.} The network is compared with the training target, which returns a constant for inputs. }
\label{fig:component-prediction}
\end{wrapfigure}

\textbf{Reconstruction progress concentrates in the regional mean.} A separate predictor estimates regional mean and spatial components. Figure~\ref{fig:component-prediction} compares its MSE with a constant predictor that returns the average training target for each component. The network usually improves on the constant for the regional mean, but the spatial components remain less accurate than their constant baselines. This pattern also occurs on independent evaluation groups from the nine distillation datasets, not only on the excluded dataset. Thus, improvements in overall reconstruction error may primarily reflect better estimation of the regional mean, while spatial prediction remains limited.



Proposition~\ref{prop:conditional-mean} explains why the two components need separate interpretation. Error may persist as the input does not determine the target, or the network has not learned its conditional mean. The spatial results reveal a distinct learning challenge that should be measured directly rather than inferred from total error. Appendices~\ref{app:neural-correction} and~\ref{app:input-access-all} examine correction and spatial inputs, while the downstream spatial study uses separate heads retaining all feature channels.

\subsection[Does Downstream Model Benefit from Prediction?]{Does Downstream Model Benefit from Prediction?}
\label{sec:use_results}
\label{sec:r-transfer-utility}

Finally, we evaluate whether a useful prediction leads to a useful task representation. We first keep the prediction unchanged and vary native retention, then separate the effects of distillation from those of adding a network transformation. Encoder adaptation further compares how the teacher target influences the deployed representation.


\textbf{Native features can help when teacher predictions are unchanged.} Figure~\ref{fig:native-retention} shows different task outcomes for the same teacher prediction. Providing native features separately gives more consistent gains in primary scores for direct prediction, while NLL often improves for both direct and residual prediction. Reconstruction accuracy remains identical within comparisons. The task difference stems from how the model receives the prediction and native information, not from a better teacher estimate. 
\begin{wrapfigure}[20]{L}{.43\textwidth}
\setlength{\abovecaptionskip}{5pt}
\centering
\includegraphics[width=\linewidth]{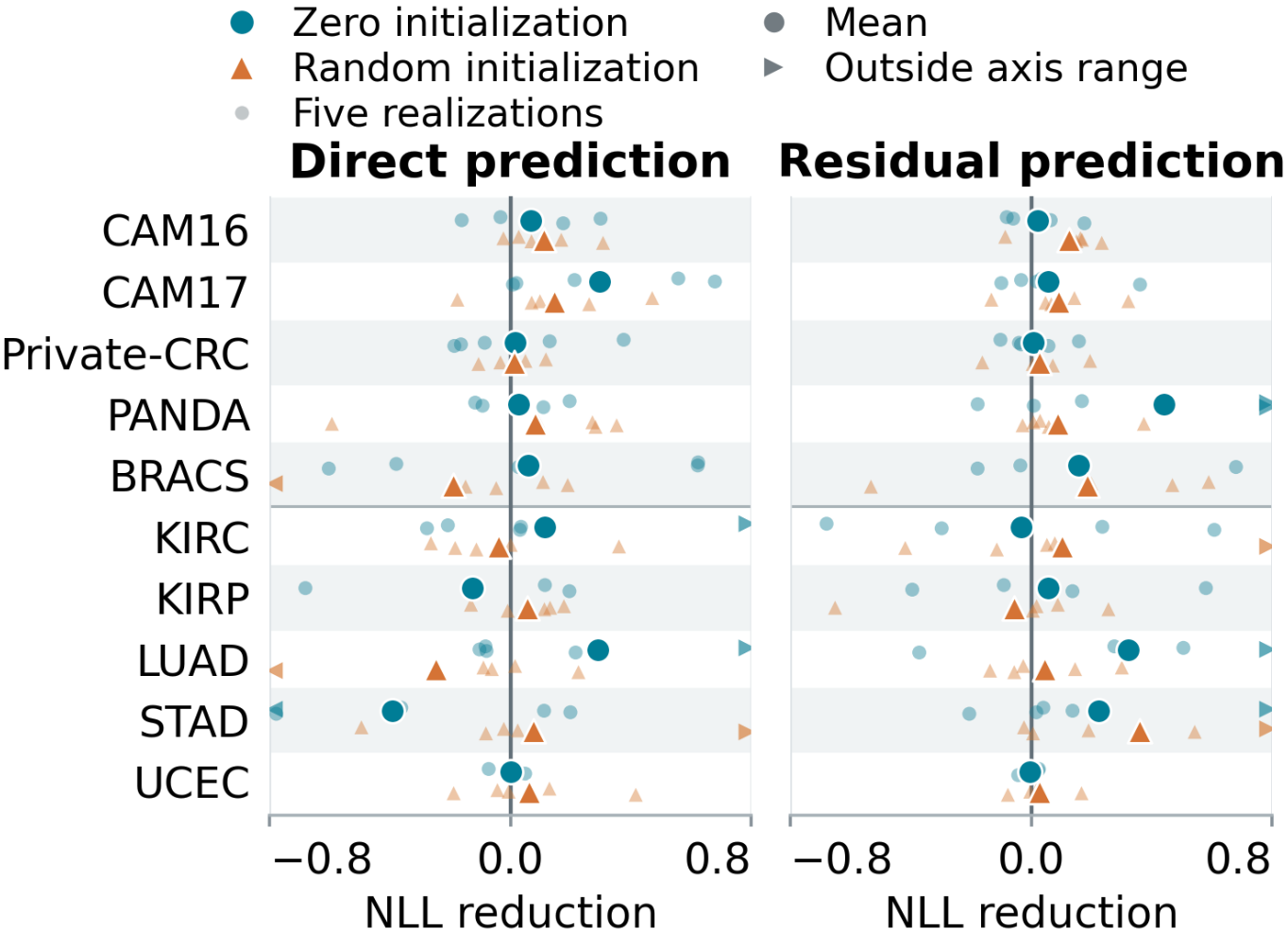}
\caption{\textbf{Retaining native features can reduce task loss without changing teacher predictions.} NLL for $[0;q_i]$ minus NLL for $[x_i;q_i]$ is positive when retention helps. Small marks show five corresponding realization differences and large marks their means. Colours distinguish final-layer initialization before distillation.}
\label{fig:native-retention}
\end{wrapfigure}

The retained input $[x_i;q_i]$ (Section 4) allows recovery of the complete native bag, resulting in zero representation loss under Theorem~\ref{thm:transfer-risk}. Removing the separate native block does not establish that all native information is lost, since some may remain in $q_i$, including through the residual connection. The task difference can therefore reflect both information preservation and how effectively the trained model uses its input. It identifies native feature retention as a consequential design choice without attributing its effect entirely to representation loss.

\begin{lrbox}{\ResultsTableBox}
\fontsize{8}{9}\selectfont
\setlength{\tabcolsep}{3pt}
\begin{tabular}{@{}lccc@{}}\toprule
 & & \multicolumn{2}{c}{Mean KD + flow toward}\\
\cmidrule(lr){3-4}
Dataset / metric & \shortstack{Native \five{}\\Frozen UNI2-h} & \shortstack{Regional\\\twenty{} mean} & \shortstack{Individual\\\twenty{} feature}\\\midrule
CAM16 / AUC & $0.8808$ & \cellcolor{scorebetter}$\mathbf{0.8827}$ & \cellcolor{scoreworse}$0.8749$\\
CAM17 / $F_1$ & $0.5025$ & \cellcolor{scorebetter}$0.5077$ & \cellcolor{scorebetter}$\mathbf{0.5211}$\\
Private-CRC / $F_1$ & $0.8411$ & \cellcolor{scoreworse}$0.8225$ & \cellcolor{scorebetter}$\mathbf{0.8492}$\\
PANDA / $\kappa$ & $\mathbf{0.8499}$ & \cellcolor{scoreworse}$0.8496$ & \cellcolor{scoreworse}$0.8495$\\
BRACS / $F_1$ & $\mathbf{0.4589}$ & \cellcolor{scoreworse}$0.4204$ & \cellcolor{scoreworse}$0.4347$\\
\midrule
KIRC / C-index & $0.7321$ & \cellcolor{scorebetter}$\mathbf{0.7378}$ & \cellcolor{scoreworse}$0.7069$\\
KIRP / C-index & $\mathbf{0.7921}$ & \cellcolor{scoreworse}$0.6861$ & \cellcolor{scoreworse}$0.7744$\\
LUAD / C-index & $0.5307$ & \cellcolor{scorebetter}$0.5537$ & \cellcolor{scorebetter}$\mathbf{0.5783}$\\
STAD / C-index & $0.5532$ & \cellcolor{scorebetter}$0.5912$ & \cellcolor{scorebetter}$\mathbf{0.5953}$\\
UCEC / C-index & $\mathbf{0.6561}$ & \cellcolor{scoreworse}$0.5924$ & \cellcolor{scoreworse}$0.5976$\\
\bottomrule\end{tabular}
\end{lrbox}
\begin{wraptable}[15]{R}{\wd\ResultsTableBox}
\setlength{\abovecaptionskip}{0pt}
\begingroup
\footnotesize
\caption{\textbf{The auxiliary teacher target changes encoder performance.} Both flow models retain mean KD and supply only encoder features to TransMIL. Table~\ref{tab:encoder-target-absolute} includes mean KD without flow.}
\label{tab:encoder-target-main}
\endgroup
\usebox{\ResultsTableBox}
\end{wraptable}

\textbf{The auxiliary teacher target changes downstream encoder performance.} Table~\ref{tab:encoder-target-main} shows that using individual teacher features or their regional mean as targets leads to distinct downstream results, even when the encoder adaptation procedure remains unchanged.

Lemma~\ref{lem:shared-output-mean} demonstrates that the gradients of these targets are equivalent only when a squared error loss supervises a single shared output. Under the flow objective, the choice of teacher target also determines the network's path input, allowing it to change the training signal. Both models keep the same mean KD objective and deploy only encoder features, as Section~\ref{sec:experiments} specifies.

This experiment uses raw images as $B$ in Equation~\ref{eq:teacher_targets}. The adapted encoder need not preserve the complete image, so both representation loss and model excess can change. Unlike native augmentation, these results cannot attribute the task effect solely to model excess. They show that the supervision objective determines whether individual teacher variation supplies a distinct learning signal, and that its task value must be evaluated for the resulting encoder. Supporting comparisons in Appendices~\ref{app:spatial-removal} and~\ref{app:all-cohort-fidelity} examine prediction delivery without adapting the encoder.

\par
\setlength{\columnsep}{\ResultsSavedColumnsep}
\setlength{\intextsep}{\ResultsSavedIntextsep}

\FloatBarrier
\section{Discussion and Conclusion}
Across ten cohorts, observed regional teacher means improve average primary scores when added to native features, but student predictions do not inherit that value automatically. In separate reconstruction tests, direct prediction has lower average error than residual prediction while its outputs vary less than teacher targets. Better reconstruction does not consistently improve task scores. Native retention changes performance at fixed predictions. Keeping the native input removes representation loss, but finite models can still differ in how they use it. Encoder adaptation also reveals no consistently preferred auxiliary teacher target. For future methods, we recommend separate tests of teacher usefulness, prediction beyond a constant estimated from training targets and downstream benefit.

The study covers one magnification pair and selected model families. Target coordinates, sampled regions and input widths differ across experiments, limiting comparisons between them. Incomplete patient linkage and sparse survival events further constrain inference. Broader studies could vary these design choices within a common protocol and examine which cohorts and cases benefit.

\clearpage
\typeout{ICLR-MAIN-TEXT-PAGES=\number\numexpr\value{page}-1\relax}
\subsection*{AI use statement}
The authors used Codex to assist in manuscript polishing, grammar checks, code implementations, and math proof reviews. The authors are responsible for the experiments, citations, mathematical arguments and interpretations.


\subsection*{Reproducibility statement}
Sections~\ref{sec:method} and~\ref{sec:experiments} define the models, training stages and comparisons. The appendices specify the protocols, seed roles, evaluation populations and uncertainty estimates. The supplementary materials include our code, numerical exports and scripts for reproduction. Appendix~\ref{app:artifacts} states what is available and what is still needed to reproduce all model fits and intervals.

\subsection*{Acknowledgment}
This work was supported by NSERC Discovery Grant RGPIN-2022-05378 [M.S.H.], Gina Cody RIF funding [M.S.H.], and FRQNT scholarship \href{https://doi.org/10.69777/2004461}{2004461} [J.C.]. We thank the \href{https://alliancecan.ca/}{Digital Research Alliance of Canada} for the computing resources used in this study.

\bibliography{references}

@article{cpath,
title = {Computational pathology: A survey review and the way forward},
journal = {Journal of Pathology Informatics},
volume = {15},
pages = {100357},
year = {2024},
issn = {2153-3539},
doi = {10.1016/j.jpi.2023.100357},
url = {https://www.sciencedirect.com/science/article/pii/S2153353923001712},
author = {Mahdi S. Hosseini and Babak Ehteshami Bejnordi and Vincent Quoc-Huy Trinh and Lyndon Chan and Danial Hasan and Xingwen Li and Stephen Yang and Taehyo Kim and Haochen Zhang and Theodore Wu and Kajanan Chinniah and Sina Maghsoudlou and Ryan Zhang and Jiadai Zhu and Samir Khaki and Andrei Buin and Fatemeh Chaji and Ala Salehi and Bich Ngoc Nguyen and Dimitris Samaras and Konstantinos N. Plataniotis},
}

@article{UNI,
  author  = {Richard J. Chen and Tong Ding and Ming Y. Lu and Drew F. K. Williamson and Guillaume Jaume and Andrew H. Song and Bowen Chen and Andrew Zhang and Daniel Shao and Muhammad Shaban and Mane Williams and Lukas Oldenburg and Luca L. Weishaupt and Judy J. Wang and Anurag Vaidya and Long Phi Le and Georg Gerber and Sharifa Sahai and Walt Williams and Faisal Mahmood},
  title   = {Towards a general-purpose foundation model for computational pathology},
  journal = {Nature Medicine},
  volume  = {30},
  number  = {3},
  pages   = {850--862},
  year    = {2024},
  doi     = {10.1038/s41591-024-02857-3},
  url     = {https://doi.org/10.1038/s41591-024-02857-3}
}

@inproceedings{TransMIL,
  author    = {Zhuchen Shao and Hao Bian and Yang Chen and Yifeng Wang and Jian Zhang and Xiangyang Ji and Yongbing Zhang},
  title     = {{TransMIL}: Transformer Based Correlated Multiple Instance Learning for Whole Slide Image Classification},
  booktitle = {Advances in Neural Information Processing Systems},
  volume    = {34},
  pages     = {2136--2147},
  year      = {2021},
  url       = {https://proceedings.neurips.cc/paper/2021/hash/10c272d06794d3e5785d5e7c5356e9ff-Abstract.html}
}

@article{RDPath,
  author  = {Joseph DiPalma and Arief A. Suriawinata and Laura J. Tafe and Lorenzo Torresani and Saeed Hassanpour},
  title   = {Resolution-based distillation for efficient histology image classification},
  journal = {Artificial Intelligence in Medicine},
  volume  = {119},
  pages   = {102136},
  year    = {2021},
  doi     = {10.1016/j.artmed.2021.102136},
  url     = {https://doi.org/10.1016/j.artmed.2021.102136}
}

@inproceedings{InterRKD,
  author    = {Roc{\'i}o del Amor and Julio Silva-Rodr{\'i}guez and Adri{\'a}n Colomer and Valery Naranjo},
  title     = {Attention to Detail: Inter-Resolution Knowledge Distillation},
  booktitle = {2023 31st European Signal Processing Conference (EUSIPCO)},
  pages     = {985--989},
  year      = {2023},
  doi       = {10.23919/EUSIPCO58844.2023.10289941},
  url       = {https://doi.org/10.23919/EUSIPCO58844.2023.10289941}
}

@article{MagAlign,
  author  = {Chu Han and Bingchao Zhao and Tianpeng Deng and Jingqi Huang and Fangfang Liu and Jiatai Lin and Shanshan Lyu and Longfei Wang and Cheng Lu and Changhong Liang and Hannah Y. Wen and Zhenwei Shi and Xiaojing Guo and Zaiyi Liu},
  title   = {A Magnification Alignment Framework Enables Computation- and Communication-Efficient Computational Pathology},
  journal = {Cancer Research},
  volume  = {86},
  number  = {12},
  pages   = {3060--3073},
  year    = {2026},
  doi     = {10.1158/0008-5472.CAN-25-3255},
  url     = {https://doi.org/10.1158/0008-5472.CAN-25-3255}
}

@article{XMAG,
  author  = {Ziyu Su and Abdul Rehman Akbar and Usama Sajjad and Anil V. Parwani and Muhammad Khalid Khan Niazi},
  title   = {Streamline pathology foundation model by cross-magnification distillation},
  journal = {arXiv preprint arXiv:2509.23097},
  year    = {2025},
  url     = {https://arxiv.org/abs/2509.23097}
}

@InProceedings{LRMIL,
        author = { Shin, Yonghan AND Jeong, Won-Ki},
        title = { { LRMIL: Efficient Low-Resolution Multiple Instance Learning via High-Resolution Knowledge Distillation for Whole Slide Image Classification } },
        booktitle = {Medical Image Computing and Computer Assisted Intervention -- MICCAI 2026},
        year = {2026},
        publisher = {Springer Nature Switzerland},
        volume = {LNCS 16891},
        month = {September},
        page = {pending}
}

@article{ResErrKD,
  author  = {Mengya Gao and Yujun Wang and Liang Wan},
  title   = {Residual error based knowledge distillation},
  journal = {Neurocomputing},
  volume  = {433},
  pages   = {154--161},
  year    = {2021},
  doi     = {10.1016/j.neucom.2020.10.113},
  url     = {https://doi.org/10.1016/j.neucom.2020.10.113}
}

@inproceedings{rahimi2007random,
 author={Ali Rahimi and Benjamin Recht},
 title={Random Features for Large-Scale Kernel Machines},
 booktitle={Advances in Neural Information Processing Systems}, volume={20}, year={2007},
 url={https://proceedings.neurips.cc/paper/2007/hash/013a006f03dbc5392effeb8f18fda755-Abstract.html}
}

@article{simeoni2025dinov3,
 author={Sim{\'e}oni, Oriane and Vo, Huy V. and Seitzer, Maximilian and Baldassarre, Federico and Oquab, Maxime and Jose, Cijo and Khalidov, Vasil and Szafraniec, Marc and Yi, Seungeun and Ramamonjisoa, Micha{\"e}l and Massa, Francisco and Haziza, Daniel and Wehrstedt, Luca and Wang, Jianyuan and Darcet, Timoth{\'e}e and Moutakanni, Th{\'e}o and Sentana, Leonel and Roberts, Claire and Vedaldi, Andrea and Tolan, Jamie and Brandt, John and Couprie, Camille and Mairal, Julien and J{\'e}gou, Herv{\'e} and Labatut, Patrick and Bojanowski, Piotr},
 title={{DINOv3}}, journal={arXiv preprint arXiv:2508.10104}, year={2025},
 url={https://arxiv.org/abs/2508.10104}
}

@inproceedings{wang2026fourier,
 author={Wang, Huanyu and Kai, Jushi and Bai, Haoli and Hou, Lu and Jiang, Bo and He, Ziwei and Lin, Zhouhan},
 editor={Favaro, Paolo and Kukelova, Zuzana and Maki, Atsuto and Rohrbach, Anna and Schindler, Konrad and Tombari, Federico},
 title={{Fourier Compressor}: Frequency-Domain Visual Token Compression for Vision-Language Models},
 booktitle={Computer Vision -- ECCV 2026}, year={2026},
 series={Lecture Notes in Computer Science}, volume={17009},
 pages={284--300}, publisher={Springer Nature Switzerland}, address={Cham},
 isbn={978-3-032-37035-8},
 doi={10.1007/978-3-032-37035-8_16},
 url={https://doi.org/10.1007/978-3-032-37035-8_16}
}

@inproceedings{qin2021fcanet,
 author={Zequn Qin and Pengyi Zhang and Fei Wu and Xi Li},
 title={{FcaNet}: Frequency Channel Attention Networks},
 booktitle={Proceedings of the IEEE/CVF International Conference on Computer Vision},
 year={2021}, pages={783--792}, url={https://openaccess.thecvf.com/content/ICCV2021/html/Qin_FcaNet_Frequency_Channel_Attention_Networks_ICCV_2021_paper.html}
}

@inproceedings{stanton2021does,
  author = {Samuel Stanton and Pavel Izmailov and Polina Kirichenko and Alexander A. Alemi and Andrew Gordon Wilson},
  title = {Does Knowledge Distillation Really Work?},
  booktitle = {Advances in Neural Information Processing Systems},
  volume = {34},
  pages = {6906--6919},
  year = {2021},
  url = {https://proceedings.neurips.cc/paper/2021/hash/376c6b9ff3bedbbea56751a84fffc10c-Abstract.html}
}

@inproceedings{xu2020usable,
  author = {Yilun Xu and Shengjia Zhao and Jiaming Song and Russell Stewart and Stefano Ermon},
  title = {A Theory of Usable Information Under Computational Constraints},
  booktitle = {International Conference on Learning Representations},
  year = {2020},
  url = {https://openreview.net/forum?id=r1eBeyHFDH}
}

@inproceedings{ilse2018attention,
  author = {Maximilian Ilse and Jakub Tomczak and Max Welling},
  title = {Attention-based Deep Multiple Instance Learning},
  booktitle = {Proceedings of the 35th International Conference on Machine Learning},
  series = {Proceedings of Machine Learning Research},
  volume = {80}, pages = {2127--2136}, year = {2018},
  url = {https://proceedings.mlr.press/v80/ilse18a.html}
}

@inproceedings{tang2024feature,
  author = {Wenhao Tang and Fengtao Zhou and Sheng Huang and Xiang Zhu and Yi Zhang and Bo Liu},
  title = {Feature Re-Embedding: Towards Foundation Model-Level Performance in Computational Pathology},
  booktitle = {Proceedings of the IEEE/CVF Conference on Computer Vision and Pattern Recognition},
  pages = {11343--11352}, year = {2024},
  url = {https://openaccess.thecvf.com/content/CVPR2024/html/Tang_Feature_Re-Embedding_Towards_Foundation_Model-Level_Performance_in_Computational_Pathology_CVPR_2024_paper.html}
}

@inproceedings{li2021dsmil,
  author = {Bin Li and Yin Li and Kevin W. Eliceiri},
  title = {Dual-Stream Multiple Instance Learning Network for Whole Slide Image Classification With Self-Supervised Contrastive Learning},
  booktitle = {Proceedings of the IEEE/CVF Conference on Computer Vision and Pattern Recognition},
  pages = {14318--14328}, year = {2021},
  url = {https://openaccess.thecvf.com/content/CVPR2021/html/Li_Dual-Stream_Multiple_Instance_Learning_Network_for_Whole_Slide_Image_Classification_CVPR_2021_paper.html}
}

@article{xu2024gigapath,
  author = {Xu, Hanwen and Usuyama, Naoto and Bagga, Jaspreet and Zhang, Sheng and Rao, Rajesh and Naumann, Tristan and Wong, Cliff and Gero, Zelalem and Gonz{\'a}lez, Javier and Gu, Yu and Xu, Yanbo and Wei, Mu and Wang, Wenhui and Ma, Shuming and Wei, Furu and Yang, Jianwei and Li, Chunyuan and Gao, Jianfeng and Rosemon, Jaylen and Bower, Tucker and Lee, Soohee and Weerasinghe, Roshanthi and Wright, Bill J. and Robicsek, Ari and Piening, Brian and Bifulco, Carlo and Wang, Sheng and Poon, Hoifung},
  title = {A whole-slide foundation model for digital pathology from real-world data},
  journal = {Nature}, volume = {630}, number = {8015}, pages = {181--188}, year = {2024},
  doi = {10.1038/s41586-024-07441-w},
  url = {https://doi.org/10.1038/s41586-024-07441-w}
}

@article{ding2025titan,
  author = {Ding, Tong and Wagner, Sophia J. and Song, Andrew H. and Chen, Richard J. and Lu, Ming Y. and Zhang, Andrew and Vaidya, Anurag J. and Jaume, Guillaume and Shaban, Muhammad and Kim, Ahrong and others},
  title = {A multimodal whole-slide foundation model for pathology},
  journal = {Nature Medicine}, volume = {31}, pages = {3749--3761}, year = {2025},
  doi = {10.1038/s41591-025-03982-3},
  url = {https://doi.org/10.1038/s41591-025-03982-3}
}

@article{vorontsov2026prism2,
  author = {Eugene Vorontsov and George Shaikovski and Adam Casson and Julian Viret and Eric Zimmermann and others},
  title = {End-to-end multimodal pathology foundation model with clinical dialogue},
  journal = {Nature Medicine}, volume = {32}, number = {9}, pages = {3203--3213}, year = {2026},
  doi = {10.1038/s41591-026-04521-4},
  url = {https://doi.org/10.1038/s41591-026-04521-4}
}

@article{cam16,
    author = {Ehteshami Bejnordi, Babak and Veta, Mitko and van Diest, Paul J. and van Ginneken, Bram and Karssemeijer, Nico and Litjens, Geert and van der Laak, Jeroen A. W. M. and {the CAMELYON16 Consortium}},
    title = {Diagnostic Assessment of Deep Learning Algorithms for Detection of Lymph Node Metastases in Women With Breast Cancer},
    journal = {JAMA},
    volume = {318},
    number = {22},
    pages = {2199--2210},
    year = {2017},
    month = {12},
    issn = {0098-7484},
    doi = {10.1001/jama.2017.14585},
    url = {https://doi.org/10.1001/jama.2017.14585},
    eprint = {https://jamanetwork.com/journals/jama/articlepdf/2665774/jama_ehteshami_bejnordi_2017_oi_170113.pdf},
}

@ARTICLE{cam17,
  author={Bándi, Péter and Geessink, Oscar and Manson, Quirine and Van Dijk, Marcory and Balkenhol, Maschenka and Hermsen, Meyke and Ehteshami Bejnordi, Babak and Lee, Byungjae and Paeng, Kyunghyun and Zhong, Aoxiao and Li, Quanzheng and Zanjani, Farhad Ghazvinian and Zinger, Svitlana and Fukuta, Keisuke and Komura, Daisuke and Ovtcharov, Vlado and Cheng, Shenghua and Zeng, Shaoqun and Thagaard, Jeppe and Dahl, Anders B. and Lin, Huangjing and Chen, Hao and Jacobsson, Ludwig and Hedlund, Martin and Çetin, Melih and Hal{\i}c{\i}, Eren and Jackson, Hunter and Chen, Richard and Both, Fabian and Franke, Jörg and Küsters-Vandevelde, Heidi and Vreuls, Willem and Bult, Peter and van Ginneken, Bram and van der Laak, Jeroen and Litjens, Geert},
  journal={IEEE Transactions on Medical Imaging},
  title={From Detection of Individual Metastases to Classification of Lymph Node Status at the Patient Level: The {CAMELYON17} Challenge},
  year={2019},
  volume={38},
  number={2},
  pages={550--560},
  url={https://doi.org/10.1109/TMI.2018.2867350},
  doi={10.1109/TMI.2018.2867350}}

@article{bracs,
    author = {Brancati, Nadia and Anniciello, Anna Maria and Pati, Pushpak and Riccio, Daniel and Scognamiglio, Giosuè and Jaume, Guillaume and De Pietro, Giuseppe and Di Bonito, Maurizio and Foncubierta, Antonio and Botti, Gerardo and Gabrani, Maria and Feroce, Florinda and Frucci, Maria},
    title = {{BRACS}: A Dataset for {BReAst} Carcinoma Subtyping in {H\&E} Histology Images},
    journal = {Database},
    volume = {2022},
    pages = {baac093},
    year = {2022},
    month = {October},
    issn = {1758-0463},
    doi = {10.1093/database/baac093},
    url = {https://doi.org/10.1093/database/baac093},
    eprint = {https://academic.oup.com/database/article-pdf/doi/10.1093/database/baac093/46880223/baac093.pdf},
}

@Article{panda,
author={Bulten, Wouter
and Kartasalo, Kimmo
and Chen, Po-Hsuan Cameron
and Str{\"o}m, Peter
and Pinckaers, Hans
and Nagpal, Kunal
and Cai, Yuannan
and Steiner, David F.
and van Boven, Hester
and Vink, Robert
and Hulsbergen-van de Kaa, Christina
and van der Laak, Jeroen
and Amin, Mahul B.
and Evans, Andrew J.
and van der Kwast, Theodorus
and Allan, Robert
and Humphrey, Peter A.
and Gr{\"o}nberg, Henrik
and Samaratunga, Hemamali
and Delahunt, Brett
and Tsuzuki, Toyonori
and H{\"a}kkinen, Tomi
and Egevad, Lars
and Demkin, Maggie
and Dane, Sohier
and Tan, Fraser
and Valkonen, Masi
and Corrado, Greg S.
and Peng, Lily
and Mermel, Craig H.
and Ruusuvuori, Pekka
and Litjens, Geert
and Eklund, Martin
and {the PANDA challenge consortium}
and Brilhante, Am{\'e}rico
and {\c{C}}ak{\i}r, Asl{\i}
and Farr{\'e}, Xavier
and Geronatsiou, Katerina
and Molini{\'e}, Vincent
and Pereira, Guilherme
and Roy, Paromita
and Saile, G{\"u}nter
and Salles, Paulo G. O.
and Schaafsma, Ewout
and Tschui, Jo{\"e}lle
and Billoch-Lima, Jorge
and Pereira, Em{\'i}io M.
and Zhou, Ming
and He, Shujun
and Song, Sejun
and Sun, Qing
and Yoshihara, Hiroshi
and Yamaguchi, Taiki
and Ono, Kosaku
and Shen, Tao
and Ji, Jianyi
and Roussel, Arnaud
and Zhou, Kairong
and Chai, Tianrui
and Weng, Nina
and Grechka, Dmitry
and Shugaev, Maxim V.
and Kiminya, Raphael
and Kovalev, Vassili
and Voynov, Dmitry
and Malyshev, Valery
and Lapo, Elizabeth
and Campos, Manuel
and Ota, Noriaki
and Yamaoka, Shinsuke
and Fujimoto, Yusuke
and Yoshioka, Kentaro
and Juvonen, Joni
and Tukiainen, Mikko
and Karlsson, Antti
and Guo, Rui
and Hsieh, Chia-Lun
and Zubarev, Igor
and Bukhar, Habib S. T.
and Li, Wenyuan
and Li, Jiayun
and Speier, William
and Arnold, Corey
and Kim, Kyungdoc
and Bae, Byeonguk
and Kim, Yeong Won
and Lee, Hong-Seok
and Park, Jeonghyuk},
title={Artificial intelligence for diagnosis and {Gleason} grading of prostate cancer: the {PANDA} challenge},
journal={Nature Medicine},
year={2022},
month={Jan},
day={01},
volume={28},
number={1},
pages={154--163},
issn={1546-170X},
doi={10.1038/s41591-021-01620-2},
url={https://doi.org/10.1038/s41591-021-01620-2}
}

@misc{NCI_TCGA,
  author       = {{National Cancer Institute}},
  title        = {{The Cancer Genome Atlas Program (TCGA)}},
  howpublished = {\url{https://www.cancer.gov/ccg/research/genome-sequencing/tcga}},
  note         = {Accessed 21 March 2026},
  year         = {n.d.},
}

@article{atlaspatch,
  title   = {{AtlasPatch}: Scalable Foundation Model-based Tissue Detection and Patch Extraction for Computational Pathology},
  author  = {Alagha, Ahmed and Leclerc, Christopher and Kotp, Yousef and Metwally, Omar and Moras, Calvin and Rentopoulos, Peter and Rostami, Ghodsiyeh and Nguyen, Bich Ngoc and Baig, Jumanah and Khellaf, Abdelhakim and Trinh, Vincent Quoc-Huy and Mizouni, Rabeb and Otrok, Hadi and Bentahar, Jamal and Hosseini, Mahdi S.},
  journal = {arXiv preprint arXiv:2602.03998},
  year    = {2026},
  url     = {https://arxiv.org/abs/2602.03998}
}

@article{hinton2015distilling,
  author  = {Geoffrey Hinton and Oriol Vinyals and Jeff Dean},
  title   = {Distilling the Knowledge in a Neural Network},
  journal = {arXiv preprint arXiv:1503.02531},
  year    = {2015},
  note    = {Presented at the NIPS 2014 Deep Learning Workshop},
  url     = {https://arxiv.org/abs/1503.02531}
}

@article{vapnik2009new,
  author  = {Vladimir Vapnik and Akshay Vashist},
  title   = {A new learning paradigm: Learning using privileged information},
  journal = {Neural Networks},
  volume  = {22},
  number  = {5--6},
  pages   = {544--557},
  year    = {2009},
  doi     = {10.1016/j.neunet.2009.06.042},
  url     = {https://doi.org/10.1016/j.neunet.2009.06.042}
}

@inproceedings{lopezpaz2016unifying,
  author    = {David Lopez-Paz and L{\'e}on Bottou and Bernhard Sch{\"o}lkopf and Vladimir Vapnik},
  title     = {Unifying distillation and privileged information},
  booktitle = {International Conference on Learning Representations},
  year      = {2016},
  url       = {https://arxiv.org/abs/1511.03643}
}

@inproceedings{ravi2025sam2,
  author={Ravi, Nikhila and Gabeur, Valentin and Hu, Yuan-Ting and Hu, Ronghang and Ryali, Chaitanya and Ma, Tengyu and Khedr, Haitham and R{\"a}dle, Roman and Rolland, Chloe and Gustafson, Laura and Mintun, Eric and Pan, Junting and Alwala, Kalyan Vasudev and Carion, Nicolas and Wu, Chao-Yuan and Girshick, Ross and Doll{\'a}r, Piotr and Feichtenhofer, Christoph},
  title={{SAM 2}: Segment Anything in Images and Videos},
  booktitle={International Conference on Learning Representations},
  volume={2025},
  pages={28085--28128},
  year={2025},
  url={https://proceedings.iclr.cc/paper_files/paper/2025/hash/45c1f6a8cbf2da59ebf2c802b4f742cd-Abstract-Conference.html}
}

@article{hendrycks2016gelu,
  author = {Dan Hendrycks and Kevin Gimpel},
  title = {Gaussian Error Linear Units ({GELUs})},
  journal = {arXiv preprint arXiv:1606.08415},
  year = {2016},
  url = {https://arxiv.org/abs/1606.08415}
}

@inproceedings{hu2022lora,
  title = {{LoRA}: Low-Rank Adaptation of Large Language Models},
  author = {Edward J. Hu and Yelong Shen and Phillip Wallis and Zeyuan Allen-Zhu and Yuanzhi Li and Shean Wang and Lu Wang and Weizhu Chen},
  booktitle = {International Conference on Learning Representations},
  year = {2022},
  url = {https://openreview.net/forum?id=nZeVKeeFYf9}
}

@inproceedings{lipman2023flow,
  title = {Flow Matching for Generative Modeling},
  author = {Yaron Lipman and Ricky T. Q. Chen and Heli Ben-Hamu and Maximilian Nickel and Matt Le},
  booktitle = {International Conference on Learning Representations},
  year = {2023},
  url = {https://openreview.net/forum?id=PqvMRDCJT9t}
}

@inproceedings{romero2015fitnets,
  author    = {Adriana Romero and Nicolas Ballas and Samira Ebrahimi Kahou and Antoine Chassang and Carlo Gatta and Yoshua Bengio},
  title     = {{FitNets}: Hints for Thin Deep Nets},
  booktitle = {International Conference on Learning Representations},
  year      = {2015},
  url       = {https://arxiv.org/abs/1412.6550}
}

@inproceedings{zagoruyko2017attention,
  author    = {Sergey Zagoruyko and Nikos Komodakis},
  title     = {Paying More Attention to Attention: Improving the Performance of Convolutional Neural Networks via Attention Transfer},
  booktitle = {International Conference on Learning Representations},
  year      = {2017},
  url       = {https://openreview.net/forum?id=Sks9_ajex}
}

@inproceedings{hewitt2021conditional,
  title = {Conditional probing: measuring usable information beyond a baseline},
  author = {Hewitt, John and Ethayarajh, Kawin and Liang, Percy and Manning, Christopher},
  editor = {Moens, Marie-Francine and Huang, Xuanjing and Specia, Lucia and Yih, Scott Wen-tau},
  booktitle = {Proceedings of the 2021 Conference on Empirical Methods in Natural Language Processing},
  month = nov,
  year = {2021},
  address = {Online and Punta Cana, Dominican Republic},
  publisher = {Association for Computational Linguistics},
  pages = {1626--1639},
  doi = {10.18653/v1/2021.emnlp-main.122},
  url = {https://aclanthology.org/2021.emnlp-main.122/}
}

@inproceedings{zhang2025rsd,
  author = {Weijia Zhang and Yuehao Liu and Wu Ran and Chao Ma},
  title = {Cross-Architecture Distillation Made Simple with Redundancy Suppression},
  booktitle = {Proceedings of the IEEE/CVF International Conference on Computer Vision},
  year = {2025},
  pages = {23256--23266},
  doi = {10.1109/ICCV51701.2025.02159},
  url = {https://openaccess.thecvf.com/content/ICCV2025/html/Zhang_Cross-Architecture_Distillation_Made_Simple_with_Redundancy_Suppression_ICCV_2025_paper.html}
}

@inproceedings{li2025pathvq,
  author = {Honglin Li and Zhongyi Shui and Yunlong Zhang and Chenglu Zhu and Lin F. Yang},
  title = {{PathVQ}: Reforming Computational Pathology Foundation Model for Whole Slide Image Analysis via Vector Quantization},
  booktitle = {Advances in Neural Information Processing Systems 38},
  year = {2025},
  doi = {10.52202/085713-2086},
  url = {https://proceedings.neurips.cc/paper_files/paper/2025/hash/59f278de1619bdb6b53fd04e8e0976e0-Abstract-Conference.html}
}

@article{chief2024,
  author = {Xiyue Wang and Junhan Zhao and Eliana Marostica and Wei Yuan and Jietian Jin and Jiayu Zhang and Ruijiang Li and Hongping Tang and Kanran Wang and Yu Li and Fang Wang and Yulong Peng and Junyou Zhu and Jing Zhang and Christopher R. Jackson and Jun Zhang and Deborah Dillon and Nancy U. Lin and Lynette Sholl and Thomas Denize and David Meredith and Keith L. Ligon and Sabina Signoretti and Shuji Ogino and Jeffrey A. Golden and MacLean P. Nasrallah and Xiao Han and Sen Yang and Kun-Hsing Yu},
  title = {A pathology foundation model for cancer diagnosis and prognosis prediction},
  journal = {Nature},
  volume = {634},
  pages = {970--978},
  year = {2024},
  doi = {10.1038/s41586-024-07894-z},
  url = {https://www.nature.com/articles/s41586-024-07894-z}
}

@inproceedings{qian2025adapt,
  author = {Chengyao Qian and Trung Le and Mehrtash Harandi},
  title = {A Good Teacher Adapts Their Knowledge for Distillation},
  booktitle = {Proceedings of the IEEE/CVF International Conference on Computer Vision},
  pages = {1239--1248},
  year = {2025},
  url = {https://openaccess.thecvf.com/content/ICCV2025/html/Qian_A_Good_Teacher_Adapts_Their_Knowledge_for_Distillation_ICCV_2025_paper.html}
}

@inproceedings{lee2025customkd,
  author = {Jungsoo Lee and Debasmit Das and Munawar Hayat and Sungha Choi and Kyuwoong Hwang and Fatih Porikli},
  title = {{CustomKD}: Customizing Large Vision Foundation for Edge Model Improvement via Knowledge Distillation},
  booktitle = {Proceedings of the IEEE/CVF Conference on Computer Vision and Pattern Recognition},
  pages = {25176--25186},
  year = {2025},
  url = {https://openaccess.thecvf.com/content/CVPR2025/html/Lee_CustomKD_Customizing_Large_Vision_Foundation_for_Edge_Model_Improvement_via_CVPR_2025_paper.html}
}

@article{vandermaaten2008visualizing,
  author = {Laurens van der Maaten and Geoffrey Hinton},
  title = {Visualizing Data using {t-SNE}},
  journal = {Journal of Machine Learning Research},
  volume = {9},
  number = {86},
  pages = {2579--2605},
  year = {2008},
  url = {https://jmlr.org/papers/v9/vandermaaten08a.html}
}

@inproceedings{tiapkin2025teacherhacking,
  author = {Daniil Tiapkin and Daniele Calandriello and Johan Ferret and Sarah Perrin and Nino Vieillard and Alexandre Rame and Mathieu Blondel},
  title = {On Teacher Hacking in Language Model Distillation},
  booktitle = {Proceedings of the 42nd International Conference on Machine Learning},
  series = {Proceedings of Machine Learning Research},
  volume = {267},
  pages = {59552--59569},
  year = {2025},
  publisher = {PMLR},
  url = {https://proceedings.mlr.press/v267/tiapkin25a.html}
}

@article{kriegeskorte2008rsa,
  title = {Representational similarity analysis---connecting the branches of systems neuroscience},
  author = {Kriegeskorte, Nikolaus and Mur, Marieke and Bandettini, Peter},
  journal = {Frontiers in Systems Neuroscience},
  volume = {2},
  pages = {4},
  year = {2008},
  doi = {10.3389/neuro.06.004.2008}
}

@article{JSSv109i03,
  title={{openTSNE}: A Modular {Python} Library for {t-SNE} Dimensionality Reduction and Embedding},
  volume={109},
  url={https://www.jstatsoft.org/index.php/jss/article/view/v109i03},
  doi={10.18637/jss.v109.i03},
  number={3},
  journal={Journal of Statistical Software},
  author={Poli{\v c}ar, Pavlin G. and Stra{\v z}ar, Martin and Zupan, Bla{\v z}},
  year={2024},
  pages={1--30}
}

@inproceedings{
yang2026sigliphd,
title={Sig{LIP}-{HD} by Fine-to-Coarse Supervision},
author={Lihe Yang and Zhen Zhao and Hengshuang Zhao},
booktitle={The Fourteenth International Conference on Learning Representations},
year={2026},
url={https://openreview.net/forum?id=XeLrfKEOZS}
}

@InProceedings{das-mil,
author="Bontempo, Gianpaolo
and Porrello, Angelo
and Bolelli, Federico
and Calderara, Simone
and Ficarra, Elisa",
editor="Greenspan, Hayit
and Madabhushi, Anant
and Mousavi, Parvin
and Salcudean, Septimiu
and Duncan, James
and Syeda-Mahmood, Tanveer
and Taylor, Russell",
title="DAS-MIL: Distilling Across Scales for MIL Classification of Histological WSIs",
booktitle="Medical Image Computing and Computer Assisted Intervention -- MICCAI 2023",
year="2023",
publisher="Springer Nature Switzerland",
address="Cham",
pages="248--258",
isbn="978-3-031-43907-0"
}
\bibliographystyle{iclr2027_conference}
\clearpage
\appendix
\addtocontents{toc}{\protect\setcounter{tocdepth}{2}}
\renewcommand{\contentsname}{Appendix contents}
\tableofcontents
\clearpage
\section{Definitions and Proofs}
\label{app:theory}
This appendix gives the proofs behind the three study questions. The risk decomposition distinguishes teacher information, information lost during delivery and the ability of a trained task model to use its input. The squared error identities clarify what a shared regional output learns and what prediction can recover.

\subsection{Information access and task risk}
\label{app:transfer-risk}
\paragraph{Definitions and facts.}
Consider a fixed joint evaluation distribution of the outcome $Y$, the complete deployment input $B$ and the teacher target $T$. The outcome may be a class label or a survival observation containing time and censoring. Evaluation takes place after the training data, transformations and model parameters have been fixed, on a population independent of the training procedure. For a randomized learning procedure, we condition on its realized training data and randomness. The resulting prediction $Q=f_\theta(B)$ then uses only the declared deployment input.

Let $U$ denote the input available to the task model, such as the native input $B$, the delivered representation $Z$ or the joint input $(B,T)$. All representations and functions are measurable. A common action space specifies the allowed predictions, such as class probabilities or a discrete survival distribution. A prediction rule $a$ maps $U$ to this space. We take $\mathcal A_U$ to be the nonempty set of all such measurable rules.

The nonnegative loss $\ell$ compares a task prediction with outcome $Y$. The expectation $\mathbb E$ averages over the fixed evaluation population, with input and outcome drawn from the same example. Every risk comparison uses the same population, loss, action space and observation weighting. For a trained task rule $g$, the population risk is $\mathcal R(g;U)=\mathbb E[\ell(g(U),Y)]$. To describe what the input permits without restricting the model architecture, we define the unrestricted risk as the infimum over all allowed rules,
\begin{equation}
\rho(U)=\inf_{a\in\mathcal A_U}\mathbb E[\ell(a(U),Y)].
\label{eq:bayes-risk-definition}
\end{equation}
The notation $\rho(B,T)$ abbreviates $\rho((B,T))$, the infimum of risk with access to both inputs. This infimum need not be attained by a rule. For the decomposition, assume that $\rho(B,T)$, $\rho(B)$, $\rho(Z)$ and $\mathcal R(g;Z)$ are finite, so the differences are defined. These unrestricted optima describe the information available in a representation, whereas a finite set of trained models may achieve higher risks.

Let $V$ be another representation, obtained through a measurable map $c$ with $V=c(U)$ almost surely. Every rule $a\in\mathcal A_V$ can be used with $U$ by composing it with $c$. The resulting rule $a\circ c\in\mathcal A_U$ makes the same predictions almost surely and therefore has the same risk. Thus the rules available from $U$ include those available from $V$. Taking infima, we obtain
\begin{equation}
\rho(U)\le\rho(V).
\label{eq:risk-information-nesting}
\end{equation}
This step relies on containment of the admissible rules, so an optimal rule or a conditional density is not needed. If each representation can be recovered measurably from the other almost surely, applying the inequality in both directions gives equal optimal risks.

\begin{proof}[Proof of Theorem~\ref{thm:transfer-risk}]
The joint input $(B,T)$ contains $B$. Under either delivery form in Equation~\ref{eq:teacher_targets}, $Z$ is a measurable function of $B$. Applying Equation~\ref{eq:risk-information-nesting} to these two containments gives
\begin{equation}
\rho(B,T)\le\rho(B)\le\rho(Z).
\label{eq:transfer-risk-order}
\end{equation}
The fitted model $g$ is one admissible rule using $Z$, so the definition of the infimum also gives
\begin{equation}
\rho(Z)\le\mathcal R(g;Z),\qquad
\varepsilon_g=\mathcal R(g;Z)-\rho(Z)\ge0.
\end{equation}
Adding and subtracting the finite quantities $\rho(B)$ and $\rho(Z)$, we obtain
\begin{align}
\mathcal R(g;Z)-\rho(B,T)
={}&\bigl[\rho(B)-\rho(B,T)\bigr]\notag\\
&+\bigl[\rho(Z)-\rho(B)\bigr]
+\bigl[\mathcal R(g;Z)-\rho(Z)\bigr].
\end{align}
The first two terms are nonnegative by Equation~\ref{eq:transfer-risk-order}, and the final term is $\varepsilon_g$. This proves the decomposition and the sign of every term.

Now suppose a measurable map $c$ recovers $B$ from $Z$ almost surely. The information comparison then works in both directions. Equation~\ref{eq:risk-information-nesting} gives $\rho(Z)\le\rho(B)$, which together with Equation~\ref{eq:transfer-risk-order} yields $\rho(Z)=\rho(B)$. Representation loss is therefore zero. For two deterministic augmentations $Z_P$ and $Z_I$ that each preserve the same complete $B$, both optimal risks equal $\rho(B)$. Let $g_P$ and $g_I$ denote their trained task models, and assume both trained risks are finite. Their excess risks $\varepsilon_{g_P}$ and $\varepsilon_{g_I}$ are defined relative to the respective optima. The trained risks can therefore be written as
\begin{equation}
\mathcal R(g_P;Z_P)=\rho(B)+\varepsilon_{g_P},\qquad
\mathcal R(g_I;Z_I)=\rho(B)+\varepsilon_{g_I}
\end{equation}
and, by subtracting these expressions, we obtain Equation~\ref{eq:retained-input-risk}. The equality holds even when the trained models have different parameters and neither attains its optimal risk.
\end{proof}

\paragraph{Interpretation of the three terms.}
Teacher access measures how much joint access to $B$ and $T$ can lower unrestricted risk. Its value depends on the chosen target $T$, which may summarize only part of the original high magnification images. A model receiving $T$ alone has a different input and may also incur model excess. We therefore distinguish that model's score from the joint optimum $\rho(B,T)$.

Representation loss measures the task information lost when $B$ is transformed into $Z$. Concatenating $Z=[B,f_\theta(B)]$ makes this term zero because the first block recovers $B$. An invertible change of coordinates of $B$ has the same property. A residual sum or replacement may not recover the complete native input but may still retain everything relevant to the task. The absence of a demonstrated recovery map is therefore insufficient to establish positive representation loss.

Model excess measures the trained model's gap to the unrestricted risk for its representation. Architecture, finite data, estimation, optimization and model selection can all contribute. Equal information can support different performance in finite models \citep{xu2020usable}, including when an initial predictor transforms $B$ before any teacher training. With $B$ retained, we compare the trained predictor with its exact initialization to examine what teacher training contributes beyond the initial transformation.

The decomposition applies to a loss defined for each observation. Discrete survival NLL has this form when predictions are survival distributions and $Y$ contains time and censoring, with common time bins and observation weights. Losses that couple observations through risk sets require a joint formulation. AUC, C-index, macro $F_1$ and quadratic $\kappa$ depend on comparisons or counts across observations and remain separate evaluation criteria. For this reason, we interpret test scores as estimates of trained model performance and do not identify them with the unrestricted terms of the decomposition.

\paragraph{Classification log loss as a special case.}
For classification, let $H$ denote Shannon entropy and $\operatorname{KL}$ denote Kullback-Leibler divergence, both using natural logarithms. Consider a finite class label and allow every vector of class probabilities as a prediction. For ordinary unweighted log loss under the fixed evaluation distribution, the optimal risk is the conditional entropy $\rho(U)=H(Y\mid U)$. To verify this identity, we write $p(\cdot\mid U)$ for the true conditional class distribution. A prediction rule $a$ returns class probabilities, with $a_Y(U)$ denoting the probability assigned to the observed class. Its conditional expected log loss is
\begin{equation}
\mathbb E[-\log a_Y(U)\mid U]
=H\bigl(p(\cdot\mid U)\bigr)
+\operatorname{KL}\bigl(p(\cdot\mid U)\,\|\,a(U)\bigr).
\end{equation}
The divergence is nonnegative and vanishes at $a(U)=p(\cdot\mid U)$. Terms with zero true class probability contribute zero, using the convention $0\log 0=0$. Assigning probability zero to a class with positive true probability gives infinite loss and divergence. Averaging the identity establishes the optimal risk. Let $I$ denote conditional mutual information. Because $Z$ is a function of $B$, conditioning on $(B,Z)$ is equivalent to conditioning on $B$, so $H(Y\mid B,Z)=H(Y\mid B)$. The definitions of conditional mutual information then express the first two terms of Equation~\ref{eq:transfer-risk} as
\begin{equation}
\rho(B)-\rho(B,T)=I(Y;T\mid B),\qquad
\rho(Z)-\rho(B)=I(Y;B\mid Z).
\end{equation}
The general risk identity recovers the classification information decomposition while also accommodating survival losses without this entropy representation.

\subsection{Mean supervision for a shared regional output}
\label{app:shared-output-mean}
\paragraph{Definitions and facts.}
For vectors $u,v\in\mathbb R^d$, the Euclidean inner product is $u^\top v$ and $\|u\|_2^2=u^\top u$. Deviations from a regional mean sum to zero. Together with the expansion $\|u+v\|_2^2=\|u\|_2^2+\|v\|_2^2+2u^\top v$, this lets us separate error relative to the mean from variation among the teacher vectors.

\begin{proof}[Proof of Lemma~\ref{lem:shared-output-mean}]
Fix one region with $K\ge1$ teacher vectors, omit its index, and set $m=K^{-1}\sum_{j=1}^Kh_j$. The same prediction $q$ is compared with every teacher vector. Writing $q-h_j=(q-m)+(m-h_j)$, we expand each squared norm and average to obtain
\begin{align}
\frac1K\sum_{j=1}^K\|q-h_j\|_2^2
={}&\|q-m\|_2^2+\frac1K\sum_{j=1}^K\|m-h_j\|_2^2\notag\\
&+2(q-m)^\top\frac1K\sum_{j=1}^K(m-h_j).
\label{eq:app-child-mean}
\end{align}
The cross term vanishes because $K^{-1}\sum_j(m-h_j)=m-m=0$, leaving Equation~\ref{eq:shared_child_mean}. The variation among teacher vectors contributes a term independent of $q$. Only $\|q-m\|_2^2$ depends on the prediction, so both objectives give the output gradient $2(q-m)$. For a differentiable predictor $q_\theta$ with teacher features fixed with respect to $\theta$, the chain rule gives the same parameter gradient $2J_\theta^\top(q_\theta-m)$, where $J_\theta=\partial q_\theta/\partial\theta$ is its output Jacobian.
\end{proof}

\paragraph{Scope of the identity.}
The comparison uses fixed teacher features, uniform weights within each region, common coordinates and the same fixed region weights in both objectives. For a shared output $q$ that does not depend on the sampled index $J$, uniform sampling gives $\mathbb E_J[2(q-h_J)]=2(q-m)$. The expected parameter gradient also agrees when the output Jacobian is shared. Sampling can still change gradient variance and the finite training trajectory. Equal weighting of all patches gives regions with more teacher patches greater weight, which can change the aggregate objective.

Separate positional outputs depart from the assumption of a shared output, and other loss functions are not covered by this squared error identity. Jointly learning the teacher can introduce gradients through the variation term, so equality of student gradients with the teacher held fixed does not imply equality of the full parameter gradients. The flow objective in Equation~\ref{eq:encoder-flow-kd} also falls outside the setting of a shared output. Its sampled target changes the path input $z_s$, allowing the network output to depend on which target was drawn. We therefore use the lemma to establish equivalence of supervision under its assumptions, leaving the task information contained in the teacher collection as a separate question.

\subsection{Direct and residual prediction under squared error}
\label{sec:conditional_mean}
\label{app:conditional-mean-proof}
\paragraph{Definitions and facts.}
A vector $V$ is square-integrable when $\mathbb E\|V\|_2^2<\infty$. Its conditional mean $\mu(B)=\mathbb E[V\mid B]$ averages possible target values for a given deployment input, coordinate by coordinate. This differs from averaging observed vectors within one region. Conditional Jensen's inequality gives $\mathbb E\|\mu(B)\|_2^2\le\mathbb E\|V\|_2^2$, so the conditional mean is also square integrable. For an integrable random variable $W$, the tower property $\mathbb E\mathbb E[W\mid B]=\mathbb E W$ allows expectations to be evaluated by first conditioning on $B$. A measurable function of $B$ can be taken outside that conditional expectation when the product is integrable. For the products below, square-integrability and the Cauchy-Schwarz inequality ensure this condition holds.

\begin{proof}[Proof of Proposition~\ref{prop:conditional-mean}]
Suppress the region index and write $V=t_i$, $q=q_i$, $x=x_i$ and $\mu=\mathbb E[V\mid B]$. To separate the part of the error that depends on the prediction, we expand $V-q=(V-\mu)+(\mu-q)$ and take expectations,
\begin{align}
\mathbb E\|V-q\|_2^2
={}&\mathbb E\|V-\mu\|_2^2+\mathbb E\|\mu-q\|_2^2\notag\\
&+2\mathbb E[(V-\mu)^\top(\mu-q)].
\label{eq:app-projection-expansion}
\end{align}
The definition of the conditional mean gives $\mathbb E[V-\mu\mid B]=0$. Both $\mu$ and $q$ are functions of $B$, so their difference can be taken outside the conditional expectation. The cross term therefore vanishes,
\begin{equation}
\mathbb E[(V-\mu)^\top(\mu-q)]
=\mathbb E\left[\mathbb E[V-\mu\mid B]^\top(\mu-q)\right]=0.
\end{equation}
Substituting this result gives
\begin{equation}
\mathbb E\|V-q\|_2^2=\mathbb E\|V-\mu\|_2^2+\mathbb E\|\mu-q\|_2^2.
\label{eq:app-projection}
\end{equation}
The first term is independent of the prediction. The second is nonnegative and vanishes precisely when $q=\mu$ almost surely. Since $\mu$ is square integrable, this unique optimum belongs to the allowed prediction class.

For residual prediction, $x$ is already a square integrable function of $B$. Every square integrable residual $r(B)$ therefore defines a square integrable output $q=x+r$. Conversely, every such output defines a square integrable residual $r=q-x$. This bijection means that the unrestricted direct and residual predictors can produce exactly the same outputs. The residual optimum is consequently $r(B)=\mu(B)-x$, giving the same optimal prediction $\mu(B)$.
\end{proof}

\paragraph{Interpretation of prediction error.}
The conditional mean can vary less across observations than the teacher target. To see why, we apply Equation~\ref{eq:app-projection} to the constant prediction $q=\mathbb E[V]$. The tower property gives $\mathbb E[\mu]=\mathbb E[V]$, yielding
\begin{equation}
\mathbb E\|V-\mathbb E[V]\|_2^2
=\mathbb E\|V-\mu\|_2^2+\mathbb E\|\mu-\mathbb E[\mu]\|_2^2.
\label{eq:conditional-total-variation}
\end{equation}
The first term is nonnegative, so the total centered variation of $\mu$ cannot exceed that of $V$. Reduced variation is therefore compatible with optimal prediction under squared error. This explains a property of the unknown conditional mean, but does not identify why a particular learned network produces less varied features.

Equation~\ref{eq:app-projection} applies to each aligned region and feature coordinate in a bag. It also extends to weighting by a measurable function $w(B)$ with $0<w(B)<\infty$ almost surely, provided $\mathbb E[w(B)\|V\|_2^2]$ and $\mathbb E[w(B)\|q\|_2^2]$ are finite. Conditional Jensen's inequality ensures a finite weighted second moment for $\mu$ as well. Multiplying each squared error inside the expectations by $w(B)$ gives the corresponding weighted identity. The weight is fixed when conditioning on $B$, so the cross term still vanishes and the conditional mean remains the unique optimum almost surely. A weight that depends on the target beyond $B$ can change that optimum.

Observed reconstruction errors estimate the sum of conditional target variation and predictor error under the evaluation distribution. Reconstruction error alone does not separate these contributions, which depend on the unknown conditional mean. In interpreting the experiments, we also distinguish both terms from representation loss and model excess in Theorem~\ref{thm:transfer-risk}.

The shared unrestricted optimum leaves room for differences between neural predictors. A native skip around a bottleneck can change both the available outputs and the initial function, while training procedures can also affect the fitted result. Our experiments compare these finite predictors. Their errors depend on the evaluation population as well, since a population shift can change the conditional mean and conditional variation. Poor prediction by a trained model alone cannot establish an information limit in the available input $B$.

\clearpage
\section{Data and comparison design}
\label{app:protocol}
\label{app:datasets}
\label{app:evaluation-protocol}
The prediction and transfer studies use cross-resolution distillation in Stage~1, followed by downstream task training in Stage~2. Teacher comparisons instead fit task models directly on observed features. This appendix describes the datasets, physical fields, extraction and training procedures for these studies, then explains their differences and the variation across training realizations.
\subsection{Datasets and task partitions}
The study includes five classification and grading datasets and five cancer-specific TCGA cohorts for overall survival. Each TCGA cancer cohort forms a separate evaluation dataset. Public data come from CAMELYON16, CAMELYON17, BRACS, PANDA and TCGA \citep{cam16,cam17,bracs,panda,NCI_TCGA}. Private-CRC is a private colorectal polyp dataset. Figure~\ref{fig:dataset-diversity}(a) summarizes their anatomical specimen sites, tasks and included slide/image counts. Table~\ref{tab:cohort-inventory} distinguishes slides or images from the groups used for task splitting. It shows seed 42 because membership and counts can change across realizations. All reported evaluations use internal holdouts.

\begin{table}[htbp]
\centering\small
\caption{Included datasets, observation counts and task partitions at seed 42. The Slides/images column gives the number of observations, while training, validation and test counts refer to the grouping unit in the preceding column. Slides or image identifiers do not establish patient independence. Higher values are better for every primary metric.}
\label{tab:cohort-inventory}
\begin{tabular}{@{}llrll@{}}\toprule
Dataset & Metric & Slides/images & Group & Train / val. / test\\\midrule
CAM16 & AUC & 270 & Slide & 194 / 22 / 54\\
CAM17 & Macro $F_1$ & 499 & Challenge group & 70 / 10 / 20\\
Private-CRC & Macro $F_1$ & 837 & Slide & 604 / 67 / 166\\
BRACS & Macro $F_1$ & 546 & Patient & 135 / 17 / 37\\
PANDA & Quadratic $\kappa$ & 10,615 & Image & 7645 / 847 / 2123\\\midrule
TCGA-KIRC & Case C-index & 494 & Case & 337 / 52 / 101\\
TCGA-KIRP & Case C-index & 283 & Case & 176 / 28 / 55\\
TCGA-LUAD & Case C-index & 487 & Case & 291 / 45 / 88\\
TCGA-STAD & Case C-index & 320 & Case & 199 / 33 / 63\\
TCGA-UCEC & Case C-index & 272 & Case & 172 / 26 / 50\\\bottomrule
\end{tabular}
\end{table}

Figure~\ref{fig:dataset-diversity}(a) combines CAM16 and CAM17 into 769 lymph node records and KIRC and KIRP into 777 kidney records. The remaining sites each correspond to one evaluation dataset. CAM16 and CAM17 sample lymph nodes from breast cancer patients, while BRACS samples breast tissue and UCEC samples uterine endometrium. These specimen sites describe the tissue sampled rather than distinct disease types or independent patient populations. For a site with $n$ included records, the colored outer radius is $r=.80+.20\ln(n/270)/\ln(10615/270)$. The normalization uses the smallest and largest individual dataset counts, retaining the same anchors when sites are grouped. Equal angular sectors and compressed radii do not give areas proportional to sample size or disease prevalence.

CAM16 predicts tumour presence. CAM17 distinguishes negative slides, isolated tumour cells, micrometastases and macrometastases rather than patient pN stage. Its challenge groups combine slides from different biological patients within a centre \citep{cam17}. Private-CRC distinguishes hyperplastic polyp, sessile serrated lesion, tubular adenoma and tubulovillous adenoma, with 212, 201, 207 and 217 slides, respectively. Normal slides are excluded. BRACS predicts seven tissue and lesion categories. PANDA predicts ISUP categories 0 through 5, including benign tissue in category 0. The TCGA cohorts comprise kidney renal clear cell carcinoma (KIRC), kidney renal papillary cell carcinoma (KIRP), lung adenocarcinoma (LUAD), stomach adenocarcinoma (STAD) and uterine corpus endometrial carcinoma (UCEC) \citep{NCI_TCGA}. Their task models use four survival bins derived from the task training partition.

CAM16 and Private-CRC partitions are stratified by slide label. CAM17 assigns 14, 2 and 4 challenge groups per centre to training, validation and testing. BRACS groups slides by patient and stratifies by each patient's multilabel vector. PANDA stratifies image identifiers by provider and grade. TCGA groups by case and stratifies by event and time quartile, except UCEC, which uses event alone. The five holdouts overlap across realizations and are not independent folds. Task membership and selection rules match across models within each contrast.

The test sets differ substantially in the support available for each metric. One BRACS class has only three test patients at seed 42. Across realizations, CAM17 has 3 to 8 test slides with isolated tumour cells, and seed 44 has none in validation. At seed 42, the survival cohorts KIRC, KIRP, LUAD, STAD and UCEC contain 34, 9, 33, 27 and 7 test events and 2,083, 252, 1,550, 1,012 and 247 comparable pairs, respectively. Sparse class and event counts limit the precision of task comparisons.

\subsection{Physical fields and frozen features}
AtlasPatch selects tissue patches for separate pretrained feature encoders \citep{atlaspatch}. Its fine-tuned SAM2 segmenter \citep{ravi2025sam2} predicts a tissue mask from a box spanning a $1.25\times$ thumbnail capped at 1024 pixels. Mask contours are mapped to level-0 slide coordinates. The extraction configuration disables the minimum tissue area filter and rejection of white or black patches. A patch is accepted when at least one of four probes, offset by a quarter patch width around its centre, lies in tissue and the centre is not strictly inside a detected hole.

Both magnifications use $256\times256$ pixel patches at a stride of 256 pixels. Extraction follows the recorded objective power and resizes pyramid reads to the requested width when needed. It does not enforce a common number of microns per pixel. Physical patch extent therefore depends on slide metadata, even when nominal magnification and pixel width agree.

A \five{} patch with level-0 origin $(x,y)$ covers sixteen \twenty{} positions with origins $(x+cw,y+rw)$ for $r,c\in\{0,1,2,3\}$, where $w$ is the \twenty{} footprint. The parent footprint is $4w$, and fields extending outside the slide are excluded. New reads follow these positions from left to right within each row and then from top to bottom, independently of the global grid origins in the earlier caches.

The complete field study selects at most 128 eligible regions per slide without using task outcomes. It first samples one parent from each occupied cell of a $16\times8$ coverage grid, then fills the remaining places uniformly from eligible parents. All representations and task realizations use this selection. Reserve parents recorded in advance replace unreadable fields before bags are assembled. The resulting support retains all 837 Private-CRC slides and all 10,615 PANDA images. Native features come from earlier caches, while the aligned teacher fields use new reads. Requiring every teacher position distinguishes this support from unrestricted native \five{} bags.

The released UNI2-h checkpoint uses a modified ViT-H/14 and represents a patch by its 1,536-dimensional class token after the final LayerNorm \citep{UNI}. The earlier AtlasPatch extractor obtains its evaluation transform from the published configuration, using a 224-pixel bilinear resize, crop fraction one and ImageNet channel normalization. New extraction implements the same intended transform with an antialiased tensor resize. Before encoding, the reader applies OpenCV linear resizing when the pyramid read differs from 256 pixels.

Summary predictors also receive the 768-dimensional normalized class token from the published DINOv3 ViT-B/16 on the same \five{} field \citep{simeoni2025dinov3}. DINOv3 uses the full 256-pixel patch. Both encoders normalize RGB values in $[0,1]$ with means $(.485,.456,.406)$ and standard deviations $(.229,.224,.225)$, without random cropping or stain normalization. Concatenating the cached UNI2-h and DINOv3 vectors gives the 2,304-coordinate native base. Aligned \twenty{} UNI2-h vectors define the teacher targets. The two encoders remain frozen during summary prediction and task training.

\paragraph{Differences between extraction pipelines.}
We checked whether the earlier caches and new extraction produced the same features by re-encoding eighty physical regions. Each check uses one \five{} region from the first successfully processed slide of an extraction chunk. The relative $L_2$ discrepancy has median $.0195$ and range $[.0064,.1870]$. All ten datasets contribute between four and twenty checks. This deterministic sample shows that the pipelines can produce different vectors for the same region, while leaving the contributions of image reading, interpolation and precision unresolved.

These extraction differences qualify the comparison $M-X$, which changes both magnification content and the pipeline. By contrast, $H-M^{16}$ and $M^{16}-M$ reuse exactly the same \twenty{} vectors, as do spatial summaries and regional means. Their differences cannot arise from using cached rather than newly extracted teacher features. Trained predictors and their initial references likewise share feature banks within each study. Isolating magnification itself in $M-X$ would require extracting both scales through a common pipeline and repeating the task comparison.

\subsection{Token order and recipient processing}
Complete-field bags preserve the saved order of selected regions. Within a region, the sixteen teacher vectors follow the $4\times4$ grid from left to right and top to bottom. The repeated mean occupies those same sixteen positions. Bags are not sorted globally by coordinates. A 2,048-token cap accommodates every regional bag of at most 128 tokens and every expanded bag of at most 2,048 tokens, so loaders preserve all rows and their order. Historical bags use a separate cap of 49,152 tokens and, when needed, deterministic selection spaced over coordinates.

TransMIL first projects each token to 512 coordinates. For a bag of $N$ tokens, it sets $L=\lceil\sqrt N\rceil$ and repeats the first $L^2-N$ vectors to form a square sequence. A class token and Nystr\"om attention layer precede the pyramid position encoding generator, whose depthwise convolutions of sizes $7$, $5$ and $3$ operate on the $L\times L$ sequence and add their outputs to its input. A second attention layer and the normalized class token produce the prediction. This square follows serialized order, since TransMIL does not receive slide coordinates. The inputs $H$ and $M^{16}$ consequently share token count, order, padding and positional processing. Comparing $M^{16}$ with $M$ changes these operations together with sequence length, so it tests their combined effect.

\subsection{Patch counts and representation size}
\label{app:efficiency}
Native \five{} bags contain fewer patch vectors in every evaluation dataset, as Figure~\ref{fig:native-patch-counts}(b) shows using the patch-count presentation of XMAG \citep{XMAG}. We count stored UNI2-h feature rows for the same 14,623 slide or image records at both magnifications, covering all included task partitions. Every record has a readable, nonempty bag at both scales. The caches contain 4,247,420 \five{} rows and 62,321,475 \twenty{} rows before parent selection or task token caps. These native bags retain the patches accepted at each magnification without requiring identical tissue coverage or patchwise correspondence. Their empirical ratios differ from the exact sixteen children per parent imposed by the complete-field construction.

The two rows in Figure~\ref{fig:native-patch-counts}(b) use the same vertical scale. Each bar pair averages over the same observations at both magnifications. Reduction labels divide the \twenty{} mean by the \five{} mean, which equals the ratio of their total counts. This differs from averaging ratios calculated separately for each slide. Table~\ref{tab:cohort-inventory} retains the exact observation counts. The figure describes the included study sample rather than the complete original dataset releases.

Reducing the number of task tokens does not reduce the number of feature coordinates by the same factor when augmentation increases token width. Figure~\ref{fig:representation-budget} derives both quantities for one complete physical field. Regional means and native UNI2-h features each occupy one token, while individual teacher features and repeated means occupy sixteen. Summary augmentation also occupies one token, retaining the 2,304-coordinate native base and appending either 768 projected coordinates or four concatenated 1,536-coordinate vectors. Relative to individual teacher features, these two augmented inputs reduce the total number of feature entries by factors of 8 and approximately 2.91, respectively, while each reduces token count by a factor of sixteen. The mean and spatial controls within each augmentation study retain its full delivery width, including repeated means or zero coordinates where specified.

\begin{figure}[h]
\centering
\includegraphics[width=\linewidth]{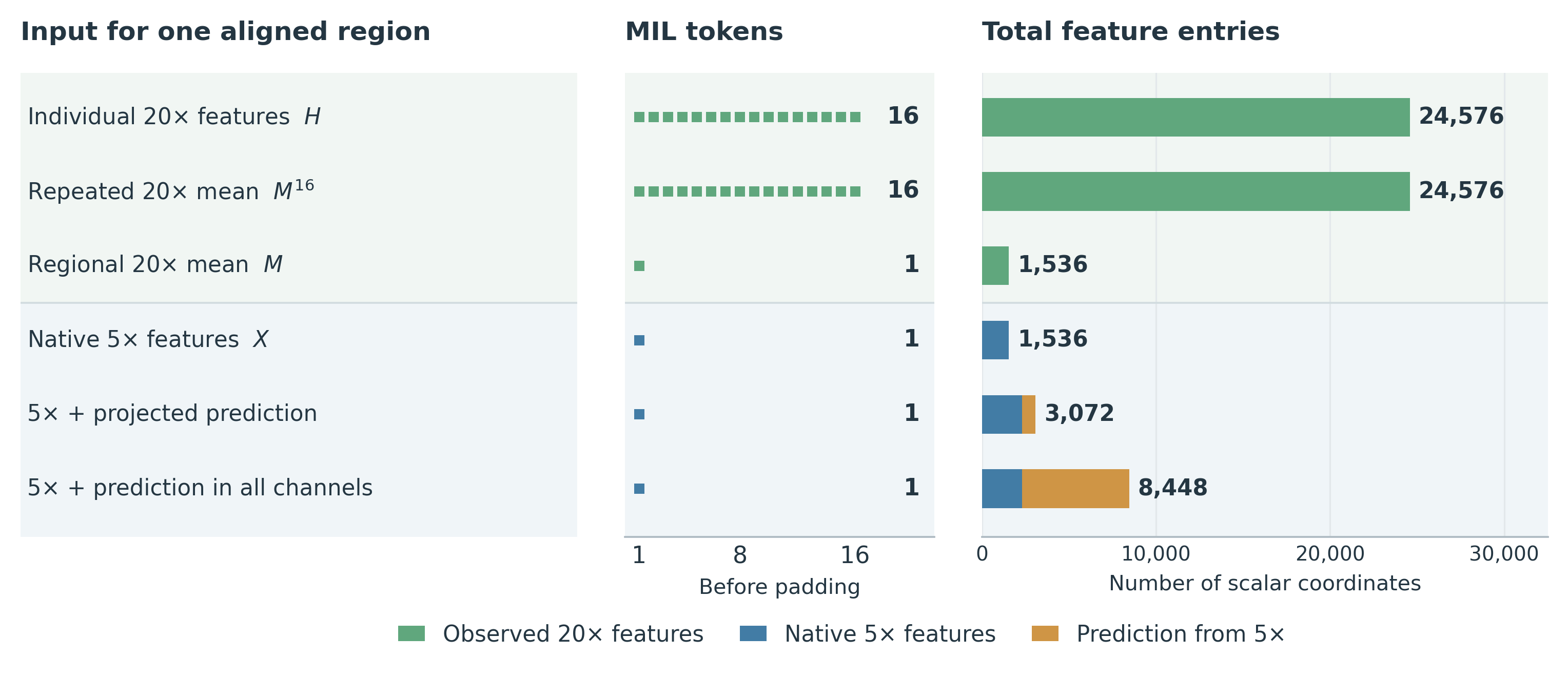}
\caption{\textbf{Token count and feature width jointly determine input size.} The middle panel counts task tokens per complete region before recipient padding. The right panel counts their scalar coordinates, with the native base and appended prediction shown separately for augmentation. Individual teacher features $H$ and repeated means $M^{16}$ each use sixteen 1,536-coordinate tokens. The regional mean $M$ and native \five{} vector $X$ each use one. Projected augmentation and augmentation in all feature channels deliver one token of width 3,072 and 8,448, respectively. The four summaries that retain every feature channel are concatenated within that token. These are derived representation dimensions, without measurements of memory use or runtime.}
\label{fig:representation-budget}
\end{figure}

Observed teacher summaries still require \twenty{} extraction. Their distilled predictions use only \five{} images at deployment, with both UNI2-h and DINOv3 supplying the retained native base. Patch counts and input dimensions describe different parts of that computation. Encoder work, recipient projection and padding, and intermediate activations also contribute to runtime and memory use, so neither figure establishes a processing speedup.

\subsection{Cross-resolution distillation and downstream fitting}
For summary prediction, the LODO strategy excludes the entire downstream evaluation dataset from Stage~1 training, transformation estimation and predictor selection. Aligned magnifications from the other nine datasets train the predictor without task labels. This exclusion is repeated for each of the ten evaluation datasets. When one TCGA cancer cohort is excluded, the other TCGA cancer cohorts remain eligible for distillation. All transformations and coordinate scales exclude the evaluated dataset, and distillation validation error selects the predictor checkpoint. Stage~2 freezes the encoders, transforms and predictor, then trains a new task model using the excluded dataset's training labels. Validation groups from that dataset select the task model, and separate test groups provide task scores. Stage~2 uses these within-dataset partitions rather than repeating the Stage~1 dataset exclusion. Test outcomes select neither stage. Figure~\ref{fig:evaluation-protocol} illustrates these roles. The separate encoder experiment in Appendix~\ref{app:encoder-target-law} instead adapts UNI2-h during Stage~1 before freezing it for task training.

In Figure~\ref{fig:evaluation-protocol}, the native vector $b_i=[x_i;u_i]$ combines UNI2-h and DINOv3 features. The mean target is $t_i=[a_i;\psi(a_i)]$, where $a_i$ averages projected teacher features and the cosine map acts on that average. The diagram's prediction bag $T$ corresponds to $Q$ in Equation~\ref{eq:teacher_targets}, so its deployed representation is $Z=[B;Q]$. Its DINOv3 feature bag $U$ is distinct from the generic task input $U$ in Theorem~\ref{thm:transfer-risk}. Appendix~\ref{app:targets} specifies this mean summary, while raw and DCT targets use their own constructions.

\begin{table}[h]
\centering\small
\caption{Task training settings for the primary TransMIL comparisons. Decay is the AdamW weight decay, and accumulation counts slide batches per optimizer update. CAM16 selects checkpoints by validation macro $F_1$ and reports test AUC. Other task models and encoder adaptation use the exceptions specified in their respective sections.}
\label{tab:task-recipe}
\begin{tabular}{@{}lrrrcl@{}}\toprule
Dataset & Epochs & Learning rate & Decay & Accum. & Selection\\\midrule
CAM16 & 15 & $.0004$ & $.05$ & 16 & Macro $F_1$\\
CAM17 & 15 & $.0002$ & $.01$ & 16 & Macro $F_1$\\
Private-CRC & 15 & $.0002$ & $.01$ & 16 & Macro $F_1$\\
BRACS & 15 & $.0002$ & $.01$ & 16 & Macro $F_1$\\
PANDA & 20 & $.0004$ & $.05$ & 16 & Quadratic $\kappa$\\
KIRC & 20 & $.0001$ & $.001$ & 1 & Case C-index\\
KIRP & 20 & $.0002$ & $.00001$ & 1 & Case C-index\\
LUAD & 20 & $.0001$ & $.00001$ & 1 & Case C-index\\
STAD & 20 & $.00005$ & $.00001$ & 1 & Case C-index\\
UCEC & 20 & $.00005$ & $.00001$ & 1 & Case C-index\\\bottomrule
\end{tabular}
\end{table}

The mean predictor illustrated in Figure~\ref{fig:evaluation-protocol} uses a GELU network of width $2304\to512\to768$ and samples the permitted training population from the other nine datasets. Its separate preprocessing and selection panels contain at most 32 groups and 256 regions per group per distillation dataset. Direct and residual prediction uses sixteen fitting groups, eight validation groups and at most 32 regions per group. The component diagnostic divides each distillation dataset into eight base training groups, eight correction groups, four validation groups for checkpoint selection and four independent evaluation groups. Appendix~\ref{app:source-learning} specifies the diagnostic, and Appendix~\ref{app:all-cohort-fidelity} gives the direct and residual architectures. These distinct samples define the scope of each reconstruction comparison.

Unless a comparison states an exception, frozen-feature predictors train for 4,096 AdamW updates with batch size 1,024, learning rate $.001$, weight decay $.0001$ and gradient clipping at 1. Checkpoints are evaluated every 512 updates. Appendix~\ref{app:neural-correction} gives the correction and denser evaluation schedules, while Appendix~\ref{app:encoder-target-law} specifies encoder adaptation. Moments estimated from distillation training data weight datasets equally, groups equally within datasets and regions equally within groups. The default coordinate scale is $s_d=\max(\mathrm{SD}_d,\tau)$, where $\tau=\max(10^{-6},10^{-3}\operatorname{median}_{d:\mathrm{SD}_d>0}\mathrm{SD}_d)$. Models within a comparison share the specified transformations and delivery scales.

The primary TransMIL models use AdamW, a cosine schedule and warmup ratio $.05$, with task settings given in Table~\ref{tab:task-recipe}. Complete fields supply at most 128 regional tokens or 2,048 expanded tokens. Studies using the original bags cap input at 49,152 tokens. Differences in support prevent absolute scores across these studies from isolating a single factor.

\subsection{Comparison designs}
\label{app:protocol-map}
Tissue support differs across studies in three ways. Complete fields contain at most 128 selected parents per slide with all sixteen newly encoded teacher positions. Historical shared support retains each \five{} row with at least one aligned cached teacher vector, allowing the number of teacher patches to vary. Unrestricted native bags retain the original \five{} rows without requiring teacher observations during task evaluation. The latter two can reach the 49,152-token cap. We compute effects within studies that share support, width and distillation populations.

\paragraph{Teacher content and token count.}
Teacher replacement in Table~\ref{tab:teacher-targets} uses complete fields and 1,536-coordinate inputs. Native features $X$ and teacher means $M$ provide one token per parent, while repeated means $M^{16}$ and individual features $H$ provide sixteen. Each receives a separate TransMIL fit without standardization using distillation training data or an additional native block. Seeds 42 through 46 vary task splitting and initialization. Means and sample SDs describe those realizations. The additional task models in Appendix~\ref{app:mil-robustness} instead use historical shared support, distillation and split seed 42, and task seeds 1002 through 1006.

\paragraph{Teacher augmentation with native features retained.}
Teacher augmentation in Table~\ref{tab:joint-teacher-main} retains the 2,304-coordinate UNI2-h and DINOv3 base on complete fields. Four projected DCT blocks add 192 coordinates each. True teacher components, a true mean with spatial zeros, and all zeros share normalization estimated from distillation training data and the total width of 3,072. Seeds 42 through 46 vary distillation transformations and task realizations. The comparison reports primary task metrics and classification NLL. Appendix~\ref{app:joint-teacher} provides the full comparison.

\paragraph{Neural prediction of regional mean and spatial variation.}
The component diagnostic in Table~\ref{tab:neural-prediction-errors} samples complete fields separately from task bags. Each distillation dataset contributes eight base training groups, eight correction groups, four validation groups for checkpoint selection and four independent evaluation groups, with at most 32 fields per group. Eight diagnostic validation groups come from the excluded dataset. A $2304\to512\to768$ GELU network predicts four 192-coordinate blocks. Moments of the native \five{} features in the base training groups determine the transformations, centering the mean block and scaling all blocks. Five distillation realizations quantify reconstruction without fitting task models. Correction keeps the base predictor frozen and learns a second head on its own groups.

\paragraph{Spatial summaries with all channels retained.}
The full-channel study in Appendix~\ref{app:n3-joint} trains separate heads for the 1,536-coordinate mean and 4,608 nonconstant coordinates. Each distillation dataset supplies sixteen fitting groups and eight selection groups, with at most 32 fields per group. Moments of native \five{} features from the distillation training groups normalize both targets, and those groups determine a common scale for all decoded summaries. Complete fields support task evaluation. Each token has the 2,304-coordinate native base plus four concatenated vectors of 1,536 coordinates, totaling 8,448. Five realizations jointly vary distillation, task splitting and task training. Primary metrics and NLL describe separate aspects of task performance. The alternative compression in Appendix~\ref{app:compression-all} shares support and total width but has its own calibration on distillation training data and three teacher representations.

\paragraph{Direct and residual prediction.}
Direct and residual predictors in Figure~\ref{fig:held_fidelity} learn the same regional mean from complete fields in the distillation datasets. Each distillation dataset contributes sixteen fitting groups and eight selection groups. The head receives one native UNI2-h vector and the slide mean, for 3,072 coordinates with common standardization and target error scales. Its 1,536-coordinate output replaces the native vector in unrestricted task bags. The residual skip changes access to native information despite identical output width. Five realizations jointly vary distillation, task splitting and task training.

The independent analysis in Appendix~\ref{app:raw-test-prediction} reuses these selected heads on complete fields from task test groups, excluding their distillation training and validation groups. Its constant reference is the training target mean estimated from the original distillation training sample. Permutation moves native features, slide context and the residual skip together. The earlier training-field figure and this test analysis retain separate evaluation populations.

\paragraph{Native retention independently of prediction.}
Native retention in Figure~\ref{fig:native-retention} holds the prediction $q_i$ unchanged and supplies either $[x_i;q_i]$ or $[0;q_i]$ at width 3,072. All native rows, task assignments, initial task weights and training settings match. Four hundred task fits cross direct or residual prediction, predictor final-layer initialization and retention over ten datasets and five realizations. The main figure shows NLL changes within each realization. Figure~\ref{fig:native-retention-realizations} adds the primary scores, and the appendix tables report absolute results for each input. Appendix~\ref{app:native-retention} explains recovery of the complete native input from the retained version.

\paragraph{Spatial access at predictor input.}
The input study in Appendix~\ref{app:input-access-all} supplies either repeated global DINOv3 features or its local grid to a predictor of the same 768-coordinate regional mean. Predictions are appended to the 2,304-coordinate global base on complete fields. The distillation training and validation groups, normalization and checkpoint selection match, and five joint realizations provide task scores for each input. The target contains no nonconstant spatial blocks. Because the global base need not recover every local feature used by the predictor, it does not establish the complete-input recovery required to cancel representation loss in the theorem.

\subsection{Experimental variation and reproducibility}
\label{app:artifacts}
Each comparison uses five realizations, but the factors that vary depend on the study. Teacher content and token count comparisons vary task splits and initialization using seeds 42 through 46. Teacher augmentation also varies the transformations estimated from Stage~1 training data. Comparisons between trained and initial predictions, together with the spatial prediction studies, additionally vary Stage~1 training. In the comparison across MIL architectures, distillation and the task split remain fixed at seed 42, while task initialization varies over seeds 1002 through 1006.

We compute each metric separately for each realization and report the equally weighted mean of the five values. Classification metrics use slide or image observations. For survival evaluation, we average slide risks within each case before computing the C-index from case outcomes. Survival NLL in the spatial studies is computed over slides using time bins estimated from the training partition of each realization. We compare absolute NLL values only within the same dataset and evaluation protocol, without pooling predictions across test sets.

The tables report absolute performance for each input and identify the reference and metric direction. Sample SDs summarize variation across the five realizations without separating the contributions of data splitting, initialization and predictor training. Since test sets can overlap, these SDs do not estimate uncertainty from five independent samples. The reported means and SDs describe the observed comparisons and do not by themselves establish a population improvement or equivalence.

The model fits underlying the main Results section use NVIDIA RTX 6000 Ada GPUs with PyTorch 2.6 and CUDA 12.4. The appendix studies of neural correction, teacher-summary compression and predictor inputs also use NVIDIA models. Training generally uses BF16 forward passes and FP32 loss computation. TF32 is disabled for CUDA matrix multiplication in the task models.

The spatial studies that retain all feature channels in Appendices~\ref{app:n3-joint} and~\ref{app:spatial-learning} use AMD MI350X GPUs, as does the comparison across MIL architectures in Appendix~\ref{app:mil-robustness}. These task models train in FP32, while the spatial predictors use BF16 forward passes and FP32 losses. The recorded software environment for the spatial study is PyTorch 2.11 with ROCm 7.2. Feature-geometry measurements and correspondence maps are computed on CPUs using saved predictors trained on NVIDIA GPUs.

The accompanying code provides training implementations, configuration examples, public split records, numerical exports and scripts for regenerating the tables and figures. Rerunning model training additionally requires the corresponding images or feature bags and pretrained weights. Private data, historical fitted checkpoints and the complete set of case predictions are not bundled. The code documentation distinguishes regeneration of the reported displays from replication of the experiments with the required inputs.

\clearpage
\section{Teacher Targets and Their Task Utility}
\label{app:numbers}
Before predicting teacher features, we test which parts help the task. This appendix defines regional means and spatial summaries, then compares their utility. Replacing native features tests a teacher representation on its own. Adding teacher features to the native input tests whether they contribute useful complementary information.
\subsection{The mean summary in the training diagram}
\label{app:targets}
Figure~\ref{fig:evaluation-protocol} illustrates distillation with a 768-coordinate summary of the regional mean. Let $P\in\mathbb R^{384\times1536}$ have independent Gaussian entries of variance $1/384$, and define $a_i=K_i^{-1}\sum_jPh_{ij}$. A random feature map follows \citet{rahimi2007random},
\begin{equation}
\psi(v)=\sqrt{\frac{2}{384}}\cos\!\left(\Omega\frac{v}{\max(\|v\|_2,10^{-6})}+\beta\right),\qquad
\Omega_{jk}\sim\mathcal N(0,1),\quad \beta_j\sim\operatorname{Unif}[0,2\pi).
\end{equation}
The diagram uses $t_i=[a_i;\psi(a_i)]$, with the random map held constant. Both blocks are functions of the same regional mean, so the second block adds no independent teacher observation. This target explains the example in the diagram. The main teacher comparison instead uses the DCT components defined below, and the raw mean prediction study uses all 1,536 original coordinates.

\subsection{The four spatial components}
\label{app:spatial-components}
Fourier Compressor reduces a set of visual tokens by applying a spatial DCT, retaining low frequency components and decoding them with an inverse DCT \citep{wang2026fourier}. We use this construction to summarize sixteen independently encoded \twenty{} patches within one \five{} region. For their $4\times4$ feature field $V$, let $C=\operatorname{DCT}_{\rm ortho}(V)/4$, where the transform is an orthonormal two-dimensional DCT of type II. With $c_r=\cos(\pi(2r+1)/8)$ for $r=0,1,2,3$, the four retained vectors are
\begin{align}
C_{00}&=\frac1{16}\sum_{r=0}^3\sum_{s=0}^3V_{rs},
& C_{01}&=\frac{\sqrt2}{16}\sum_{r=0}^3\sum_{s=0}^3c_sV_{rs},\\
C_{10}&=\frac{\sqrt2}{16}\sum_{r=0}^3\sum_{s=0}^3c_rV_{rs},
& C_{11}&=\frac1{8}\sum_{r=0}^3\sum_{s=0}^3c_rc_sV_{rs}.
\label{eq:spatial-four-components}
\end{align}
The constant component $C_{00}$ equals the regional mean. The other three components describe broad horizontal, vertical and joint changes across the feature field. These are spatial frequencies of embeddings, rather than image pixels. We call the retained set four low frequency components (LF4). FcaNet also uses DCT components, for channel attention \citep{qin2021fcanet}. We adapt the compression method to the transfer question by projecting the features, normalizing them with distillation training data and predicting them from \five{} inputs.

The projected diagnostic in Appendix~\ref{app:source-learning} uses $V_{rs}=P_{192}h_{rs}$, where $P_{192}$ contains the first 192 rows of the projection bank. Concatenating four components gives 768 outputs. Moments of native \five{} features in the base training groups center the constant block and supply the common scales. The separate study in Appendix~\ref{app:n3-joint} retains all 1,536 channels and decodes the four normalized components with an orthonormal inverse DCT on their $2\times2$ coefficient grid,
\begin{equation}
G=2\operatorname{IDCT}_{2,\rm ortho}(\widetilde C).
\label{eq:spatial-decoder}
\end{equation}
Decoding a mean-only target gives four copies of its normalized mean because of the factor of two. The four vectors summarize smooth variation across the field and are neither observed patches nor quadrant averages. Flattening them gives 6,144 coordinates. Every representation in this study combines them with the 2,304-coordinate native base $b_i=[x_i;u_i]$ using
\begin{equation}
\alpha=\sqrt{\frac{\mathbb E_{\rm train}\|b_i\|^2}{2\mathbb E_{\rm train}\|\operatorname{vec}(G_i)\|^2}},
\end{equation}
where $\mathbb E_{\rm train}$ averages over the distillation training data, and the teacher reference determines $\alpha$ on that fitting population. The projected diagnostic measures reconstruction error. The study retaining all channels also tests the decoded predictions in a task model. Their target widths, distillation samples and output transformations differ, so they provide separate evidence rather than an isolated comparison of channel count.

\subsection{Spatial frequency of the teacher features}
\label{app:frequency-sample}
To describe the variation removed by regional pooling, we examine the spatial frequencies of the teacher feature field. The magnitude heatmap follows Figure~1 of Fourier Compressor \citep{wang2026fourier}. Squared energy and permutation of patch positions distinguish the mean from organized nonconstant variation. We sample 1,000 unique complete regions per dataset from saved pools of 1,208 to 1,280 eligible regions. The resulting 10,000 regions contain 160,000 feature vectors from 381 recorded groups and 593 slides. They describe the sampled feature banks, without representing a uniform patient sample or the exact downstream test population.

Original 1,536-coordinate features and their 192-coordinate projections use the same fields. Absolute coefficient magnitudes are averaged over channels and regions, then equally across datasets before applying the display logarithm. Energy fractions divide aggregate squared coefficients, rather than averaging a ratio from each region. In the original features, DC retains 62.56\% of total energy and LF4 retains 75.98\%. The three nonconstant LF4 modes retain 35.84\% of the energy remaining after DC is removed, compared with 20.06\% after permutation. The corresponding projected fractions are 62.72\%, 76.09\% and 35.88\%. These summaries use one descriptive sample per dataset, without model fitting or repeated training seeds.

The regional mean contains most feature energy, while the retained spatial components capture a substantial share of the variation that pooling removes. Figure~\ref{fig:spatial_energy} shows that their share of nonconstant energy decreases after patch permutation. Under a uniform permutation, each of the fifteen nonconstant orthonormal modes has the same expected energy, giving a 20 percent reference for the three retained modes. The saved permutation approximates this reference. It describes spatial organization in the embeddings and motivates testing the retained components for task utility, which energy alone cannot establish.

\begin{figure}[htbp]
\centering
\includegraphics[width=\linewidth]{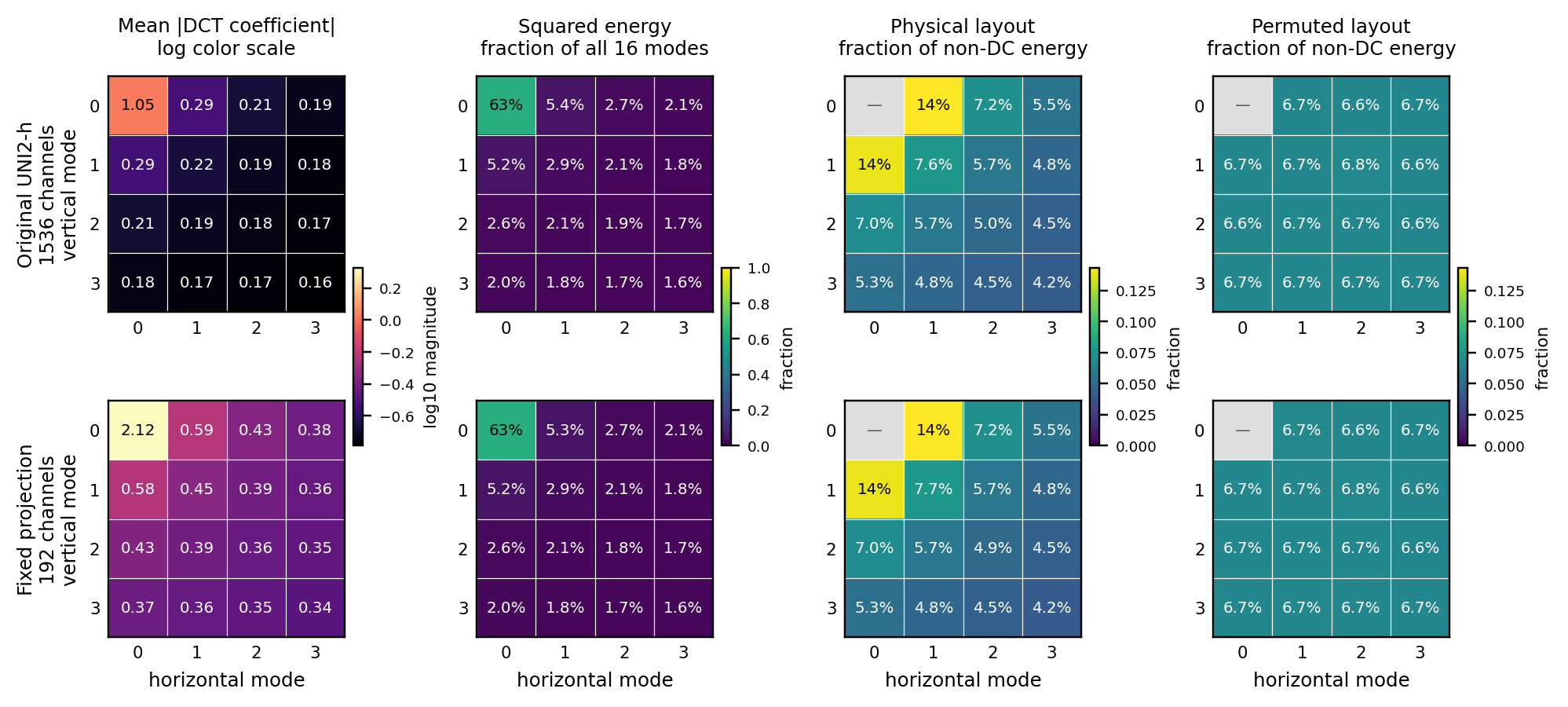}
\caption{\textbf{Broad spatial patterns capture variation beyond the regional mean.} Each row summarizes the same 1,000 complete regions from each of ten datasets, with equal dataset weight, giving 10,000 regions and 160,000 patch vectors. The top row uses all 1,536 feature channels and the bottom row their 192-coordinate projection. Axes index horizontal and vertical DCT modes, not tissue positions. Zero is constant and larger indices represent faster changes. From left to right, panels show mean absolute coefficient magnitude, fractions of total squared energy, and fractions of nonconstant energy before and after patch permutation. Brighter colors indicate larger values within each scale. Magnitude colors use a base-ten log while cell labels retain the original values. Gray marks the removed DC component. The retained LF4 modes occupy the upper-left $2\times2$ block. These descriptive feature summaries motivate testing spatial targets for task benefit.}
\label{fig:spatial_energy}
\end{figure}

\subsection{Separating regional content from token count}
To separate teacher content from repetition, we train task models on four representations of the same complete fields. Native input $X$ has one \five{} vector per region. The teacher mean $M$ replaces it with the average of sixteen aligned \twenty{} vectors, while $M^{16}$ repeats that average in sixteen slots. Individual features $H$ occupy the same slots in physical order. Comparing $H$ with $M^{16}$ holds token count and width constant. Every representation receives a separate task fit with shared initialization and selection rules. Table~\ref{tab:teacher-targets} reports their absolute performance.

\begin{table}[h]
\centering\footnotesize
\setlength{\tabcolsep}{4pt}
\caption{\textbf{Teacher content and token count have distinct task effects.} TransMIL receives native \five{} features, a regional \twenty{} mean, sixteen copies of that mean, or the sixteen individual features from the same tissue regions. Teacher inputs require \twenty{} images at evaluation and cover these regions rather than the full slide. Each score comes from a separate TransMIL fit without distillation. Full results for Table~\ref{tab:teacher-targets} are reported as mean $\pm$ sample SD across five realizations.}
\label{tab:teacher-targets-sd}
\begin{tabular}{@{}lcccc@{}}\toprule
 & \five{} input & \multicolumn{3}{c}{\twenty{} teacher construction}\\
\cmidrule(lr){3-5}
Dataset / metric & Native \five{} $X$ & Regional mean $M$ & Repeated mean $M^{16}$ & Individual \twenty{} $H$\\
Tokens per region & 1 & 1 & 16 & 16\\\midrule
CAM16 / AUC & $0.8430\pm0.0215$ & \cellcolor{scorebetter}$0.8993\pm0.0118$ & \cellcolor{scorebetter}$0.8872\pm0.0540$ & \cellcolor{scorebetter}$\mathbf{0.9047}\pm0.0360$\\
CAM17 / $F_1$ & $0.5054\pm0.0612$ & \cellcolor{scoreworse}$0.4856\pm0.0476$ & \cellcolor{scoreworse}$0.4904\pm0.0456$ & \cellcolor{scorebetter}$\mathbf{0.5418}\pm0.0593$\\
Private-CRC / $F_1$ & $0.8273\pm0.0332$ & \cellcolor{scoreworse}$0.8146\pm0.0232$ & \cellcolor{scorebetter}$0.8290\pm0.0365$ & \cellcolor{scorebetter}$\mathbf{0.8456}\pm0.0419$\\
PANDA / $\kappa$ & $0.8535\pm0.0125$ & \cellcolor{scorebetter}$0.8660\pm0.0159$ & \cellcolor{scorebetter}$0.8706\pm0.0069$ & \cellcolor{scorebetter}$\mathbf{0.8726}\pm0.0108$\\
BRACS / $F_1$ & $0.3747\pm0.0532$ & \cellcolor{scoreworse}$0.3731\pm0.0462$ & \cellcolor{scorebetter}$\mathbf{0.3849}\pm0.0454$ & \cellcolor{scorebetter}$0.3767\pm0.1097$\\
\midrule
KIRC / C-index & $0.7127\pm0.0293$ & \cellcolor{scoreworse}$0.7105\pm0.0418$ & \cellcolor{scoreworse}$0.6951\pm0.0592$ & \cellcolor{scorebetter}$\mathbf{0.7144}\pm0.0470$\\
KIRP / C-index & $0.7793\pm0.0995$ & \cellcolor{scorebetter}$\mathbf{0.7802}\pm0.1074$ & \cellcolor{scoreworse}$0.7735\pm0.1319$ & \cellcolor{scoreworse}$0.7745\pm0.1178$\\
LUAD / C-index & $0.5494\pm0.0539$ & \cellcolor{scorebetter}$\mathbf{0.5640}\pm0.0511$ & \cellcolor{scorebetter}$0.5523\pm0.0461$ & \cellcolor{scoreworse}$0.5444\pm0.0415$\\
STAD / C-index & $0.5197\pm0.0572$ & \cellcolor{scorebetter}$0.5402\pm0.0317$ & \cellcolor{scorebetter}$\mathbf{0.5640}\pm0.0683$ & \cellcolor{scorebetter}$0.5533\pm0.0692$\\
UCEC / C-index & $0.6383\pm0.1067$ & \cellcolor{scoreworse}$0.6158\pm0.1350$ & \cellcolor{scorebetter}$\mathbf{0.6602}\pm0.0645$ & \cellcolor{scorebetter}$0.6589\pm0.1348$\\
\bottomrule\end{tabular}
\end{table}

Teacher means do not consistently improve on native features when they replace them. CAM16 and PANDA improve in every realization, while other datasets show mixed directions. Individual teacher features can also help beyond repeated means, but this effect varies across realizations. CAM17 illustrates why the reference matters. Individual features improve on repeated means even though the mean itself performs worse than native input. Useful variation within a teacher representation need not make that representation a better replacement for native features.

Repeating the mean changes task scores without adding feature information. Recovering a single mean from its copies, or generating the copies from that mean, gives equal unrestricted risks under Theorem~\ref{thm:transfer-risk}. The observed changes reflect how finite models process the representation. Sequence layout, attention approximation, normalization and optimization remain possible contributors that this experiment changes together. Appendix~\ref{app:mil-robustness} examines regional means with additional task models on the historical shared support.

\subsection{Teacher information alongside the same native input}
\label{app:joint-teacher}
To test complementary teacher information, we retain the UNI2-h and DINOv3 base $b_i=[x_i;u_i]$ from Section 4 and append teacher components. Each \twenty{} feature is projected to 192 coordinates before the transform in Equation~\ref{eq:spatial-four-components}. The four blocks contain the regional mean and three spatial components, normalized using common statistics from distillation training data. The task model receives 3,072 coordinates per region. This construction follows the joint native and teacher input in Theorem~\ref{thm:transfer-risk}.

Using the symbols from Section 4, the task inputs are $[b_i;0_{768}]$, $[b_i;\widetilde m_i;0_{576}]$ and $[b_i;\widetilde m_i;s_i]$. The vector $b_i$ contains native \five{} features, $\widetilde m_i$ is the normalized projected \twenty{} regional mean, and $s_i$ concatenates its three normalized spatial components. Each representation has a separate TransMIL fit, with shared fields, task assignments, initial weights, optimization and selection within a realization. Bags contain at most 128 regions. The five realizations jointly vary distillation transformations, task splitting and training. These results reuse 150 completed reference fits and require teacher observations during evaluation.

A benefit from this augmentation shows that the teacher helps the trained recipient beyond its native input. Under the common loss and population of the theorem, that benefit combines the teacher information contribution with a change in model excess. The experiment measures complementary utility for this training procedure, while leaving those contributions unseparated.

\paragraph{Task loss.}
Classification NLL uses saved slide or image probabilities, floored at $10^{-7}$ and renormalized before averaging log loss. Survival NLL is the discrete censored likelihood averaged over slides, with four time bins derived from training cases. An event contributes survival through earlier bins and the hazard in its event bin. A censored observation contributes survival through its censoring bin. Models within a realization share bin boundaries and evaluated slides. Case C-index instead averages slide risks within a case and evaluates their ordering. NLL is reported separately for each dataset because task definitions and training-derived time bins differ. The means and sample SDs summarize the five saved realizations and do not establish a population ranking.

\begin{table}[h]
\centering\footnotesize
\setlength{\tabcolsep}{3pt}
\caption{Task NLL for teacher summaries appended to the same native input. Lower is better. The reference is native \five{} with zero padding. The direction control retains the teacher mean and component norms.}
\label{tab:joint-teacher-losses}
\begin{tabular}{@{}lcccc@{}}\toprule
Dataset & \shortstack{Native \five{}\\$[b_i;0_{768}]$} & \shortstack{Native \five{} + mean\\$[b_i;\widetilde m_i;0_{576}]$} & \shortstack{Native \five{} +\\mean and spatial\\$[b_i;\widetilde m_i;s_i]$} & \shortstack{Native \five{} +\\mean and direction control\\$[b_i;\widetilde m_i;s_i^{\rm rnd}]$}\\\midrule
CAM16 & $0.6110\!\pm\!0.1605$ & \cellcolor{scoreworse}$0.6717\!\pm\!0.3244$ & \cellcolor{scorebetter}$\mathbf{0.5663}\!\pm\!0.2296$ & \cellcolor{scorebetter}$0.5702\!\pm\!0.2110$\\
CAM17 & $\mathbf{0.8392}\!\pm\!0.4304$ & \cellcolor{scoreworse}$0.8605\!\pm\!0.3460$ & \cellcolor{scoreworse}$1.0058\!\pm\!0.5968$ & \cellcolor{scoreworse}$0.9509\!\pm\!0.5255$\\
Private-CRC & $0.6230\!\pm\!0.0551$ & \cellcolor{scorebetter}$\mathbf{0.5563}\!\pm\!0.0356$ & \cellcolor{scoreworse}$0.6503\!\pm\!0.1546$ & \cellcolor{scoreworse}$0.6924\!\pm\!0.2528$\\
PANDA & $1.4584\!\pm\!0.5817$ & \cellcolor{scorebetter}$\mathbf{1.1328}\!\pm\!0.4491$ & \cellcolor{scorebetter}$1.3559\!\pm\!0.6075$ & \cellcolor{scoreworse}$1.4657\!\pm\!0.6855$\\
BRACS & $1.8613\!\pm\!0.5405$ & \cellcolor{scorebetter}$\mathbf{1.7576}\!\pm\!0.5130$ & \cellcolor{scoreworse}$1.8878\!\pm\!0.6489$ & \cellcolor{scoreworse}$1.9374\!\pm\!0.4963$\\
KIRC & $2.0562\!\pm\!0.4878$ & \cellcolor{scorebetter}$\mathbf{1.6173}\!\pm\!0.4347$ & \cellcolor{scorebetter}$1.7888\!\pm\!0.5891$ & \cellcolor{scorebetter}$1.6763\!\pm\!0.4250$\\
KIRP & $1.2932\!\pm\!0.2403$ & \cellcolor{scorebetter}$1.2128\!\pm\!0.2770$ & \cellcolor{scorebetter}$\mathbf{1.0900}\!\pm\!0.1797$ & \cellcolor{scorebetter}$1.2335\!\pm\!0.2100$\\
LUAD & $2.7670\!\pm\!0.6057$ & \cellcolor{scorebetter}$2.5482\!\pm\!0.4887$ & \cellcolor{scorebetter}$\mathbf{2.4012}\!\pm\!0.4809$ & \cellcolor{scorebetter}$2.5907\!\pm\!0.6864$\\
STAD & $2.4675\!\pm\!0.3891$ & \cellcolor{scoreworse}$2.5468\!\pm\!0.5305$ & \cellcolor{scoreworse}$2.5089\!\pm\!0.4825$ & \cellcolor{scorebetter}$\mathbf{2.3861}\!\pm\!0.4141$\\
UCEC & $1.0027\!\pm\!0.1632$ & \cellcolor{scorebetter}$0.9424\!\pm\!0.1180$ & \cellcolor{scorebetter}$0.9429\!\pm\!0.1190$ & \cellcolor{scorebetter}$\mathbf{0.9311}\!\pm\!0.1241$\\
\bottomrule\end{tabular}
\end{table}

The regional teacher mean raises the average primary score in every dataset, as Table~\ref{tab:joint-teacher-main} shows, but its NLL effect is less consistent. Adding spatial components also has mixed effects on NLL. These results concern the same trained models. They show why probabilistic loss and discrimination or grading metrics provide distinct evidence of task utility, as the scope of Theorem~\ref{thm:transfer-risk} requires.

\paragraph{Spatial direction at the same component magnitude.}
We next test whether the direction of each spatial vector matters beyond its magnitude. Fifty additional task fits retain the true mean and replace each spatial block with a Gaussian direction scaled to its original norm, written $s_i^{\rm rnd}$. Fields, input width, task initialization and training remain unchanged. Table~\ref{tab:joint-teacher-direction} reports primary scores alongside the same native and teacher references. The original directions help some datasets and reduce performance on others. Every CAM16 realization favors the original directions, but this pattern does not generalize across datasets. The replacement retains teacher-derived norms and alters covariance together with direction. It tests sensitivity to the supplied vectors rather than an isolated biological property.

\begin{table}[h]
\centering\footnotesize
\setlength{\tabcolsep}{3pt}
\caption{Primary task scores for teacher summaries and their spatial direction control. Higher is better. The reference is native \five{} with zero padding. The task metrics follow Table~\ref{tab:joint-teacher-main}.}
\label{tab:joint-teacher-direction}
\begin{tabular}{@{}lcccc@{}}\toprule
Dataset & \shortstack{Native \five{}\\$[b_i;0_{768}]$} & \shortstack{Native \five{} + mean\\$[b_i;\widetilde m_i;0_{576}]$} & \shortstack{Native \five{} +\\mean and spatial\\$[b_i;\widetilde m_i;s_i]$} & \shortstack{Native \five{} +\\mean and direction control\\$[b_i;\widetilde m_i;s_i^{\rm rnd}]$}\\\midrule
CAM16 & $0.8455\!\pm\!0.0485$ & \cellcolor{scorebetter}$0.8621\!\pm\!0.0600$ & \cellcolor{scorebetter}$\mathbf{0.8932}\!\pm\!0.0285$ & \cellcolor{scorebetter}$0.8662\!\pm\!0.0348$\\
CAM17 & $0.5197\!\pm\!0.0336$ & \cellcolor{scorebetter}$\mathbf{0.5594}\!\pm\!0.0457$ & \cellcolor{scorebetter}$0.5198\!\pm\!0.0414$ & \cellcolor{scorebetter}$0.5297\!\pm\!0.0346$\\
Private-CRC & $0.8305\!\pm\!0.0276$ & \cellcolor{scorebetter}$\mathbf{0.8410}\!\pm\!0.0272$ & $0.8305\!\pm\!0.0233$ & \cellcolor{scoreworse}$0.8263\!\pm\!0.0342$\\
PANDA & $0.8386\!\pm\!0.0067$ & \cellcolor{scorebetter}$0.8446\!\pm\!0.0124$ & \cellcolor{scorebetter}$\mathbf{0.8456}\!\pm\!0.0114$ & \cellcolor{scoreworse}$0.8357\!\pm\!0.0089$\\
BRACS & $0.4211\!\pm\!0.0710$ & \cellcolor{scorebetter}$0.4257\!\pm\!0.0783$ & \cellcolor{scoreworse}$0.4094\!\pm\!0.0704$ & \cellcolor{scorebetter}$\mathbf{0.4381}\!\pm\!0.0722$\\
KIRC & $0.6823\!\pm\!0.0548$ & \cellcolor{scorebetter}$0.6892\!\pm\!0.0470$ & \cellcolor{scorebetter}$0.6907\!\pm\!0.0435$ & \cellcolor{scorebetter}$\mathbf{0.6990}\!\pm\!0.0550$\\
KIRP & $0.7352\!\pm\!0.1486$ & \cellcolor{scorebetter}$0.7782\!\pm\!0.0956$ & \cellcolor{scorebetter}$0.7563\!\pm\!0.1345$ & \cellcolor{scorebetter}$\mathbf{0.8324}\!\pm\!0.0926$\\
LUAD & $0.5381\!\pm\!0.0614$ & \cellcolor{scorebetter}$\mathbf{0.5795}\!\pm\!0.0276$ & \cellcolor{scorebetter}$0.5495\!\pm\!0.0214$ & \cellcolor{scoreworse}$0.5378\!\pm\!0.0327$\\
STAD & $0.5104\!\pm\!0.0188$ & \cellcolor{scorebetter}$0.5112\!\pm\!0.0209$ & \cellcolor{scoreworse}$0.4947\!\pm\!0.0497$ & \cellcolor{scorebetter}$\mathbf{0.5169}\!\pm\!0.0456$\\
UCEC & $0.6499\!\pm\!0.1086$ & \cellcolor{scorebetter}$\mathbf{0.6793}\!\pm\!0.1251$ & \cellcolor{scorebetter}$0.6672\!\pm\!0.1257$ & \cellcolor{scorebetter}$0.6525\!\pm\!0.1023$\\
\bottomrule\end{tabular}
\end{table}

\subsection{Spatial utility when all feature channels are retained}
\label{app:n3-joint}
This study tests teacher utility and distillation without projecting away feature channels. Separate $2304\to512\to1536$ and $2304\to512\to4608$ networks predict the regional mean and nonconstant components, giving 100 distillation fits across ten datasets and five realizations. Six task representations retain the same native base and use 8,448 coordinates, giving 300 task fits.

For this comparison, $Z=[b_i;0_{6144}]$ is native \five{} input with zero augmentation. $M$ appends four copies of the teacher regional mean, and $S$ appends the teacher mean and spatial components after decoding. Their neural predictions are $\widehat M$ and $\widehat S$. The input $I_S$ combines the trained mean prediction with the exact initial spatial head. These symbols identify complete native-plus-summary inputs in Table~\ref{tab:spatial-complete-arms}. They are distinct from the standalone teacher bags in Table~\ref{tab:teacher-targets}.

The three neural representations use the same trained mean head. Only the spatial output differs, through omission, the trained head or its initial state. Comparing trained and initial spatial outputs tests the contribution of distillation. Comparing trained outputs with omission tests whether supplying them helps beyond the predicted mean. Teacher spatial components improve the average primary score over the teacher mean in several datasets, with adverse directions in others. Neural spatial predictions also have mixed effects. These comparisons do not establish a general primary-score advantage for retaining spatial variation. Appendix~\ref{app:spatial-learning} develops the NLL findings using the same inputs.

\begin{table}[h]
\centering\footnotesize
\setlength{\tabcolsep}{2pt}
\caption{Task performance when all feature channels are retained. Every input contains the native base, with different teacher or neural summaries in the remaining coordinates. The reference is $Z=[b_i;0_{6144}]$. The upper and lower panels report primary scores and NLL separately.}
\label{tab:spatial-complete-arms}
\begin{tabular}{@{}lcccccc@{}}\toprule

\multicolumn{7}{c}{\textbf{Primary metric, higher is better}}\\\midrule
Dataset & \shortstack{Native \five{}\\$Z$} & \shortstack{Teacher mean\\$M$} & \shortstack{Teacher spatial\\$S$} & \shortstack{Predicted mean\\$\widehat M$} & \shortstack{Predicted spatial\\$\widehat S$} & \shortstack{Initial spatial\\$I_S$}\\\midrule
CAM16 & \shortstack{$0.8722$\\$\pm 0.0275$} & \cellcolor{scorebetter}\shortstack{$\mathbf{0.8955}$\\$\pm 0.0126$} & \cellcolor{scoreworse}\shortstack{$0.8653$\\$\pm 0.0389$} & \cellcolor{scoreworse}\shortstack{$0.8341$\\$\pm 0.0620$} & \cellcolor{scorebetter}\shortstack{$0.8835$\\$\pm 0.0272$} & \cellcolor{scoreworse}\shortstack{$0.8463$\\$\pm 0.0226$}\\
CAM17 & \shortstack{$\mathbf{0.5261}$\\$\pm 0.0418$} & \cellcolor{scoreworse}\shortstack{$0.5111$\\$\pm 0.0505$} & \cellcolor{scoreworse}\shortstack{$0.5242$\\$\pm 0.0357$} & \cellcolor{scoreworse}\shortstack{$0.4968$\\$\pm 0.0308$} & \cellcolor{scoreworse}\shortstack{$0.5042$\\$\pm 0.0407$} & \cellcolor{scoreworse}\shortstack{$0.5092$\\$\pm 0.0425$}\\
Private-CRC & \shortstack{$0.8284$\\$\pm 0.0179$} & \cellcolor{scoreworse}\shortstack{$0.8097$\\$\pm 0.0475$} & \cellcolor{scorebetter}\shortstack{$0.8313$\\$\pm 0.0318$} & \cellcolor{scoreworse}\shortstack{$0.8209$\\$\pm 0.0286$} & \cellcolor{scoreworse}\shortstack{$0.8258$\\$\pm 0.0411$} & \cellcolor{scorebetter}\shortstack{$\mathbf{0.8345}$\\$\pm 0.0274$}\\
PANDA & \shortstack{$0.8467$\\$\pm 0.0131$} & \cellcolor{scorebetter}\shortstack{$0.8566$\\$\pm 0.0069$} & \cellcolor{scorebetter}\shortstack{$\mathbf{0.8608}$\\$\pm 0.0089$} & \cellcolor{scoreworse}\shortstack{$0.8460$\\$\pm 0.0119$} & \cellcolor{scoreworse}\shortstack{$0.8446$\\$\pm 0.0139$} & \cellcolor{scoreworse}\shortstack{$0.8385$\\$\pm 0.0160$}\\
BRACS & \shortstack{$0.4276$\\$\pm 0.0577$} & \cellcolor{scoreworse}\shortstack{$0.3966$\\$\pm 0.0875$} & \cellcolor{scorebetter}\shortstack{$\mathbf{0.4414}$\\$\pm 0.0755$} & \cellcolor{scorebetter}\shortstack{$0.4398$\\$\pm 0.0827$} & \cellcolor{scoreworse}\shortstack{$0.4224$\\$\pm 0.0871$} & \cellcolor{scoreworse}\shortstack{$0.4147$\\$\pm 0.0525$}\\
KIRC & \shortstack{$0.6986$\\$\pm 0.0340$} & \cellcolor{scoreworse}\shortstack{$0.6945$\\$\pm 0.0614$} & \cellcolor{scoreworse}\shortstack{$0.6982$\\$\pm 0.0640$} & \cellcolor{scoreworse}\shortstack{$0.6976$\\$\pm 0.0242$} & \cellcolor{scoreworse}\shortstack{$0.6801$\\$\pm 0.0490$} & \cellcolor{scorebetter}\shortstack{$\mathbf{0.7123}$\\$\pm 0.0358$}\\
KIRP & \shortstack{$0.8073$\\$\pm 0.1143$} & \cellcolor{scoreworse}\shortstack{$0.7684$\\$\pm 0.1740$} & \cellcolor{scorebetter}\shortstack{$\mathbf{0.8546}$\\$\pm 0.0371$} & \cellcolor{scoreworse}\shortstack{$0.8007$\\$\pm 0.1032$} & \cellcolor{scorebetter}\shortstack{$0.8086$\\$\pm 0.0982$} & \cellcolor{scoreworse}\shortstack{$0.7732$\\$\pm 0.1654$}\\
LUAD & \shortstack{$0.5660$\\$\pm 0.0362$} & \cellcolor{scoreworse}\shortstack{$0.5423$\\$\pm 0.0707$} & \cellcolor{scoreworse}\shortstack{$0.5635$\\$\pm 0.0268$} & \cellcolor{scoreworse}\shortstack{$0.5440$\\$\pm 0.0359$} & \cellcolor{scoreworse}\shortstack{$0.5332$\\$\pm 0.0548$} & \cellcolor{scorebetter}\shortstack{$\mathbf{0.5729}$\\$\pm 0.0334$}\\
STAD & \shortstack{$0.5164$\\$\pm 0.0626$} & \cellcolor{scorebetter}\shortstack{$\mathbf{0.5279}$\\$\pm 0.0293$} & \cellcolor{scoreworse}\shortstack{$0.5037$\\$\pm 0.0597$} & \cellcolor{scoreworse}\shortstack{$0.5054$\\$\pm 0.0506$} & \cellcolor{scorebetter}\shortstack{$0.5193$\\$\pm 0.0430$} & \cellcolor{scorebetter}\shortstack{$0.5179$\\$\pm 0.0436$}\\
UCEC & \shortstack{$0.6218$\\$\pm 0.0866$} & \cellcolor{scorebetter}\shortstack{$0.6282$\\$\pm 0.0799$} & \cellcolor{scoreworse}\shortstack{$0.6198$\\$\pm 0.0800$} & \cellcolor{scorebetter}\shortstack{$0.6424$\\$\pm 0.1200$} & \cellcolor{scorebetter}\shortstack{$0.6378$\\$\pm 0.0750$} & \cellcolor{scorebetter}\shortstack{$\mathbf{0.6516}$\\$\pm 0.0658$}\\
\midrule
\multicolumn{7}{c}{\textbf{NLL, lower is better}}\\\midrule
Dataset & \shortstack{Native \five{}\\$Z$} & \shortstack{Teacher mean\\$M$} & \shortstack{Teacher spatial\\$S$} & \shortstack{Predicted mean\\$\widehat M$} & \shortstack{Predicted spatial\\$\widehat S$} & \shortstack{Initial spatial\\$I_S$}\\\midrule
CAM16 & \shortstack{$\mathbf{0.5407}$\\$\pm 0.1000$} & \cellcolor{scoreworse}\shortstack{$0.5841$\\$\pm 0.2263$} & \cellcolor{scoreworse}\shortstack{$0.6975$\\$\pm 0.2300$} & \cellcolor{scoreworse}\shortstack{$0.6398$\\$\pm 0.0860$} & \cellcolor{scoreworse}\shortstack{$0.6897$\\$\pm 0.2603$} & \cellcolor{scoreworse}\shortstack{$0.6895$\\$\pm 0.1239$}\\
CAM17 & \shortstack{$0.7237$\\$\pm 0.2901$} & \cellcolor{scoreworse}\shortstack{$0.7563$\\$\pm 0.2721$} & \cellcolor{scoreworse}\shortstack{$0.8016$\\$\pm 0.2640$} & \cellcolor{scoreworse}\shortstack{$1.1302$\\$\pm 0.5135$} & \cellcolor{scoreworse}\shortstack{$0.9913$\\$\pm 0.4640$} & \cellcolor{scorebetter}\shortstack{$\mathbf{0.7099}$\\$\pm 0.1403$}\\
Private-CRC & \shortstack{$0.6161$\\$\pm 0.1126$} & \cellcolor{scoreworse}\shortstack{$0.6279$\\$\pm 0.0905$} & \cellcolor{scorebetter}\shortstack{$0.5790$\\$\pm 0.0802$} & \cellcolor{scorebetter}\shortstack{$\mathbf{0.5591}$\\$\pm 0.0641$} & \cellcolor{scorebetter}\shortstack{$0.6064$\\$\pm 0.1424$} & \cellcolor{scorebetter}\shortstack{$0.5768$\\$\pm 0.1226$}\\
PANDA & \shortstack{$1.0584$\\$\pm 0.4364$} & \cellcolor{scorebetter}\shortstack{$0.9727$\\$\pm 0.1695$} & \cellcolor{scoreworse}\shortstack{$1.3085$\\$\pm 0.7090$} & \cellcolor{scorebetter}\shortstack{$0.9212$\\$\pm 0.0914$} & \cellcolor{scorebetter}\shortstack{$\mathbf{0.8832}$\\$\pm 0.0301$} & \cellcolor{scorebetter}\shortstack{$0.9300$\\$\pm 0.0329$}\\
BRACS & \shortstack{$2.1090$\\$\pm 0.5462$} & \cellcolor{scorebetter}\shortstack{$1.4396$\\$\pm 0.3561$} & \cellcolor{scorebetter}\shortstack{$2.0671$\\$\pm 0.5976$} & \cellcolor{scorebetter}\shortstack{$1.4789$\\$\pm 0.2412$} & \cellcolor{scorebetter}\shortstack{$\mathbf{1.4129}$\\$\pm 0.2799$} & \cellcolor{scorebetter}\shortstack{$1.7211$\\$\pm 0.5466$}\\
KIRC & \shortstack{$\mathbf{1.6016}$\\$\pm 0.1369$} & \cellcolor{scoreworse}\shortstack{$1.9291$\\$\pm 0.2473$} & \cellcolor{scoreworse}\shortstack{$1.8049$\\$\pm 0.4113$} & \cellcolor{scoreworse}\shortstack{$1.7670$\\$\pm 0.1869$} & \cellcolor{scoreworse}\shortstack{$1.7589$\\$\pm 0.2831$} & \cellcolor{scoreworse}\shortstack{$2.1189$\\$\pm 0.4656$}\\
KIRP & \shortstack{$1.2996$\\$\pm 0.1336$} & \cellcolor{scorebetter}\shortstack{$1.1313$\\$\pm 0.0762$} & \cellcolor{scorebetter}\shortstack{$\mathbf{1.0616}$\\$\pm 0.1776$} & \cellcolor{scorebetter}\shortstack{$1.2368$\\$\pm 0.1122$} & \cellcolor{scorebetter}\shortstack{$1.1533$\\$\pm 0.2994$} & \cellcolor{scoreworse}\shortstack{$1.3178$\\$\pm 0.2141$}\\
LUAD & \shortstack{$\mathbf{1.9082}$\\$\pm 0.2802$} & \cellcolor{scoreworse}\shortstack{$2.2922$\\$\pm 0.5052$} & \cellcolor{scoreworse}\shortstack{$2.3800$\\$\pm 0.5268$} & \cellcolor{scoreworse}\shortstack{$2.4893$\\$\pm 0.7290$} & \cellcolor{scoreworse}\shortstack{$1.9505$\\$\pm 0.3098$} & \cellcolor{scoreworse}\shortstack{$2.2323$\\$\pm 0.6362$}\\
STAD & \shortstack{$2.4946$\\$\pm 0.5711$} & \cellcolor{scoreworse}\shortstack{$2.6962$\\$\pm 0.5397$} & \cellcolor{scorebetter}\shortstack{$2.3705$\\$\pm 0.5052$} & \cellcolor{scorebetter}\shortstack{$\mathbf{2.3427}$\\$\pm 0.4329$} & \cellcolor{scoreworse}\shortstack{$2.5259$\\$\pm 0.4128$} & \cellcolor{scorebetter}\shortstack{$2.4605$\\$\pm 0.3984$}\\
UCEC & \shortstack{$1.0279$\\$\pm 0.1880$} & \cellcolor{scoreworse}\shortstack{$1.0393$\\$\pm 0.2126$} & \cellcolor{scorebetter}\shortstack{$1.0164$\\$\pm 0.1590$} & \cellcolor{scorebetter}\shortstack{$\mathbf{0.9815}$\\$\pm 0.0727$} & \cellcolor{scorebetter}\shortstack{$0.9907$\\$\pm 0.1317$} & \cellcolor{scorebetter}\shortstack{$1.0041$\\$\pm 0.1715$}\\
\bottomrule\end{tabular}
\end{table}

\paragraph{Reconstruction by the downstream predictors.}
To connect the task results to their Stage~1 training, Table~\ref{tab:spatial-complete-source} reports reconstruction by the same mean and spatial heads. Each trains and selects checkpoints using its own component MSE. Every mean head improves on the training target mean in the selection groups, while every spatial head has higher error relative to that constant. Both improve on their own initialization, and all choose step 512, the first eligible checkpoint. Stage~1 training can reduce spatial error without overcoming the constant reference.

These groups select the checkpoints, so their errors describe model selection rather than independent generalization. Independent component errors are unavailable for these heads. The projected diagnostic in Section~\ref{sec:transfer_results} evaluates different predictors and cannot supply that missing evidence.

\begin{table}[h]
\centering\footnotesize
\setlength{\tabcolsep}{3pt}
\caption{Reconstruction by the mean and spatial heads used downstream. MSE is evaluated on the groups used for checkpoint selection, with the constant equal to the training target mean for that component. Lower is better. Mean and spatial errors use their own scales.}
\label{tab:spatial-complete-source}
\begin{tabular}{@{}lcccc@{}}\toprule
& \multicolumn{2}{c}{Regional mean} & \multicolumn{2}{c}{Nonconstant components}\\
\cmidrule(lr){2-3}\cmidrule(l){4-5}
Excluded dataset & Predictor & Constant & Predictor & Constant\\\midrule
CAM16 & \cellcolor{scorebetter}$\mathbf{0.3538}\!\pm\!0.0079$ & $0.6935\!\pm\!0.0160$ & \cellcolor{scoreworse}$0.0730\!\pm\!0.0018$ & $\mathbf{0.0563}\!\pm\!0.0012$\\
CAM17 & \cellcolor{scorebetter}$\mathbf{0.3461}\!\pm\!0.0107$ & $0.6956\!\pm\!0.0149$ & \cellcolor{scoreworse}$0.0713\!\pm\!0.0018$ & $\mathbf{0.0553}\!\pm\!0.0012$\\
Private-CRC & \cellcolor{scorebetter}$\mathbf{0.3812}\!\pm\!0.0102$ & $0.7227\!\pm\!0.0231$ & \cellcolor{scoreworse}$0.0752\!\pm\!0.0017$ & $\mathbf{0.0555}\!\pm\!0.0008$\\
PANDA & \cellcolor{scorebetter}$\mathbf{0.3565}\!\pm\!0.0070$ & $0.7000\!\pm\!0.0179$ & \cellcolor{scoreworse}$0.0694\!\pm\!0.0019$ & $\mathbf{0.0508}\!\pm\!0.0017$\\
BRACS & \cellcolor{scorebetter}$\mathbf{0.3450}\!\pm\!0.0070$ & $0.6702\!\pm\!0.0130$ & \cellcolor{scoreworse}$0.0718\!\pm\!0.0020$ & $\mathbf{0.0556}\!\pm\!0.0015$\\
KIRC & \cellcolor{scorebetter}$\mathbf{0.3460}\!\pm\!0.0083$ & $0.6745\!\pm\!0.0065$ & \cellcolor{scoreworse}$0.0747\!\pm\!0.0020$ & $\mathbf{0.0575}\!\pm\!0.0014$\\
KIRP & \cellcolor{scorebetter}$\mathbf{0.3360}\!\pm\!0.0083$ & $0.6798\!\pm\!0.0175$ & \cellcolor{scoreworse}$0.0737\!\pm\!0.0023$ & $\mathbf{0.0569}\!\pm\!0.0015$\\
LUAD & \cellcolor{scorebetter}$\mathbf{0.3432}\!\pm\!0.0105$ & $0.7001\!\pm\!0.0201$ & \cellcolor{scoreworse}$0.0737\!\pm\!0.0021$ & $\mathbf{0.0571}\!\pm\!0.0017$\\
STAD & \cellcolor{scorebetter}$\mathbf{0.3312}\!\pm\!0.0067$ & $0.6879\!\pm\!0.0151$ & \cellcolor{scoreworse}$0.0723\!\pm\!0.0018$ & $\mathbf{0.0562}\!\pm\!0.0014$\\
UCEC & \cellcolor{scorebetter}$\mathbf{0.3288}\!\pm\!0.0106$ & $0.6676\!\pm\!0.0110$ & \cellcolor{scoreworse}$0.0729\!\pm\!0.0021$ & $\mathbf{0.0566}\!\pm\!0.0015$\\
\bottomrule\end{tabular}
\end{table}

\subsection{An alternative way to deliver spatial summaries}
\label{app:compression-all}
We also compare three summaries that retain every feature channel. The first decodes the low frequency components, the second repeats the regional mean four times, and the third decodes components after permuting the sixteen teacher positions. Concatenating the four 1,536-coordinate vectors with the 2,304-coordinate native base gives one 8,448-coordinate token per region. All three use the same normalization estimated from distillation training data, delivery scale, fields and task training within each realization, for 150 fits across ten datasets. A region's permutation is unchanged throughout the experiment and preserves its mean, but changes the retained frequency energy. This study has its own calibration on distillation training data, so comparison with Appendix~\ref{app:n3-joint} does not isolate a single design choice.

\begin{table}[h]
\centering\footnotesize
\setlength{\tabcolsep}{3pt}
\caption{Task performance with three teacher summaries at a common input width of 8,448. Every input retains the native base and appends four vectors from the same observed \twenty{} field. The repeated regional mean is the reference. Higher is better.}
\label{tab:compression-all}
\begin{tabular}{@{}llccc@{}}\toprule
Dataset & Metric & Repeated mean & Spatial summary & Permuted summary\\\midrule
CAM16 & AUC & $\mathbf{0.8820}\!\pm\!0.0287$ & \cellcolor{scoreworse}$0.8716\!\pm\!0.0170$ & \cellcolor{scoreworse}$0.8730\!\pm\!0.0563$\\
CAM17 & $F_1$ & $\mathbf{0.5141}\!\pm\!0.0307$ & \cellcolor{scoreworse}$0.5038\!\pm\!0.0433$ & \cellcolor{scoreworse}$0.5126\!\pm\!0.0409$\\
Private-CRC & $F_1$ & $\mathbf{0.8575}\!\pm\!0.0194$ & \cellcolor{scoreworse}$0.8491\!\pm\!0.0396$ & \cellcolor{scoreworse}$0.8455\!\pm\!0.0264$\\
PANDA & $\kappa$ & $0.8569\!\pm\!0.0082$ & \cellcolor{scorebetter}$\mathbf{0.8609}\!\pm\!0.0118$ & \cellcolor{scorebetter}$0.8607\!\pm\!0.0110$\\
BRACS & $F_1$ & $\mathbf{0.4319}\!\pm\!0.0873$ & \cellcolor{scoreworse}$0.4138\!\pm\!0.0936$ & \cellcolor{scoreworse}$0.4094\!\pm\!0.1125$\\
KIRC & C-index & $\mathbf{0.7163}\!\pm\!0.0448$ & \cellcolor{scoreworse}$0.6978\!\pm\!0.0569$ & \cellcolor{scoreworse}$0.7013\!\pm\!0.0607$\\
KIRP & C-index & $0.7748\!\pm\!0.1123$ & \cellcolor{scorebetter}$\mathbf{0.8447}\!\pm\!0.1206$ & \cellcolor{scorebetter}$0.8212\!\pm\!0.0797$\\
LUAD & C-index & $0.5636\!\pm\!0.0234$ & \cellcolor{scorebetter}$\mathbf{0.5822}\!\pm\!0.0307$ & \cellcolor{scorebetter}$0.5638\!\pm\!0.0433$\\
STAD & C-index & $0.5394\!\pm\!0.0307$ & \cellcolor{scoreworse}$0.5095\!\pm\!0.0417$ & \cellcolor{scorebetter}$\mathbf{0.5448}\!\pm\!0.0413$\\
UCEC & C-index & $0.6089\!\pm\!0.1076$ & \cellcolor{scorebetter}$\mathbf{0.6313}\!\pm\!0.1085$ & \cellcolor{scoreworse}$0.5952\!\pm\!0.1106$\\
\bottomrule\end{tabular}
\end{table}

Table~\ref{tab:compression-all} shows that spatial summaries improve average task performance in some datasets and reduce it in others. Permuting the teacher positions can also change that ordering. The results do not establish a common benefit from preserving spatial order. The frequency pattern motivates a compact spatial target, but the downstream comparison is still needed to determine whether its retained variation helps the task.

\subsection{Regional means across task models}
\label{app:mil-robustness}
To test whether regional mean utility persists across task models, we compare TransMIL~\citep{TransMIL}, gated ABMIL~\citep{ilse2018attention} and RRT-MIL~\citep{tang2024feature} on all ten datasets and DSMIL~\citep{li2021dsmil} on the five classification datasets. All receive frozen features. DSMIL combines instance and bag losses equally, uses mutually exclusive cross entropy and evaluates bag logits. Survival models train with discrete time NLL, then average slide risks within each case for C-index. The comparison covers complete training procedures whose differences extend beyond architecture.

The distillation realization and task split use seed 42. Hyperparameter selection uses task seed 1001 and four candidates that cross learning rate multipliers $\{0.5,1\}$ with weight decay multipliers $\{0.1,1\}$. The mean validation metric across representations selects one configuration per dataset and recipient. Evaluation then uses this configuration with five new task initializations, numbered 1002 through 1006. Initial weights match across input representations within each recipient and seed. Training lasts 15 epochs for CAM16, CAM17, Private-CRC and BRACS, and 20 for PANDA and survival. All inputs use the same 49,152-token limit and the dataset's validation selection rule.

Table~\ref{tab:recipient-means-absolute} uses the historical shared support, where every region has at least one aligned \twenty{} feature. Patch counts may vary, unlike the complete fields in Table~\ref{tab:teacher-targets}. Native and mean inputs are compared within each task model. All four models favor the regional mean on average for CAM16 and PANDA, while the direction varies elsewhere. These results show that the mean can help multiple recipients without identifying a consistently superior recipient for teacher information. Five task initializations on one split describe training variability and do not establish a general ranking of architectures.

\begin{table}[h]
\centering\footnotesize
\setlength{\tabcolsep}{2pt}
\caption{Native \five{} and regional \twenty{} mean inputs across downstream models. Each model uses its own native reference on the same tissue support. Higher is better. Five task initializations share one data split. DSMIL is evaluated only for classification and grading.}
\label{tab:recipient-means-absolute}
\begin{tabular}{@{}lcccccccc@{}}\toprule
& \multicolumn{2}{c}{TransMIL} & \multicolumn{2}{c}{Gated ABMIL} & \multicolumn{2}{c}{DSMIL} & \multicolumn{2}{c}{RRT-MIL}\\
\cmidrule(lr){2-3}\cmidrule(lr){4-5}\cmidrule(lr){6-7}\cmidrule(l){8-9}
Dataset & Native \five{} & Mean \twenty{} & Native \five{} & Mean \twenty{} & Native \five{} & Mean \twenty{} & Native \five{} & Mean \twenty{}\\\midrule
CAM16 & \shortstack{$0.9605$\\$\pm 0.0240$} & \cellcolor{scorebetter}\shortstack{$\mathbf{0.9642}$\\$\pm 0.0205$} & \shortstack{$0.9537$\\$\pm 0.0173$} & \cellcolor{scorebetter}\shortstack{$\mathbf{0.9705}$\\$\pm 0.0128$} & \shortstack{$0.9580$\\$\pm 0.0036$} & \cellcolor{scorebetter}\shortstack{$\mathbf{0.9759}$\\$\pm 0.0097$} & \shortstack{$0.9494$\\$\pm 0.0229$} & \cellcolor{scorebetter}\shortstack{$\mathbf{0.9682}$\\$\pm 0.0073$}\\
CAM17 & \shortstack{$\mathbf{0.6743}$\\$\pm 0.0552$} & \cellcolor{scoreworse}\shortstack{$0.6325$\\$\pm 0.0212$} & \shortstack{$\mathbf{0.6352}$\\$\pm 0.0259$} & \cellcolor{scoreworse}\shortstack{$0.5690$\\$\pm 0.0300$} & \shortstack{$\mathbf{0.5522}$\\$\pm 0.0537$} & \cellcolor{scoreworse}\shortstack{$0.5373$\\$\pm 0.0712$} & \shortstack{$\mathbf{0.6294}$\\$\pm 0.0313$} & \cellcolor{scoreworse}\shortstack{$0.5992$\\$\pm 0.0500$}\\
Private-CRC & \shortstack{$0.8010$\\$\pm 0.0317$} & \cellcolor{scorebetter}\shortstack{$\mathbf{0.8497}$\\$\pm 0.0134$} & \shortstack{$\mathbf{0.8288}$\\$\pm 0.0250$} & \cellcolor{scoreworse}\shortstack{$0.8093$\\$\pm 0.0262$} & \shortstack{$0.8397$\\$\pm 0.0114$} & \cellcolor{scorebetter}\shortstack{$\mathbf{0.8444}$\\$\pm 0.0126$} & \shortstack{$\mathbf{0.8443}$\\$\pm 0.0123$} & \cellcolor{scoreworse}\shortstack{$0.8394$\\$\pm 0.0347$}\\
PANDA & \shortstack{$0.8509$\\$\pm 0.0040$} & \cellcolor{scorebetter}\shortstack{$\mathbf{0.8630}$\\$\pm 0.0089$} & \shortstack{$0.8363$\\$\pm 0.0038$} & \cellcolor{scorebetter}\shortstack{$\mathbf{0.8518}$\\$\pm 0.0045$} & \shortstack{$0.8612$\\$\pm 0.0041$} & \cellcolor{scorebetter}\shortstack{$\mathbf{0.8833}$\\$\pm 0.0085$} & \shortstack{$0.8488$\\$\pm 0.0050$} & \cellcolor{scorebetter}\shortstack{$\mathbf{0.8832}$\\$\pm 0.0044$}\\
BRACS & \shortstack{$0.3709$\\$\pm 0.0319$} & \cellcolor{scorebetter}\shortstack{$\mathbf{0.4035}$\\$\pm 0.0246$} & \shortstack{$\mathbf{0.4141}$\\$\pm 0.0530$} & \cellcolor{scoreworse}\shortstack{$0.3753$\\$\pm 0.0155$} & \shortstack{$\mathbf{0.4278}$\\$\pm 0.0360$} & \cellcolor{scoreworse}\shortstack{$0.3490$\\$\pm 0.0322$} & \shortstack{$\mathbf{0.4603}$\\$\pm 0.0303$} & \cellcolor{scoreworse}\shortstack{$0.4116$\\$\pm 0.0282$}\\
KIRC & \shortstack{$\mathbf{0.6800}$\\$\pm 0.0147$} & \cellcolor{scoreworse}\shortstack{$0.6768$\\$\pm 0.0261$} & \shortstack{$\mathbf{0.6787}$\\$\pm 0.0221$} & \cellcolor{scoreworse}\shortstack{$0.6355$\\$\pm 0.0083$} & \multicolumn{2}{c}{Not evaluated} & \shortstack{$\mathbf{0.6988}$\\$\pm 0.0290$} & \cellcolor{scoreworse}\shortstack{$0.6846$\\$\pm 0.0322$}\\
KIRP & \shortstack{$\mathbf{0.8437}$\\$\pm 0.0710$} & \cellcolor{scoreworse}\shortstack{$0.7603$\\$\pm 0.0578$} & \shortstack{$\mathbf{0.8563}$\\$\pm 0.0165$} & \cellcolor{scoreworse}\shortstack{$0.8000$\\$\pm 0.0471$} & \multicolumn{2}{c}{Not evaluated} & \shortstack{$\mathbf{0.8429}$\\$\pm 0.0821$} & \cellcolor{scoreworse}\shortstack{$0.7889$\\$\pm 0.0490$}\\
LUAD & \shortstack{$0.5559$\\$\pm 0.0340$} & \cellcolor{scorebetter}\shortstack{$\mathbf{0.5768}$\\$\pm 0.0541$} & \shortstack{$\mathbf{0.5677}$\\$\pm 0.0190$} & \cellcolor{scoreworse}\shortstack{$0.5565$\\$\pm 0.0144$} & \multicolumn{2}{c}{Not evaluated} & \shortstack{$0.5741$\\$\pm 0.0729$} & \cellcolor{scorebetter}\shortstack{$\mathbf{0.5763}$\\$\pm 0.0369$}\\
STAD & \shortstack{$0.5241$\\$\pm 0.0185$} & \cellcolor{scorebetter}\shortstack{$\mathbf{0.5589}$\\$\pm 0.0355$} & \shortstack{$\mathbf{0.5379}$\\$\pm 0.0130$} & \cellcolor{scoreworse}\shortstack{$0.5318$\\$\pm 0.0152$} & \multicolumn{2}{c}{Not evaluated} & \shortstack{$0.5986$\\$\pm 0.0381$} & \cellcolor{scorebetter}\shortstack{$\mathbf{0.6008}$\\$\pm 0.0218$}\\
UCEC & \shortstack{$0.5976$\\$\pm 0.0476$} & \cellcolor{scorebetter}\shortstack{$\mathbf{0.6000}$\\$\pm 0.0466$} & \shortstack{$0.4955$\\$\pm 0.0251$} & \cellcolor{scorebetter}\shortstack{$\mathbf{0.4972}$\\$\pm 0.0053$} & \multicolumn{2}{c}{Not evaluated} & \shortstack{$\mathbf{0.5773}$\\$\pm 0.0521$} & \cellcolor{scoreworse}\shortstack{$0.5263$\\$\pm 0.0557$}\\
\bottomrule\end{tabular}
\end{table}

\clearpage
\section{Neural Prediction of Teacher Targets}
\label{app:prediction-evidence}
This appendix examines prediction of high magnification targets from low magnification inputs. We first evaluate raw regional means on independent test fields, then separate the errors of projected mean and spatial components. Additional studies examine error correction, checkpoint selection and local image features. Each target retains its own coordinates and evaluation population.
\subsection{Regional mean prediction on independent test fields}
\label{app:raw-test-prediction}
To test generalization of regional mean prediction, we reuse the selected direct and residual heads from Appendix~\ref{app:all-cohort-fidelity}, with their original transformations and constant reference, the training target mean. Each realization samples eight groups from the excluded dataset and eight from each of the nine distillation datasets, taking at most 32 complete fields per group. The resulting task-test samples contain 231 to 256 fields from the excluded dataset and 2,278 to 2,304 from distillation datasets. None of these groups contributed to the corresponding distillation training or validation.

Group identities follow the available records. CAM17 uses challenge groups, CAM16 and Private-CRC use slides, PANDA uses images, and the other datasets use patient or case identifiers where available. The analysis preserves separation and weighting at these recorded units. Slide or image separation alone does not establish patient independence.

We reverse the distillation transformations before computing MSE in the original 1,536-coordinate UNI2-h space defined by Equation~\ref{eq:raw-reconstruction-error}. Evaluation within the distillation datasets weights datasets equally, groups equally within a dataset and fields equally within a group. Evaluation on the excluded dataset uses the latter two levels. Two references make the errors interpretable. The native reference returns the region's original \five{} vector, while the constant returns the training target mean estimated from the original distillation training sample. Table~\ref{tab:raw-test-errors} reports their errors alongside all four trained predictors.

\begin{table}[htbp]
\centering\footnotesize
\setlength{\tabcolsep}{2pt}
\caption{Raw regional mean MSE on independent test fields from the excluded dataset. Native \five{} is the reference. Zero and random identify final-layer initialization. The constant $\bar m_{\mathrm{train}}$ is estimated from distillation training targets.}
\label{tab:raw-test-errors}
\begin{tabular}{@{}lcccccc@{}}\toprule
 & \shortstack{Native \five{}\\$x_i$} & \shortstack{Constant\\$\bar m_{\mathrm{train}}$} & \multicolumn{2}{c}{\shortstack{Direct\\$f_\theta(B)_i$}} & \multicolumn{2}{c}{\shortstack{Residual\\$x_i+r_\phi(B)_i$}}\\
\cmidrule(lr){4-5}\cmidrule(l){6-7}
Dataset & & & Zero & Random & Zero & Random\\\midrule
CAM16 & \shortstack{$0.1759$\\$\pm 0.0103$} & \cellcolor{scorebetter}\shortstack{$0.0875$\\$\pm 0.0079$} & \cellcolor{scorebetter}\shortstack{$\mathbf{0.0667}$\\$\pm 0.0067$} & \cellcolor{scorebetter}\shortstack{$0.0686$\\$\pm 0.0066$} & \cellcolor{scorebetter}\shortstack{$0.0935$\\$\pm 0.0061$} & \cellcolor{scorebetter}\shortstack{$0.0972$\\$\pm 0.0076$}\\
CAM17 & \shortstack{$0.1939$\\$\pm 0.0044$} & \cellcolor{scorebetter}\shortstack{$0.0912$\\$\pm 0.0038$} & \cellcolor{scorebetter}\shortstack{$\mathbf{0.0776}$\\$\pm 0.0049$} & \cellcolor{scorebetter}\shortstack{$0.0801$\\$\pm 0.0053$} & \cellcolor{scorebetter}\shortstack{$0.1064$\\$\pm 0.0048$} & \cellcolor{scorebetter}\shortstack{$0.1100$\\$\pm 0.0045$}\\
Private-CRC & \shortstack{$0.1876$\\$\pm 0.0162$} & \cellcolor{scorebetter}\shortstack{$\mathbf{0.1051}$\\$\pm 0.0034$} & \cellcolor{scorebetter}\shortstack{$0.1331$\\$\pm 0.0040$} & \cellcolor{scorebetter}\shortstack{$0.1551$\\$\pm 0.0085$} & \cellcolor{scorebetter}\shortstack{$0.1675$\\$\pm 0.0171$} & \cellcolor{scoreworse}\shortstack{$0.2017$\\$\pm 0.0235$}\\
PANDA & \shortstack{$0.1554$\\$\pm 0.0067$} & \cellcolor{scorebetter}\shortstack{$\mathbf{0.0837}$\\$\pm 0.0071$} & \cellcolor{scorebetter}\shortstack{$0.0988$\\$\pm 0.0120$} & \cellcolor{scorebetter}\shortstack{$0.1088$\\$\pm 0.0112$} & \cellcolor{scorebetter}\shortstack{$0.1231$\\$\pm 0.0107$} & \cellcolor{scorebetter}\shortstack{$0.1364$\\$\pm 0.0106$}\\
BRACS & \shortstack{$0.1882$\\$\pm 0.0094$} & \cellcolor{scorebetter}\shortstack{$0.1099$\\$\pm 0.0080$} & \cellcolor{scorebetter}\shortstack{$\mathbf{0.0936}$\\$\pm 0.0082$} & \cellcolor{scorebetter}\shortstack{$0.0941$\\$\pm 0.0089$} & \cellcolor{scorebetter}\shortstack{$0.1102$\\$\pm 0.0065$} & \cellcolor{scorebetter}\shortstack{$0.1175$\\$\pm 0.0074$}\\
KIRC & \shortstack{$0.1697$\\$\pm 0.0020$} & \cellcolor{scorebetter}\shortstack{$0.1218$\\$\pm 0.0045$} & \cellcolor{scorebetter}\shortstack{$\mathbf{0.1084}$\\$\pm 0.0089$} & \cellcolor{scorebetter}\shortstack{$0.1132$\\$\pm 0.0096$} & \cellcolor{scorebetter}\shortstack{$0.1182$\\$\pm 0.0099$} & \cellcolor{scorebetter}\shortstack{$0.1263$\\$\pm 0.0104$}\\
KIRP & \shortstack{$0.1549$\\$\pm 0.0074$} & \cellcolor{scorebetter}\shortstack{$0.1099$\\$\pm 0.0074$} & \cellcolor{scorebetter}\shortstack{$\mathbf{0.0968}$\\$\pm 0.0041$} & \cellcolor{scorebetter}\shortstack{$0.1023$\\$\pm 0.0068$} & \cellcolor{scorebetter}\shortstack{$0.1040$\\$\pm 0.0054$} & \cellcolor{scorebetter}\shortstack{$0.1148$\\$\pm 0.0099$}\\
LUAD & \shortstack{$0.2074$\\$\pm 0.0142$} & \cellcolor{scorebetter}\shortstack{$0.0954$\\$\pm 0.0090$} & \cellcolor{scorebetter}\shortstack{$\mathbf{0.0953}$\\$\pm 0.0104$} & \cellcolor{scorebetter}\shortstack{$0.1041$\\$\pm 0.0136$} & \cellcolor{scorebetter}\shortstack{$0.1333$\\$\pm 0.0131$} & \cellcolor{scorebetter}\shortstack{$0.1504$\\$\pm 0.0182$}\\
STAD & \shortstack{$0.1828$\\$\pm 0.0083$} & \cellcolor{scorebetter}\shortstack{$0.0927$\\$\pm 0.0107$} & \cellcolor{scorebetter}\shortstack{$\mathbf{0.0866}$\\$\pm 0.0123$} & \cellcolor{scorebetter}\shortstack{$0.0902$\\$\pm 0.0131$} & \cellcolor{scorebetter}\shortstack{$0.1080$\\$\pm 0.0117$} & \cellcolor{scorebetter}\shortstack{$0.1144$\\$\pm 0.0129$}\\
UCEC & \shortstack{$0.1857$\\$\pm 0.0093$} & \cellcolor{scorebetter}\shortstack{$\mathbf{0.1118}$\\$\pm 0.0119$} & \cellcolor{scorebetter}\shortstack{$0.1183$\\$\pm 0.0139$} & \cellcolor{scorebetter}\shortstack{$0.1294$\\$\pm 0.0155$} & \cellcolor{scorebetter}\shortstack{$0.1261$\\$\pm 0.0093$} & \cellcolor{scorebetter}\shortstack{$0.1386$\\$\pm 0.0080$}\\
\bottomrule\end{tabular}
\end{table}

All four predictors improve on the constant across the independent test samples from the distillation datasets, but generalization to excluded datasets varies. Direct prediction with zero initialization improves on the constant in seven dataset averages. Residual prediction can approximate the teacher target more accurately than the native feature while remaining less accurate than the constant. These references distinguish progress toward the teacher from prediction of differences between regions.

To test whether prediction depends on correspondence between input and target, we permute complete input tuples within each dataset and evaluation population while keeping targets in place. The native feature, slide context and residual skip move together. All four predictors have lower MSE with aligned inputs in every excluded-dataset realization, as summarized in Table~\ref{tab:raw-test-permutation}. The complete tuple changes, so the comparison does not separate local features from slide context. It evaluates trained heads rather than training on shuffled supervision.

\begin{table}[htbp]
\centering\footnotesize
\setlength{\tabcolsep}{2.5pt}
\caption{Raw regional mean MSE with aligned or permuted input tuples. The native vector, slide context and residual skip are permuted together. Permuted input is the reference within each prediction form.}
\label{tab:raw-test-permutation}
\begin{tabular}{@{}lcccc@{}}\toprule
\multicolumn{5}{c}{\textbf{Zero final-layer initialization}}\\
 & \multicolumn{2}{c}{Direct} & \multicolumn{2}{c}{Residual}\\
\cmidrule(lr){2-3}\cmidrule(l){4-5}
Dataset & Permuted & Aligned & Permuted & Aligned\\\midrule
CAM16 & $0.1205\!\pm\!0.0136$ & \cellcolor{scorebetter}$\mathbf{0.0667}\!\pm\!0.0067$ & $0.1652\!\pm\!0.0131$ & \cellcolor{scorebetter}$\mathbf{0.0935}\!\pm\!0.0061$\\
CAM17 & $0.1216\!\pm\!0.0037$ & \cellcolor{scorebetter}$\mathbf{0.0776}\!\pm\!0.0049$ & $0.1733\!\pm\!0.0039$ & \cellcolor{scorebetter}$\mathbf{0.1064}\!\pm\!0.0048$\\
Private-CRC & $0.1496\!\pm\!0.0041$ & \cellcolor{scorebetter}$\mathbf{0.1331}\!\pm\!0.0040$ & $0.2475\!\pm\!0.0130$ & \cellcolor{scorebetter}$\mathbf{0.1675}\!\pm\!0.0171$\\
PANDA & $0.1145\!\pm\!0.0105$ & \cellcolor{scorebetter}$\mathbf{0.0988}\!\pm\!0.0120$ & $0.1769\!\pm\!0.0123$ & \cellcolor{scorebetter}$\mathbf{0.1231}\!\pm\!0.0107$\\
BRACS & $0.1217\!\pm\!0.0084$ & \cellcolor{scorebetter}$\mathbf{0.0936}\!\pm\!0.0082$ & $0.1719\!\pm\!0.0088$ & \cellcolor{scorebetter}$\mathbf{0.1102}\!\pm\!0.0065$\\
KIRC & $0.1584\!\pm\!0.0029$ & \cellcolor{scorebetter}$\mathbf{0.1084}\!\pm\!0.0089$ & $0.2234\!\pm\!0.0051$ & \cellcolor{scorebetter}$\mathbf{0.1182}\!\pm\!0.0099$\\
KIRP & $0.1447\!\pm\!0.0080$ & \cellcolor{scorebetter}$\mathbf{0.0968}\!\pm\!0.0041$ & $0.1994\!\pm\!0.0064$ & \cellcolor{scorebetter}$\mathbf{0.1040}\!\pm\!0.0054$\\
LUAD & $0.1296\!\pm\!0.0079$ & \cellcolor{scorebetter}$\mathbf{0.0953}\!\pm\!0.0104$ & $0.2090\!\pm\!0.0138$ & \cellcolor{scorebetter}$\mathbf{0.1333}\!\pm\!0.0131$\\
STAD & $0.1226\!\pm\!0.0114$ & \cellcolor{scorebetter}$\mathbf{0.0866}\!\pm\!0.0123$ & $0.1883\!\pm\!0.0185$ & \cellcolor{scorebetter}$\mathbf{0.1080}\!\pm\!0.0117$\\
UCEC & $0.1536\!\pm\!0.0133$ & \cellcolor{scorebetter}$\mathbf{0.1183}\!\pm\!0.0139$ & $0.2152\!\pm\!0.0155$ & \cellcolor{scorebetter}$\mathbf{0.1261}\!\pm\!0.0093$\\
\midrule
\multicolumn{5}{c}{\textbf{Random final-layer initialization}}\\
 & \multicolumn{2}{c}{Direct} & \multicolumn{2}{c}{Residual}\\
\cmidrule(lr){2-3}\cmidrule(l){4-5}
Dataset & Permuted & Aligned & Permuted & Aligned\\\midrule
CAM16 & $0.1229\!\pm\!0.0145$ & \cellcolor{scorebetter}$\mathbf{0.0686}\!\pm\!0.0066$ & $0.1682\!\pm\!0.0149$ & \cellcolor{scorebetter}$\mathbf{0.0972}\!\pm\!0.0076$\\
CAM17 & $0.1246\!\pm\!0.0051$ & \cellcolor{scorebetter}$\mathbf{0.0801}\!\pm\!0.0053$ & $0.1764\!\pm\!0.0045$ & \cellcolor{scorebetter}$\mathbf{0.1100}\!\pm\!0.0045$\\
Private-CRC & $0.1732\!\pm\!0.0083$ & \cellcolor{scorebetter}$\mathbf{0.1551}\!\pm\!0.0085$ & $0.2799\!\pm\!0.0161$ & \cellcolor{scorebetter}$\mathbf{0.2017}\!\pm\!0.0235$\\
PANDA & $0.1256\!\pm\!0.0093$ & \cellcolor{scorebetter}$\mathbf{0.1088}\!\pm\!0.0112$ & $0.1905\!\pm\!0.0115$ & \cellcolor{scorebetter}$\mathbf{0.1364}\!\pm\!0.0106$\\
BRACS & $0.1229\!\pm\!0.0097$ & \cellcolor{scorebetter}$\mathbf{0.0941}\!\pm\!0.0089$ & $0.1783\!\pm\!0.0097$ & \cellcolor{scorebetter}$\mathbf{0.1175}\!\pm\!0.0074$\\
KIRC & $0.1660\!\pm\!0.0044$ & \cellcolor{scorebetter}$\mathbf{0.1132}\!\pm\!0.0096$ & $0.2313\!\pm\!0.0055$ & \cellcolor{scorebetter}$\mathbf{0.1263}\!\pm\!0.0104$\\
KIRP & $0.1509\!\pm\!0.0090$ & \cellcolor{scorebetter}$\mathbf{0.1023}\!\pm\!0.0068$ & $0.2095\!\pm\!0.0081$ & \cellcolor{scorebetter}$\mathbf{0.1148}\!\pm\!0.0099$\\
LUAD & $0.1391\!\pm\!0.0106$ & \cellcolor{scorebetter}$\mathbf{0.1041}\!\pm\!0.0136$ & $0.2244\!\pm\!0.0172$ & \cellcolor{scorebetter}$\mathbf{0.1504}\!\pm\!0.0182$\\
STAD & $0.1271\!\pm\!0.0124$ & \cellcolor{scorebetter}$\mathbf{0.0902}\!\pm\!0.0131$ & $0.1939\!\pm\!0.0187$ & \cellcolor{scorebetter}$\mathbf{0.1144}\!\pm\!0.0129$\\
UCEC & $0.1664\!\pm\!0.0152$ & \cellcolor{scorebetter}$\mathbf{0.1294}\!\pm\!0.0155$ & $0.2263\!\pm\!0.0143$ & \cellcolor{scorebetter}$\mathbf{0.1386}\!\pm\!0.0080$\\
\bottomrule\end{tabular}
\end{table}

Direct prediction has lower MSE than residual prediction in every dataset average under both initializations. Proposition~\ref{prop:conditional-mean} gives the same optimum for unrestricted direct and residual functions. The empirical difference characterizes the finite networks and their optimization rather than a different target optimum.

\subsection{Feature distributions before and after distillation}
\label{app:feature-distribution}
The task scores in Tables~\ref{tab:joint-teacher-main} and~\ref{tab:teacher-targets} compare observed teacher representations through separately trained slide models. Their inputs have no before and after distillation state. To examine learning separately, we compare native \five{} features, observed \twenty{} regional means and the saved initial and trained regional-mean predictors on the independent fields from Appendix~\ref{app:raw-test-prediction}. The same fields, targets and training transformations are used before and after training. Each excluded dataset contributes five realizations with 231 to 256 fields from eight recorded groups per realization. Their total is 12,728 field appearances, with possible overlap across realizations. No task labels enter these measurements and no model is retrained.

For each evaluation field, $m_i$ is the regional teacher mean from Lemma~\ref{lem:shared-output-mean}, and $q_i$ is its prediction. Both use the original $d=1536$ UNI2-h coordinates. Weights $w_i$ give each recorded group equal mass and each field equal weight within its group. They sum to one, giving $\bar m=\sum_iw_im_i$ and $\bar q=\sum_iw_iq_i$. We compare target MSE and the cosine similarity of corresponding vectors. For positive target variance, the centered variance ratio is
\begin{equation}
 v=\frac{\sum_iw_i\|q_i-\bar q\|_2^2}{\sum_iw_i\|m_i-\bar m\|_2^2}.
\label{eq:feature-variance-ratio}
\end{equation}
A ratio of one indicates equal total centered variation, without requiring the distributions or individual predictions to agree. We compute the Pearson correlation between $\|q_i-q_j\|_2$ and $\|m_i-m_j\|_2$ over $i<j$, weighted proportionally to $w_iw_j$. It describes relative distances between regions without requiring identical feature axes, following the representational similarity approach \citep{kriegeskorte2008rsa}. Constant outputs have no defined correlation of pairwise distances. A zero output also has no defined cosine similarity. These cases remain undefined rather than receiving an artificial score.

Raw MSE depends on vector magnitude as well as direction. For nonzero $q_i$ and $m_i$, the Euclidean inner product gives
\begin{equation}
\|q_i-m_i\|_2^2=\|q_i\|_2^2+\|m_i\|_2^2-2\|q_i\|_2\|m_i\|_2\cos(q_i,m_i).
\label{eq:feature-norm-angle}
\end{equation}
Cosine similarity alone does not determine reconstruction error in these unnormalized coordinates. To separate agreement in the common feature average from agreement between regions, we also compute centered MSE. With the averages and weights defined above,
\begin{equation}
\frac1d\sum_iw_i\|q_i-m_i\|_2^2
=\frac1d\|\bar q-\bar m\|_2^2
+\frac1d\sum_iw_i\|(q_i-\bar q)-(m_i-\bar m)\|_2^2.
\label{eq:feature-centered-error}
\end{equation}
The cross term vanishes because each centered representation has weighted mean zero. The first term measures average displacement and the second measures mismatch after each representation's average is removed. Centering uses evaluation vectors only to describe their errors and does not alter the predictor. These quantities distinguish common feature structure from reconstruction of regional differences, without attributing small raw errors to uniformly similar pathology patches.

\begin{table}[htbp]
\centering\footnotesize
\setlength{\tabcolsep}{2pt}
\caption{Feature matching before and after distillation in the original 1,536 coordinates. Both predictors use random final-layer initialization. Native \five{} is the reference for MSE and correlation of pairwise distances. Variance ratios are descriptive, with teacher variation equal to one.}
\label{tab:feature-distribution}
\begin{tabular}{@{}lccccc@{}}\toprule
\multicolumn{6}{c}{\textbf{MSE $\downarrow$}}\\
 & Native \five{} & \multicolumn{2}{c}{Direct} & \multicolumn{2}{c}{Residual}\\
\cmidrule(lr){3-4}\cmidrule(l){5-6}
Dataset & $x_i$ & Initial & Trained & Initial & Trained\\\midrule
CAM16 & $0.1759\!\pm\!0.0103$ & \cellcolor{scorebetter}$0.1206\!\pm\!0.0108$ & \cellcolor{scorebetter}$\mathbf{0.0686}\!\pm\!0.0066$ & \cellcolor{scoreworse}$0.1998\!\pm\!0.0116$ & \cellcolor{scorebetter}$0.0972\!\pm\!0.0076$\\
CAM17 & $0.1939\!\pm\!0.0044$ & \cellcolor{scorebetter}$0.1231\!\pm\!0.0032$ & \cellcolor{scorebetter}$\mathbf{0.0801}\!\pm\!0.0053$ & \cellcolor{scoreworse}$0.2183\!\pm\!0.0045$ & \cellcolor{scorebetter}$0.1100\!\pm\!0.0045$\\
Private-CRC & $0.1876\!\pm\!0.0162$ & \cellcolor{scorebetter}$0.1574\!\pm\!0.0042$ & \cellcolor{scorebetter}$\mathbf{0.1551}\!\pm\!0.0085$ & \cellcolor{scoreworse}$0.2322\!\pm\!0.0128$ & \cellcolor{scoreworse}$0.2017\!\pm\!0.0235$\\
PANDA & $0.1554\!\pm\!0.0067$ & \cellcolor{scorebetter}$0.1183\!\pm\!0.0069$ & \cellcolor{scorebetter}$\mathbf{0.1088}\!\pm\!0.0112$ & \cellcolor{scoreworse}$0.1855\!\pm\!0.0056$ & \cellcolor{scorebetter}$0.1364\!\pm\!0.0106$\\
BRACS & $0.1882\!\pm\!0.0094$ & \cellcolor{scorebetter}$0.1412\!\pm\!0.0096$ & \cellcolor{scorebetter}$\mathbf{0.0941}\!\pm\!0.0089$ & \cellcolor{scoreworse}$0.2111\!\pm\!0.0111$ & \cellcolor{scorebetter}$0.1174\!\pm\!0.0074$\\
KIRC & $0.1697\!\pm\!0.0020$ & \cellcolor{scorebetter}$0.1693\!\pm\!0.0079$ & \cellcolor{scorebetter}$\mathbf{0.1132}\!\pm\!0.0096$ & \cellcolor{scoreworse}$0.1992\!\pm\!0.0033$ & \cellcolor{scorebetter}$0.1263\!\pm\!0.0104$\\
KIRP & $0.1549\!\pm\!0.0074$ & \cellcolor{scorebetter}$0.1486\!\pm\!0.0082$ & \cellcolor{scorebetter}$\mathbf{0.1023}\!\pm\!0.0068$ & \cellcolor{scoreworse}$0.1806\!\pm\!0.0076$ & \cellcolor{scorebetter}$0.1148\!\pm\!0.0099$\\
LUAD & $0.2074\!\pm\!0.0142$ & \cellcolor{scorebetter}$0.1431\!\pm\!0.0134$ & \cellcolor{scorebetter}$\mathbf{0.1041}\!\pm\!0.0136$ & \cellcolor{scoreworse}$0.2398\!\pm\!0.0164$ & \cellcolor{scorebetter}$0.1504\!\pm\!0.0182$\\
STAD & $0.1828\!\pm\!0.0083$ & \cellcolor{scorebetter}$0.1279\!\pm\!0.0104$ & \cellcolor{scorebetter}$\mathbf{0.0902}\!\pm\!0.0131$ & \cellcolor{scoreworse}$0.2047\!\pm\!0.0095$ & \cellcolor{scorebetter}$0.1144\!\pm\!0.0129$\\
UCEC & $0.1857\!\pm\!0.0093$ & \cellcolor{scorebetter}$0.1527\!\pm\!0.0136$ & \cellcolor{scorebetter}$\mathbf{0.1294}\!\pm\!0.0155$ & \cellcolor{scoreworse}$0.2136\!\pm\!0.0110$ & \cellcolor{scorebetter}$0.1386\!\pm\!0.0080$\\
\midrule
\multicolumn{6}{c}{\textbf{Centered variance ratio}}\\
 & Native \five{} & \multicolumn{2}{c}{Direct} & \multicolumn{2}{c}{Residual}\\
\cmidrule(lr){3-4}\cmidrule(l){5-6}
Dataset & $x_i$ & Initial & Trained & Initial & Trained\\\midrule
CAM16 & $2.0689\!\pm\!0.0950$ & $0.2571\!\pm\!0.0160$ & $0.7269\!\pm\!0.0635$ & $2.3289\!\pm\!0.1013$ & $1.4388\!\pm\!0.0469$\\
CAM17 & $1.7259\!\pm\!0.0926$ & $0.2260\!\pm\!0.0156$ & $0.5245\!\pm\!0.0261$ & $1.9529\!\pm\!0.1095$ & $1.1984\!\pm\!0.0582$\\
Private-CRC & $2.6330\!\pm\!0.1512$ & $0.4104\!\pm\!0.0326$ & $0.7964\!\pm\!0.0392$ & $3.0351\!\pm\!0.1435$ & $2.5611\!\pm\!0.1402$\\
PANDA & $1.9803\!\pm\!0.1472$ & $0.2748\!\pm\!0.0320$ & $0.5430\!\pm\!0.0589$ & $2.2597\!\pm\!0.1771$ & $1.7437\!\pm\!0.0949$\\
BRACS & $1.7239\!\pm\!0.1369$ & $0.2342\!\pm\!0.0217$ & $0.5302\!\pm\!0.0356$ & $1.9601\!\pm\!0.1590$ & $1.3829\!\pm\!0.0799$\\
KIRC & $1.4746\!\pm\!0.0529$ & $0.2098\!\pm\!0.0239$ & $0.5387\!\pm\!0.0494$ & $1.6791\!\pm\!0.0530$ & $1.2758\!\pm\!0.0789$\\
KIRP & $1.4454\!\pm\!0.1226$ & $0.1988\!\pm\!0.0172$ & $0.5414\!\pm\!0.0352$ & $1.6430\!\pm\!0.1403$ & $1.2545\!\pm\!0.1020$\\
LUAD & $1.9351\!\pm\!0.1296$ & $0.2836\!\pm\!0.0240$ & $0.6153\!\pm\!0.0695$ & $2.2214\!\pm\!0.1513$ & $1.7643\!\pm\!0.0939$\\
STAD & $1.6633\!\pm\!0.1838$ & $0.2057\!\pm\!0.0280$ & $0.4802\!\pm\!0.0593$ & $1.8749\!\pm\!0.2114$ & $1.3153\!\pm\!0.1212$\\
UCEC & $1.5191\!\pm\!0.0819$ & $0.2119\!\pm\!0.0167$ & $0.5601\!\pm\!0.0534$ & $1.7307\!\pm\!0.0967$ & $1.2919\!\pm\!0.0684$\\
\midrule
\multicolumn{6}{c}{\textbf{Pearson correlation of pairwise distances $\uparrow$}}\\
 & Native \five{} & \multicolumn{2}{c}{Direct} & \multicolumn{2}{c}{Residual}\\
\cmidrule(lr){3-4}\cmidrule(l){5-6}
Dataset & $x_i$ & Initial & Trained & Initial & Trained\\\midrule
CAM16 & $0.7720\!\pm\!0.0334$ & \cellcolor{scoreworse}$0.7717\!\pm\!0.0324$ & \cellcolor{scorebetter}$\mathbf{0.8889}\!\pm\!0.0248$ & \cellcolor{scorebetter}$0.7786\!\pm\!0.0325$ & \cellcolor{scorebetter}$0.8477\!\pm\!0.0293$\\
CAM17 & $0.5915\!\pm\!0.0561$ & \cellcolor{scorebetter}$0.6104\!\pm\!0.0336$ & \cellcolor{scorebetter}$\mathbf{0.7572}\!\pm\!0.0346$ & \cellcolor{scorebetter}$0.5998\!\pm\!0.0557$ & \cellcolor{scorebetter}$0.6435\!\pm\!0.0532$\\
Private-CRC & $0.6855\!\pm\!0.0434$ & \cellcolor{scoreworse}$0.6244\!\pm\!0.0468$ & \cellcolor{scorebetter}$\mathbf{0.7843}\!\pm\!0.0175$ & \cellcolor{scorebetter}$0.6870\!\pm\!0.0472$ & \cellcolor{scorebetter}$0.6915\!\pm\!0.0404$\\
PANDA & $0.5973\!\pm\!0.0506$ & \cellcolor{scorebetter}$0.6179\!\pm\!0.0388$ & \cellcolor{scorebetter}$\mathbf{0.7768}\!\pm\!0.0231$ & \cellcolor{scorebetter}$0.6062\!\pm\!0.0498$ & \cellcolor{scorebetter}$0.6569\!\pm\!0.0266$\\
BRACS & $0.6206\!\pm\!0.0424$ & \cellcolor{scoreworse}$0.6128\!\pm\!0.0469$ & \cellcolor{scorebetter}$\mathbf{0.7640}\!\pm\!0.0375$ & \cellcolor{scorebetter}$0.6283\!\pm\!0.0419$ & \cellcolor{scorebetter}$0.6827\!\pm\!0.0523$\\
KIRC & $0.7043\!\pm\!0.0240$ & \cellcolor{scoreworse}$0.6879\!\pm\!0.0407$ & \cellcolor{scorebetter}$\mathbf{0.8140}\!\pm\!0.0334$ & \cellcolor{scorebetter}$0.7100\!\pm\!0.0212$ & \cellcolor{scorebetter}$0.7685\!\pm\!0.0378$\\
KIRP & $0.6957\!\pm\!0.0312$ & \cellcolor{scoreworse}$0.6713\!\pm\!0.0317$ & \cellcolor{scorebetter}$\mathbf{0.7919}\!\pm\!0.0294$ & \cellcolor{scorebetter}$0.7009\!\pm\!0.0314$ & \cellcolor{scorebetter}$0.7615\!\pm\!0.0178$\\
LUAD & $0.6741\!\pm\!0.0377$ & \cellcolor{scoreworse}$0.6489\!\pm\!0.0351$ & \cellcolor{scorebetter}$\mathbf{0.7889}\!\pm\!0.0456$ & \cellcolor{scorebetter}$0.6804\!\pm\!0.0338$ & \cellcolor{scorebetter}$0.7068\!\pm\!0.0172$\\
STAD & $0.6911\!\pm\!0.0369$ & \cellcolor{scoreworse}$0.6142\!\pm\!0.0590$ & \cellcolor{scorebetter}$0.7365\!\pm\!0.0318$ & \cellcolor{scorebetter}$0.6918\!\pm\!0.0371$ & \cellcolor{scorebetter}$\mathbf{0.7446}\!\pm\!0.0508$\\
UCEC & $0.7244\!\pm\!0.0301$ & \cellcolor{scoreworse}$0.6694\!\pm\!0.0552$ & \cellcolor{scorebetter}$\mathbf{0.7859}\!\pm\!0.0676$ & \cellcolor{scorebetter}$0.7259\!\pm\!0.0386$ & \cellcolor{scorebetter}$0.7627\!\pm\!0.0623$\\
\bottomrule\end{tabular}
\end{table}

Table~\ref{tab:feature-distribution} shows lower MSE and higher correlation of pairwise distances after distillation for both prediction forms in every dataset average. Their spread changes in opposite directions. Direct variance increases and residual variance decreases, while trained direct predictions remain less variable than teacher means and trained residual predictions remain more variable. Smaller direct spread relative to the teacher does not imply contraction from initialization. The two forms exhibit distinct learning behavior despite sharing a target and unrestricted optimum.

Zero final-layer initialization provides a distinct reference. The initial direct output is identically zero, whereas the initial residual prediction equals the native feature exactly. Training lowers direct MSE in eight dataset means, with higher errors on Private-CRC and PANDA. Residual MSE falls in every dataset mean. Table~\ref{tab:feature-distribution-zero} shows that the lower direct and higher residual variance relative to the teacher persist under this initialization. The distinction prevents improvement over a random initial network from being mistaken for improvement over every reference.

\begin{table}[htbp]
\centering\footnotesize
\setlength{\tabcolsep}{4pt}
\caption{\textbf{Variation and target geometry after training from a zero output layer.} Centered variance is divided by teacher variance, with one indicating equal total variation rather than an optimum. Pearson correlation compares pairwise distances between regions in the output and target spaces. Native \five{} is its reference. The five realizations use independent test fields from the excluded dataset, whereas the feature maps illustrate one slide and one realization.}
\label{tab:feature-distribution-zero}
\begin{tabular}{@{}lccccc@{}}\toprule
 & \multicolumn{2}{c}{Centered variance ratio} & \multicolumn{3}{c}{\shortstack{Pearson correlation\\of pairwise distances $\uparrow$}}\\
\cmidrule(lr){2-3}\cmidrule(l){4-6}
Dataset & Direct & Residual & Native \five{} & Direct & Residual\\\midrule
CAM16 & \shortstack{$0.6997$\\$\pm 0.0473$} & \shortstack{$1.4150$\\$\pm 0.0391$} & \shortstack{$0.7720$\\$\pm 0.0334$} & \cellcolor{scorebetter}\shortstack{$\mathbf{0.8896}$\\$\pm 0.0140$} & \cellcolor{scorebetter}\shortstack{$0.8530$\\$\pm 0.0245$}\\
CAM17 & \shortstack{$0.4978$\\$\pm 0.0326$} & \shortstack{$1.1662$\\$\pm 0.0634$} & \shortstack{$0.5915$\\$\pm 0.0561$} & \cellcolor{scorebetter}\shortstack{$\mathbf{0.7465}$\\$\pm 0.0289$} & \cellcolor{scorebetter}\shortstack{$0.6338$\\$\pm 0.0564$}\\
Private-CRC & \shortstack{$0.6253$\\$\pm 0.0463$} & \shortstack{$2.3830$\\$\pm 0.1137$} & \shortstack{$0.6855$\\$\pm 0.0434$} & \cellcolor{scorebetter}\shortstack{$\mathbf{0.8169}$\\$\pm 0.0282$} & \cellcolor{scorebetter}\shortstack{$0.6917$\\$\pm 0.0423$}\\
PANDA & \shortstack{$0.4742$\\$\pm 0.0388$} & \shortstack{$1.6700$\\$\pm 0.1179$} & \shortstack{$0.5973$\\$\pm 0.0506$} & \cellcolor{scorebetter}\shortstack{$\mathbf{0.7908}$\\$\pm 0.0294$} & \cellcolor{scorebetter}\shortstack{$0.6611$\\$\pm 0.0397$}\\
BRACS & \shortstack{$0.4876$\\$\pm 0.0291$} & \shortstack{$1.3241$\\$\pm 0.0862$} & \shortstack{$0.6206$\\$\pm 0.0424$} & \cellcolor{scorebetter}\shortstack{$\mathbf{0.7543}$\\$\pm 0.0411$} & \cellcolor{scorebetter}\shortstack{$0.6962$\\$\pm 0.0524$}\\
\midrule
KIRC & \shortstack{$0.4748$\\$\pm 0.0331$} & \shortstack{$1.2216$\\$\pm 0.0627$} & \shortstack{$0.7043$\\$\pm 0.0240$} & \cellcolor{scorebetter}\shortstack{$\mathbf{0.8030}$\\$\pm 0.0316$} & \cellcolor{scorebetter}\shortstack{$0.7727$\\$\pm 0.0355$}\\
KIRP & \shortstack{$0.4956$\\$\pm 0.0283$} & \shortstack{$1.1852$\\$\pm 0.0838$} & \shortstack{$0.6957$\\$\pm 0.0312$} & \cellcolor{scorebetter}\shortstack{$\mathbf{0.7772}$\\$\pm 0.0448$} & \cellcolor{scorebetter}\shortstack{$0.7557$\\$\pm 0.0219$}\\
LUAD & \shortstack{$0.5454$\\$\pm 0.0709$} & \shortstack{$1.6263$\\$\pm 0.1022$} & \shortstack{$0.6741$\\$\pm 0.0377$} & \cellcolor{scorebetter}\shortstack{$\mathbf{0.7863}$\\$\pm 0.0389$} & \cellcolor{scorebetter}\shortstack{$0.7233$\\$\pm 0.0286$}\\
STAD & \shortstack{$0.4423$\\$\pm 0.0677$} & \shortstack{$1.2605$\\$\pm 0.1082$} & \shortstack{$0.6911$\\$\pm 0.0369$} & \cellcolor{scorebetter}\shortstack{$0.7205$\\$\pm 0.0439$} & \cellcolor{scorebetter}\shortstack{$\mathbf{0.7589}$\\$\pm 0.0390$}\\
UCEC & \shortstack{$0.4612$\\$\pm 0.0404$} & \shortstack{$1.2071$\\$\pm 0.0569$} & \shortstack{$0.7244$\\$\pm 0.0301$} & \cellcolor{scorebetter}\shortstack{$0.7772$\\$\pm 0.0645$} & \cellcolor{scorebetter}\shortstack{$\mathbf{0.7799}$\\$\pm 0.0497$}\\
\bottomrule\end{tabular}
\end{table}

The native feature provides an unchanged reference in the same coordinates. Trained direct prediction improves MSE and correlation of pairwise distances relative to native input in every dataset average under either initialization. Residual prediction lowers MSE throughout with zero initialization and in all datasets except Private-CRC with random initialization. These comparisons establish improvements relative to native input rather than superiority to every predictor. Table~\ref{tab:raw-test-errors} retains the constant alongside all four trained predictors.

Under Proposition~\ref{prop:conditional-mean}, reduced variation relative to teacher targets is compatible with conditional mean prediction but does not distinguish conditional target variation from estimation error. The direct outputs retain substantial variation and positive correlations of pairwise distances, while residual outputs remain more variable than the targets. Together with the permutation controls in Table~\ref{tab:raw-test-permutation}, these measurements establish dependence on the input rather than a common constant prediction. Theorem~\ref{thm:transfer-risk} asks a different question about information relevant to the task. It does not identify a feature reconstruction error with representation loss or teacher access. Section~\ref{sec:use_results} evaluates the downstream benefit separately.

This feature analysis uses CPU float32 inference. The raw reconstruction evaluation above uses GPU bfloat16 autocast, so its numerical values remain a separate evaluation. All measurements describe the saved predictors on sampled fields. The patch encoders remain frozen.

\subsection{Teacher correspondence in a common feature map}
\label{app:teacher-feature-maps}
The maps illustrate the regional mean prediction study from Section~\ref{sec:experiments}. The teacher target $m_i$ is the arithmetic mean of sixteen aligned \twenty{} features, and the student input is the native \five{} feature together with its slide context. Direct diamonds show the target estimate $f_\theta(B)_i$. Residual diamonds show $x_i+r_\phi(B)_i$, where the correction $r_\phi(B)_i$ learns the teacher-minus-native difference $m_i-x_i$. Thus both plotted outputs estimate the teacher mean. Neither the correction alone nor a concatenation of native and output features is plotted. The networks use trained seed-42 parameters with zero final-layer initialization. This condition matches Table~\ref{tab:regional-mean-reconstruction}. It is not selected because zero initialization performs best on every displayed slide. The patch encoders remain frozen.

\paragraph{Reference construction and sampling.}
Within each dataset, we first sample 1,000 complete test regions by cycling through randomly ordered groups and drawing regions without replacement within each group. The seed combines 20260923, the dataset index and 42. The same identities supply native features, sixteen individual teacher features and the regional teacher means. Each of these three populations is clustered separately in the original 1,536 coordinates using Euclidean K-means. Five initializations, at most 200 iterations and tolerance $10^{-4}$ are used. Candidate cluster counts range from two to eight, and the largest silhouette score on a fixed subset of at most 768 vectors determines the count. These partitions organize the display and do not identify tissue classes.

Each population contributes 800 reference landmarks, distributed across its clusters in proportion to their size and including an actual observation nearest each cluster center. A common 50-component principal component analysis (PCA) transform reduces the 2,400 landmarks without whitening. A t-SNE map then uses Euclidean affinities, perplexity 40, seed 42, PCA initialization, 250 early iterations with exaggeration 12 and 500 subsequent iterations with exaggeration one \citep{vandermaaten2008visualizing}. The remaining vectors are placed against these landmarks with openTSNE \citep{JSSv109i03}, using median initialization from 25 neighbors, perplexity 40, learning rate 0.1, 250 iterations and maximum gradient norm 0.25. Prediction outputs do not fit the clusters, PCA or reference map. Reference and added coordinates share the same centered coordinate frame.

The display expands these reference populations to every cached region with sixteen aligned teacher features across all dataset partitions. This gives 1,007,109 native regions, 16,113,744 individual teacher features and 1,007,109 regional means. The earlier cluster centers and reference map remain unchanged. Contours summarize every mapped reference with a square-bin histogram using 256 bins along the longest coordinate span and Gaussian smoothing of two bins. Each contour encloses approximately 75\% of its cluster's smoothed density. Hollow markers identify actual observations nearest the cluster centers, with size reflecting their population share. Contours describe the reference distribution rather than uncertainty or anatomical boundaries.

For each dataset, one test slide is selected uniformly from sorted eligible slide identities using seed 20260923 combined with the dataset index. Every cached complete region of that slide is displayed for both prediction forms, with no further subsampling. Cached regions can exceed the 128 parents selected for the complete-field task study in Appendix~\ref{app:protocol}, because these illustrations use all eligible cached regions rather than that task selection. Trained predictions use the saved context from the whole native slide bag. Purple and red lines connect each native feature to its direct and residual outputs, respectively, while green lines connect it to its teacher mean. These connections identify the same physical region rather than successive training states. Cluster numbers are omitted because they do not identify tissue classes. All points and their coordinates remain unchanged. Figure~\ref{fig:teacher-correspondence-main} presents CAM16, CAM17, KIRC and KIRP in a compact grid. The examples contrast overlap with teacher neighborhoods, incomplete coverage of a teacher group and differences between direct and residual outputs. These dataset choices are editorial, while slide selection within each remains random. Each main-text panel overlays both prediction forms with one shared legend. The other six datasets appear below with separate direct and residual panels. The saved coordinates, points and connections remain unchanged. Display limits include every visible contour and observation, without reserving space for initial outputs that are not plotted. Equal axis scaling preserves the geometry within each map. Quantitative conclusions use the independent evaluations across all ten datasets.

\paragraph{Reconstruction accuracy and regional variation.}
Table~\ref{tab:slide-feature-errors} computes MSE on precisely the regions shown, averaging equally over regions and feature coordinates. Both trained predictors reduce error relative to native features on each displayed slide, with direct prediction giving the smaller error. On the KIRC slide in Figure~\ref{fig:teacher-correspondence-main}, direct outputs have 0.5284 times the teacher's centered variance, compared with 1.5041 for residual outputs. Their correlations with teacher pairwise distances are 0.8816 and 0.9215, respectively. These calculations use the original 1,536 coordinates and equal region weights, with the definitions in Equation~\ref{eq:feature-variance-ratio}. Thus the wider residual distribution on this slide accompanies better agreement in relative distances but slightly higher average reconstruction error across the corresponding targets. These criteria describe different aspects of prediction.

The independent five-realization evaluation in Table~\ref{tab:feature-distribution-zero} establishes the broader pattern. Every dataset average has direct variation below the teacher reference and residual variation above it. Both forms preserve regional differences, as their positive distance correlations and the permutation results in Table~\ref{tab:raw-test-permutation} show. Direct prediction's smaller spread is not a constant collapse. Nor is it a decrease from initialization, since training expands the initially zero direct output into an input-dependent representation. Distance correlations do not have a universal direct or residual ordering. For example, the KIRC dataset average favors direct prediction even though the displayed slide favors residual prediction. The illustrative slide and the independent evaluation fields answer different sampling questions.

\paragraph{What the remaining maps show.}
The remaining six maps extend the examples in Figure~\ref{fig:teacher-correspondence-main}. PANDA shows a compact direct group while some residual outputs remain near native features. UCEC shows residual predictions occupying teacher neighborhoods that direct outputs cover less extensively. BRACS, Private-CRC, LUAD and STAD show different degrees of overlap between outputs and teacher references. Across all ten displayed slides, direct prediction has smaller variation and lower MSE than residual prediction. Residual variation is not always larger than teacher variation on these individual slides. The different visual patterns do not establish a common number of recovered clusters or a classification-versus-survival distinction.

Dataset contours describe a broader population than the teacher targets of the displayed slide. Their overlap with individual \twenty{} contours does not mean that the displayed means cover every teacher cluster or preserve every individual vector. Moreover, openTSNE positions each added output against reference landmarks without optimizing interactions among the added outputs. Apparent concentration is not a calibrated measure of variance or density in the original space. Connections share the native feature as their origin and identify its corresponding output and teacher mean. Original-coordinate measurements determine reconstruction accuracy and variation. Native and teacher extraction histories also differ, so separation of their reference distributions cannot be attributed solely to magnification from this comparison.

\paragraph{Two kinds of averaging and the predictor bottleneck.}
Lemma~\ref{lem:shared-output-mean} concerns averaging the sixteen observed teacher vectors within one region. Under uniform squared error, their variation around that regional mean contributes no predictor gradient for one shared output. Proposition~\ref{prop:conditional-mean} concerns a different average, the possible regional targets given the low magnification input. If that input does not determine the target, even the optimal deterministic predictor averages over the remaining possibilities. It need not reproduce the full distribution of teacher observations. These distinctions explain why overlap with individual teacher features is not evidence of recovering their within-region detail, and why less variable predictions can still minimize squared error.

The implemented heads also impose a concrete architectural restriction. Each network has a 256-unit GELU hidden layer followed by an affine output in 1,536 coordinates, with no final activation. For a given trained network, all direct outputs lie in an affine subspace of dimension at most 256. The residual network adds its correction to the native vector, allowing variation outside that correction subspace to pass through the skip connection. This distinction is consistent with their different spread despite the common unrestricted optimum in Proposition~\ref{prop:conditional-mean}. It does not establish whether the main reconstruction difficulty comes from architecture, optimization or conditional uncertainty. In particular, the direct predictor's more limited output subspace is not evidence that the missing variation is unavailable in its input.

\paragraph{From reconstruction to task benefit.}
Theorem~\ref{thm:transfer-risk} distinguishes a teacher observation from a deterministic transformation of $B$. Its teacher access term is positive only when the teacher contributes information that reduces the best attainable task loss beyond what $B$ permits. Feature uncertainty need not have that task consequence. A predictor can fail to match teacher features because of its finite approximation, and some unmatched feature variation may be irrelevant to $Y$. Neither a sparse set of mapped neighborhoods nor a nonzero MSE identifies the teacher access term.

The standalone task results in Table~\ref{tab:all-cohort-fidelity} show that lower reconstruction error does not consistently determine which prediction form performs better. KIRC with zero initialization favors direct prediction in mean C-index, despite its smaller spread, and the ordering reverses under random initialization. Wider residual coverage is therefore not a general explanation for better task performance. The native retention study in Appendix~\ref{app:native-retention} addresses a distinct consequence. Retaining the complete $B$ removes representation loss, whereas replacing it by either output can change both representation loss and model excess. A residual sum alone does not guarantee recovery of $B$. Together, the maps and task controls motivate preserving useful regional distinctions while testing their task value separately from average reconstruction accuracy.

\begin{table}[htbp]
\centering\small
\setlength{\tabcolsep}{5pt}
\caption{\textbf{Reconstruction error for the exact regions in the feature maps.} Native features are the reference. Each entry is MSE over one displayed test slide and one trained model with zero final-layer initialization. Errors use the original 1,536 coordinates and equal weight per region. These are individual illustrative cases, without aggregation across training seeds. Lower is better.}
\label{tab:slide-feature-errors}
\begin{tabular}{@{}lrccc@{}}\toprule
Dataset & Regions & Native $x_i$ & Direct $f_\theta(B)_i$ & Residual $x_i+r_\phi(B)_i$\\\midrule
CAM16 & 128 & $0.1571$ & \cellcolor{scorebetter}$\mathbf{0.0516}$ & \cellcolor{scorebetter}$0.0821$\\
CAM17 & 135 & $0.2518$ & \cellcolor{scorebetter}$\mathbf{0.1360}$ & \cellcolor{scorebetter}$0.1641$\\
Private-CRC & 149 & $0.2346$ & \cellcolor{scorebetter}$\mathbf{0.1278}$ & \cellcolor{scorebetter}$0.1978$\\
PANDA & 52 & $0.1463$ & \cellcolor{scorebetter}$\mathbf{0.1044}$ & \cellcolor{scorebetter}$0.1232$\\
BRACS & 134 & $0.1429$ & \cellcolor{scorebetter}$\mathbf{0.0761}$ & \cellcolor{scorebetter}$0.0889$\\
KIRC & 128 & $0.1697$ & \cellcolor{scorebetter}$\mathbf{0.0902}$ & \cellcolor{scorebetter}$0.0942$\\
KIRP & 131 & $0.1732$ & \cellcolor{scorebetter}$\mathbf{0.0881}$ & \cellcolor{scorebetter}$0.1020$\\
LUAD & 128 & $0.2139$ & \cellcolor{scorebetter}$\mathbf{0.0901}$ & \cellcolor{scorebetter}$0.1250$\\
STAD & 154 & $0.1652$ & \cellcolor{scorebetter}$\mathbf{0.0822}$ & \cellcolor{scorebetter}$0.1100$\\
UCEC & 160 & $0.1747$ & \cellcolor{scorebetter}$\mathbf{0.1040}$ & \cellcolor{scorebetter}$0.1144$\\
\bottomrule\end{tabular}
\end{table}

\begin{figure}[p]
\centering
\includegraphics[width=\linewidth]{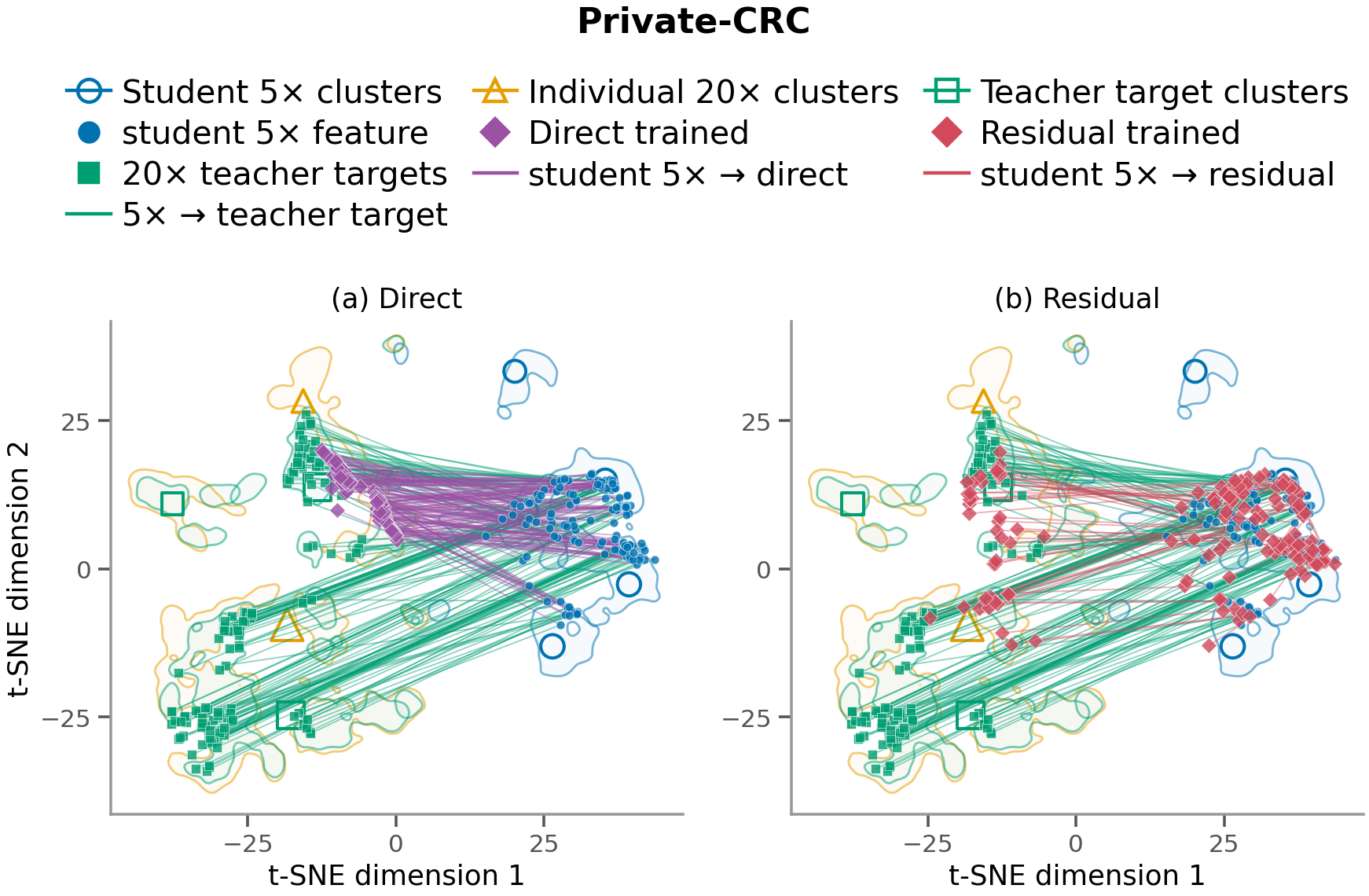}
\caption{\textbf{Teacher correspondence in Private-CRC.} Direct and residual predictions for all 149 cached regions of one test slide. Some residual outputs remain near the native reference distribution while their corresponding teacher targets occupy different neighborhoods. Colors and connections follow Figure~\ref{fig:teacher-correspondence-main}. Table~\ref{tab:slide-feature-errors} reports errors for these exact regions.}
\label{fig:teacher-correspondence-private_crc}
\end{figure}

\begin{figure}[p]
\centering
\includegraphics[width=\linewidth]{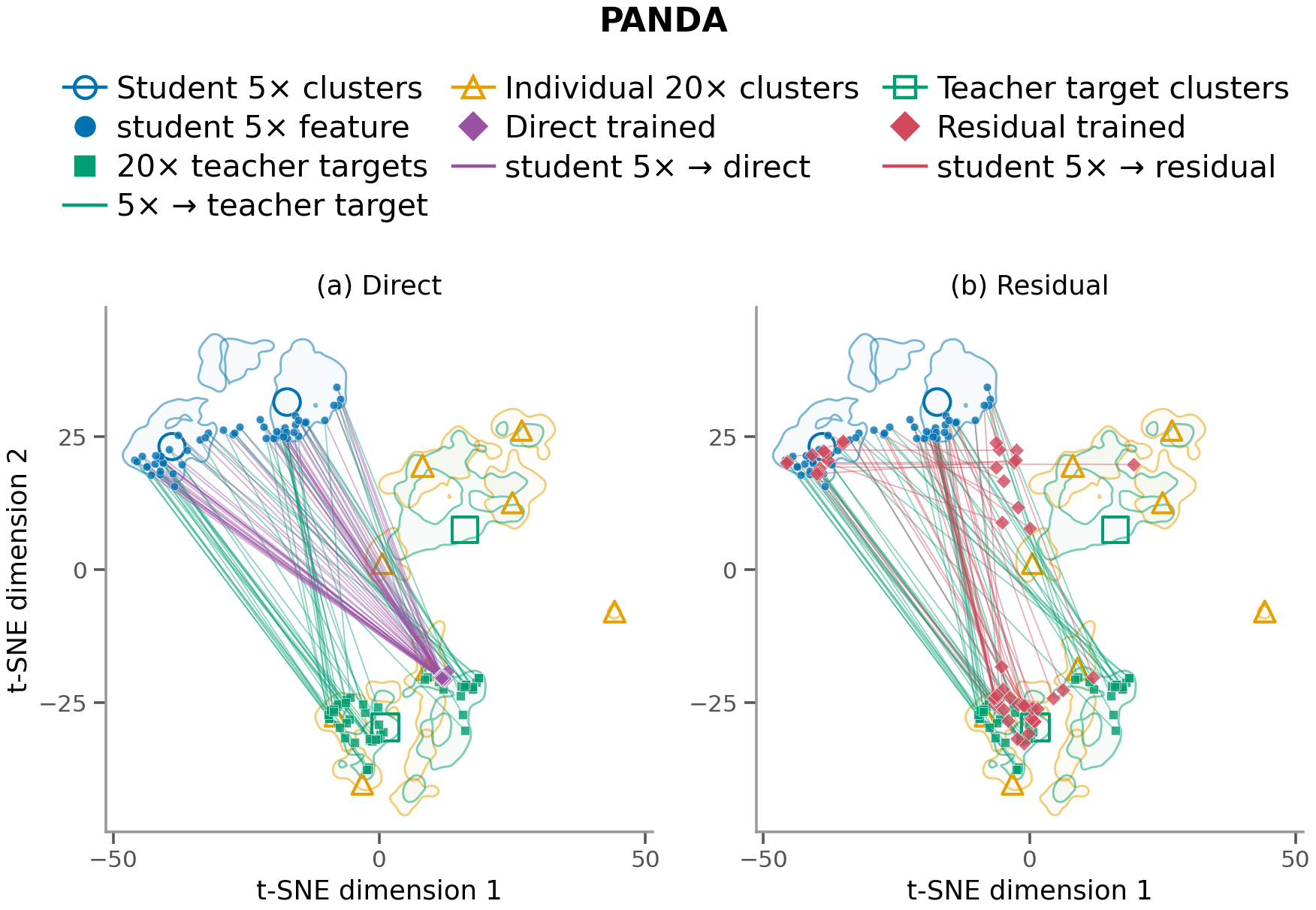}
\caption{\textbf{Teacher correspondence in PANDA.} Direct and residual predictions for all 52 cached regions of one test slide. Direct outputs form a compact group near teacher means, while residual outputs also extend toward native neighborhoods. Colors and connections follow Figure~\ref{fig:teacher-correspondence-main}. Table~\ref{tab:slide-feature-errors} reports errors for these exact regions.}
\label{fig:teacher-correspondence-panda}
\end{figure}

\begin{figure}[p]
\centering
\includegraphics[width=\linewidth]{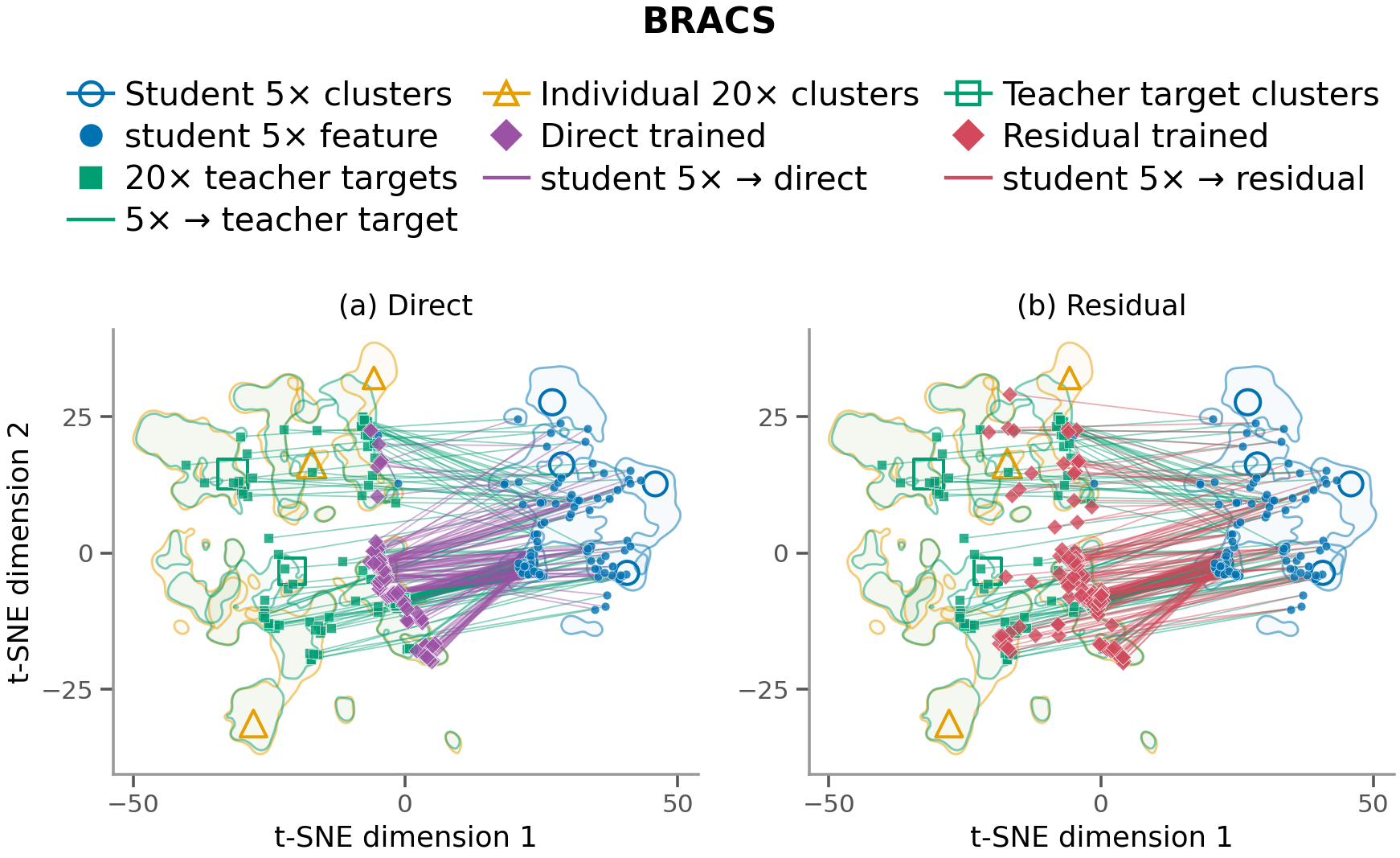}
\caption{\textbf{Teacher correspondence in BRACS.} Direct and residual predictions for all 134 cached regions of one test slide. Direct outputs concentrate near central teacher neighborhoods, while residual outputs extend across more of the displayed target distribution. Colors and connections follow Figure~\ref{fig:teacher-correspondence-main}. Table~\ref{tab:slide-feature-errors} reports errors for these exact regions.}
\label{fig:teacher-correspondence-bracs}
\end{figure}

\begin{figure}[p]
\centering
\includegraphics[width=\linewidth]{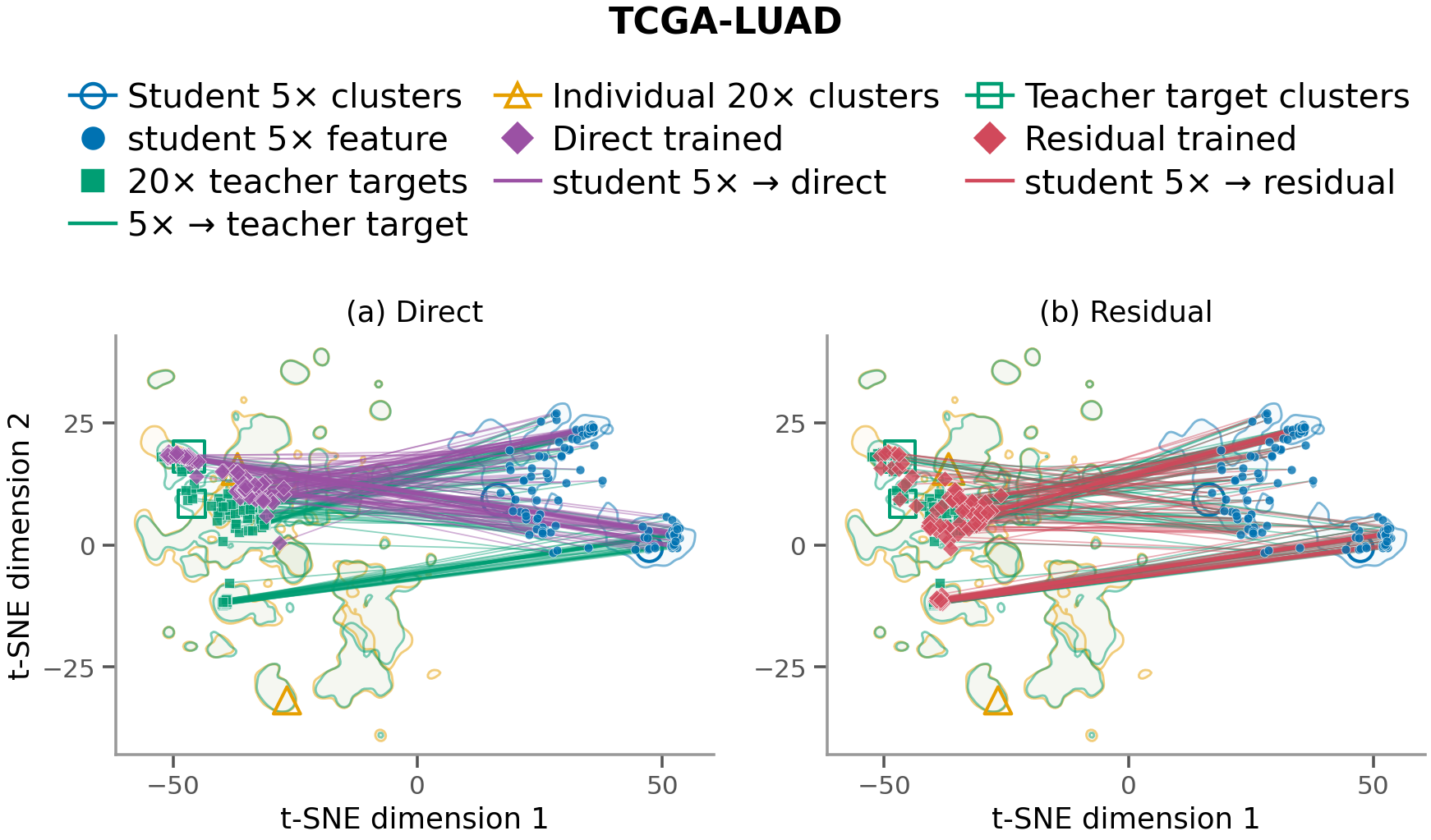}
\caption{\textbf{Teacher correspondence in TCGA-LUAD.} Direct and residual predictions for all 128 cached regions of one test slide. Both prediction forms overlap teacher neighborhoods, but the residual outputs show a broader spread in this view. Colors and connections follow Figure~\ref{fig:teacher-correspondence-main}. Table~\ref{tab:slide-feature-errors} reports errors for these exact regions.}
\label{fig:teacher-correspondence-luad}
\end{figure}

\begin{figure}[p]
\centering
\includegraphics[width=\linewidth]{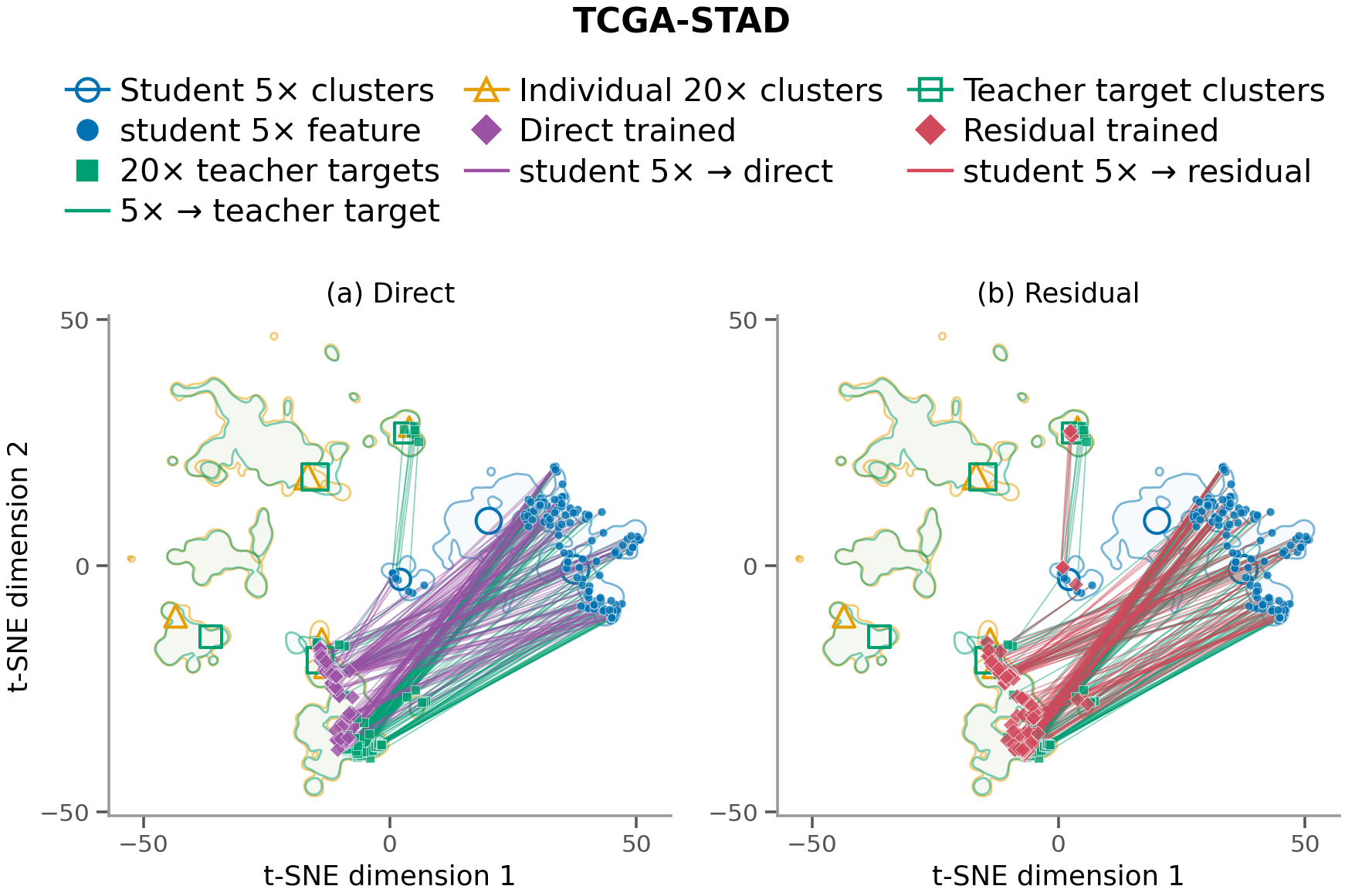}
\caption{\textbf{Teacher correspondence in TCGA-STAD.} Direct and residual predictions for all 154 cached regions of one test slide. Both prediction forms approach the lower teacher neighborhoods, while direct outputs retain less variation in the original coordinates. Colors and connections follow Figure~\ref{fig:teacher-correspondence-main}. Table~\ref{tab:slide-feature-errors} reports errors for these exact regions.}
\label{fig:teacher-correspondence-stad}
\end{figure}

\begin{figure}[p]
\centering
\includegraphics[width=\linewidth]{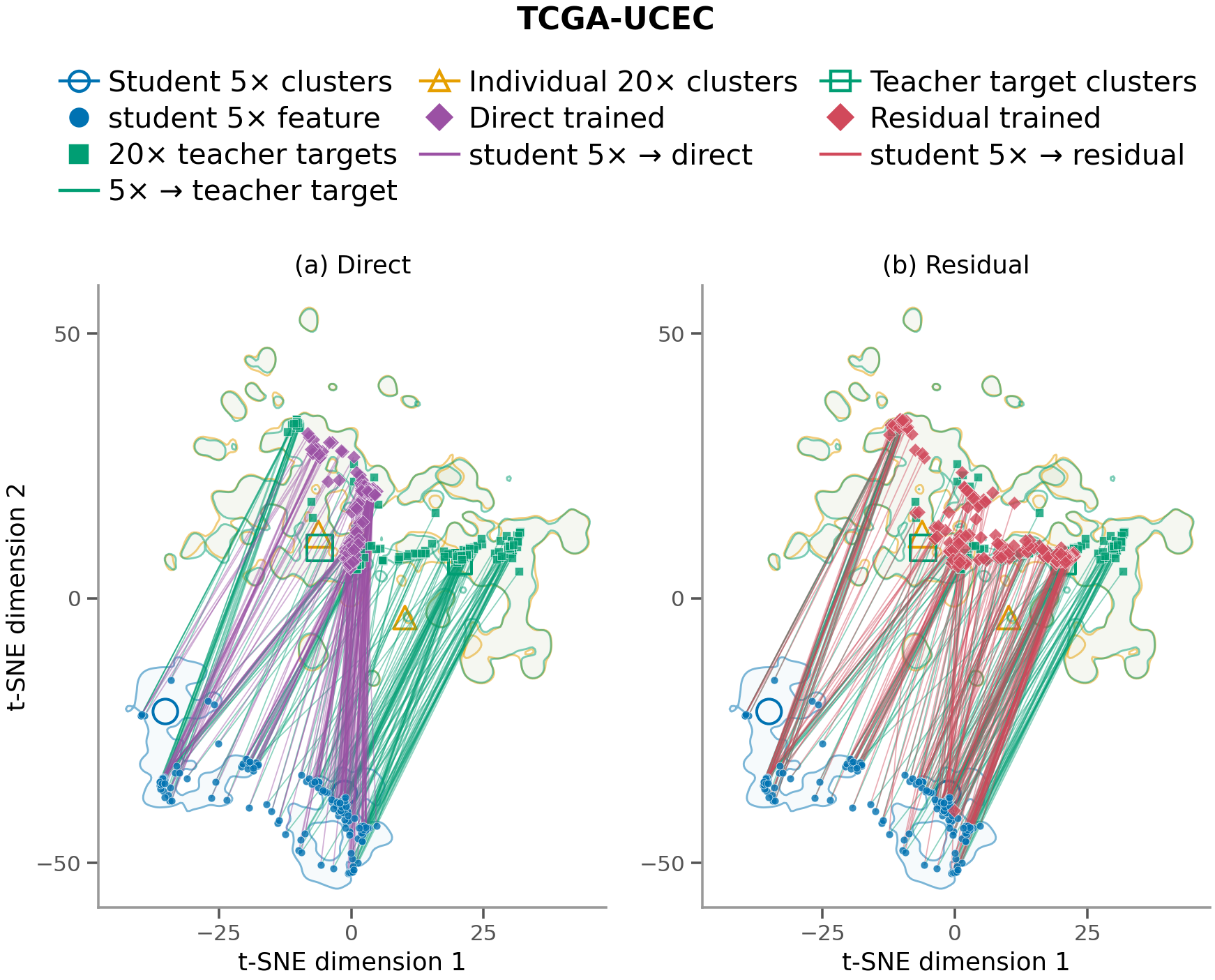}
\caption{\textbf{Teacher correspondence in TCGA-UCEC.} Direct and residual predictions for all 160 cached regions of one test slide. Residual outputs extend across the right teacher neighborhoods more broadly than direct outputs, yet direct prediction has lower target MSE. Colors and connections follow Figure~\ref{fig:teacher-correspondence-main}. Table~\ref{tab:slide-feature-errors} reports errors for these exact regions.}
\label{fig:teacher-correspondence-ucec}
\end{figure}

\FloatBarrier

\subsection{Predicting projected mean and spatial components}
\label{app:source-learning}
The next diagnostic asks which target components account for prediction error. It uses a different distillation partition and head from the raw analysis above. Its native input $b=[x;u]\in\R^{2304}$ concatenates the 1,536-coordinate UNI2-h feature and 768-coordinate DINOv3 feature from one \five{} region. The target has four blocks of 192 coordinates. A common MLP predicts all 768 outputs,
\begin{equation}
f_\theta(b)=W_2\operatorname{GELU}(W_1b+a_1)+a_2,
\qquad W_1\in\R^{512\times2304},\quad W_2\in\R^{768\times512}.
\end{equation}
The parameters $\theta$ comprise both weight matrices and their biases. GELU applies $\operatorname{GELU}(z)=z\Phi(z)$ coordinatewise, where $\Phi$ is the standard normal cumulative distribution function \citep{hendrycks2016gelu}. Squared error across the complete target trains all four components jointly.

Only base training groups determine the transformations. Their moments scale the input, while moments of projected native \five{} UNI2-h features normalize the targets. We center the mean block, use zero offsets for spatial blocks and apply the same 192 scales to all four. This preserves their different variances rather than giving every block unit variance.

Each distillation dataset contributes eight base training groups, eight correction groups, four validation groups for checkpoint selection and four independent evaluation groups. Eight diagnostic groups from the excluded dataset evaluate generalization without influencing fitting or selection. We sample at most 32 aligned fields per group and weight datasets, groups within datasets and fields within groups equally. Mean and spatial errors use identical fields. These samples differ from the task test fields above and from full downstream bags. Table~\ref{tab:neural-prediction-errors} evaluates the base predictor before correction.

Training uses at most 4,096 AdamW updates with learning rate $.001$, weight decay $.0001$, batch size 1,024 and gradient clipping at 1. Total selection MSE chooses update 32 in thirty panels and update 128 in twenty. The reference always returns the target average from base training groups and uses the same coordinates, fields and weights as the network. Since it predicts no regional differences, improving on this constant tests predictive value beyond a common target centre.

\subsection{Reconstruction errors by component}
\label{app:component-errors}
Prediction improves the regional mean much more consistently than the spatial components. In Table~\ref{tab:neural-prediction-errors}, the mean head beats the constant on every independent evaluation within the distillation datasets and in eight excluded-dataset averages. Every nonconstant component instead has higher error than the constant in every realization on both populations. This separates useful mean prediction from a persistent spatial deficit in the fitted network. Under Proposition~\ref{prop:conditional-mean}, these observed errors still combine conditional target variation with estimation error, so the deficit does not identify an irreducible limit.

\begin{table}[htbp]
\centering\small
\setlength{\tabcolsep}{3pt}
\caption{MSE for regional mean and spatial components. The constant is the reference within each component. Panel A evaluates independent groups from the distillation datasets, and Panel B evaluates the excluded dataset. Neither population selects predictors.}
\label{tab:neural-prediction-errors}
\begin{tabular}{@{}lcccc@{}}\toprule
\multicolumn{5}{c}{\textbf{A. Independent evaluation within distillation datasets}}\\[3pt]
& \multicolumn{2}{c}{Regional mean} & \multicolumn{2}{c}{Spatial components}\\
\cmidrule(lr){2-3}\cmidrule(l){4-5}
Excluded dataset & Constant & Network & Constant & Network\\\midrule
CAM16 & $0.6361\!\pm\!0.0305$ & \cellcolor{scorebetter}$\mathbf{0.3648}\!\pm\!0.0173$ & $\mathbf{0.0524}\!\pm\!0.0022$ & \cellcolor{scoreworse}$0.0697\!\pm\!0.0031$\\
CAM17 & $0.6665\!\pm\!0.0335$ & \cellcolor{scorebetter}$\mathbf{0.3702}\!\pm\!0.0219$ & $\mathbf{0.0527}\!\pm\!0.0025$ & \cellcolor{scoreworse}$0.0676\!\pm\!0.0021$\\
Private-CRC & $0.6851\!\pm\!0.0567$ & \cellcolor{scorebetter}$\mathbf{0.4007}\!\pm\!0.0191$ & $\mathbf{0.0527}\!\pm\!0.0010$ & \cellcolor{scoreworse}$0.0698\!\pm\!0.0019$\\
PANDA & $0.6592\!\pm\!0.0310$ & \cellcolor{scorebetter}$\mathbf{0.3775}\!\pm\!0.0188$ & $\mathbf{0.0487}\!\pm\!0.0011$ & \cellcolor{scoreworse}$0.0662\!\pm\!0.0023$\\
BRACS & $0.6295\!\pm\!0.0249$ & \cellcolor{scorebetter}$\mathbf{0.3653}\!\pm\!0.0098$ & $\mathbf{0.0533}\!\pm\!0.0012$ & \cellcolor{scoreworse}$0.0691\!\pm\!0.0014$\\
\addlinespace[3pt]
KIRC & $0.6419\!\pm\!0.0226$ & \cellcolor{scorebetter}$\mathbf{0.3671}\!\pm\!0.0165$ & $\mathbf{0.0552}\!\pm\!0.0016$ & \cellcolor{scoreworse}$0.0718\!\pm\!0.0026$\\
KIRP & $0.6320\!\pm\!0.0533$ & \cellcolor{scorebetter}$\mathbf{0.3584}\!\pm\!0.0220$ & $\mathbf{0.0542}\!\pm\!0.0020$ & \cellcolor{scoreworse}$0.0705\!\pm\!0.0015$\\
LUAD & $0.6612\!\pm\!0.0309$ & \cellcolor{scorebetter}$\mathbf{0.3552}\!\pm\!0.0162$ & $\mathbf{0.0564}\!\pm\!0.0017$ & \cellcolor{scoreworse}$0.0730\!\pm\!0.0013$\\
STAD & $0.6681\!\pm\!0.0490$ & \cellcolor{scorebetter}$\mathbf{0.3671}\!\pm\!0.0204$ & $\mathbf{0.0542}\!\pm\!0.0017$ & \cellcolor{scoreworse}$0.0708\!\pm\!0.0014$\\
UCEC & $0.6269\!\pm\!0.0235$ & \cellcolor{scorebetter}$\mathbf{0.3499}\!\pm\!0.0109$ & $\mathbf{0.0543}\!\pm\!0.0008$ & \cellcolor{scoreworse}$0.0706\!\pm\!0.0019$\\
\midrule
\multicolumn{5}{c}{\textbf{B. Excluded dataset evaluation}}\\[3pt]
& \multicolumn{2}{c}{Regional mean} & \multicolumn{2}{c}{Spatial components}\\
\cmidrule(lr){2-3}\cmidrule(l){4-5}
Excluded dataset & Constant & Network & Constant & Network\\\midrule
CAM16 & $0.6106\!\pm\!0.0182$ & \cellcolor{scorebetter}$\mathbf{0.4501}\!\pm\!0.0418$ & $\mathbf{0.0464}\!\pm\!0.0015$ & \cellcolor{scoreworse}$0.0684\!\pm\!0.0036$\\
CAM17 & $0.5618\!\pm\!0.0415$ & \cellcolor{scorebetter}$\mathbf{0.4599}\!\pm\!0.0539$ & $\mathbf{0.0550}\!\pm\!0.0039$ & \cellcolor{scoreworse}$0.0724\!\pm\!0.0038$\\
Private-CRC & $\mathbf{0.7193}\!\pm\!0.0842$ & \cellcolor{scoreworse}$0.7721\!\pm\!0.0848$ & $\mathbf{0.0845}\!\pm\!0.0096$ & \cellcolor{scoreworse}$0.1162\!\pm\!0.0075$\\
PANDA & $\mathbf{0.5319}\!\pm\!0.0264$ & \cellcolor{scoreworse}$0.5602\!\pm\!0.0312$ & $\mathbf{0.0845}\!\pm\!0.0049$ & \cellcolor{scoreworse}$0.1106\!\pm\!0.0069$\\
BRACS & $0.7437\!\pm\!0.1001$ & \cellcolor{scorebetter}$\mathbf{0.6339}\!\pm\!0.0924$ & $\mathbf{0.0454}\!\pm\!0.0056$ & \cellcolor{scoreworse}$0.0646\!\pm\!0.0081$\\
\addlinespace[3pt]
KIRC & $0.8488\!\pm\!0.0866$ & \cellcolor{scorebetter}$\mathbf{0.6799}\!\pm\!0.1207$ & $\mathbf{0.0421}\!\pm\!0.0022$ & \cellcolor{scoreworse}$0.0657\!\pm\!0.0053$\\
KIRP & $0.7377\!\pm\!0.0775$ & \cellcolor{scorebetter}$\mathbf{0.6091}\!\pm\!0.0801$ & $\mathbf{0.0393}\!\pm\!0.0016$ & \cellcolor{scoreworse}$0.0621\!\pm\!0.0045$\\
LUAD & $0.6055\!\pm\!0.0427$ & \cellcolor{scorebetter}$\mathbf{0.5535}\!\pm\!0.0466$ & $\mathbf{0.0456}\!\pm\!0.0023$ & \cellcolor{scoreworse}$0.0738\!\pm\!0.0065$\\
STAD & $0.5791\!\pm\!0.0191$ & \cellcolor{scorebetter}$\mathbf{0.5166}\!\pm\!0.0253$ & $\mathbf{0.0430}\!\pm\!0.0021$ & \cellcolor{scoreworse}$0.0652\!\pm\!0.0040$\\
UCEC & $0.9264\!\pm\!0.1741$ & \cellcolor{scorebetter}$\mathbf{0.8966}\!\pm\!0.1914$ & $\mathbf{0.0490}\!\pm\!0.0017$ & \cellcolor{scoreworse}$0.0767\!\pm\!0.0096$\\
\bottomrule\end{tabular}
\end{table}

The four target components have equal width, so total MSE averages their four errors. Table~\ref{tab:neural-prediction-errors} shows that the constant makes much more error per coordinate on the mean than on the spatial components. A shared objective can consequently improve through mean prediction while spatial error remains high relative to its own reference. This identifies a reason to report the components separately, without attributing the spatial deficit to loss weighting.

\subsection{Independent neural error correction}
\label{app:neural-correction}
We test whether an additional network can correct errors left by the selected base predictor. Let $f_A$ be the frozen head fitted to distillation training groups A. A second MLP $r_B$ learns from disjoint groups B using the same input, 512 hidden units, GELU and 768 outputs. A zero final layer initializes the corrected prediction $q=f_A+r_B$ at the original function $f_A$. This makes any subsequent change attributable to training the correction.

The same selection groups choose both checkpoints by total MSE. Independent evaluation within the distillation datasets and on excluded-dataset groups selects neither checkpoint. Correction introduces both extra capacity and new training groups, so its effect combines those changes. Table~\ref{tab:neural-component-correction} separates the resulting mean and spatial errors. This diagnostic stops at feature prediction and fits no downstream task model.

\begin{table}[htbp]
\centering\footnotesize
\setlength{\tabcolsep}{2.5pt}
\caption{Absolute component MSE before and after neural error correction. The base predictor is the reference within each component. Spatial MSE averages three equally sized components. Neither evaluation population determines the selected predictors.}
\label{tab:neural-component-correction}
\begin{tabular}{@{}lcccc@{}}\toprule
\multicolumn{5}{c}{\textbf{Independent evaluation within distillation datasets}}\\
 & \multicolumn{2}{c}{Regional mean} & \multicolumn{2}{c}{Spatial components}\\
\cmidrule(lr){2-3}\cmidrule(l){4-5}
Excluded dataset & Base & Corrected & Base & Corrected\\\midrule
CAM16 & $0.3648\!\pm\!0.0173$ & \cellcolor{scorebetter}$\mathbf{0.3401}\!\pm\!0.0199$ & $0.0697\!\pm\!0.0031$ & \cellcolor{scorebetter}$\mathbf{0.0696}\!\pm\!0.0034$\\
CAM17 & $0.3702\!\pm\!0.0219$ & \cellcolor{scorebetter}$\mathbf{0.3361}\!\pm\!0.0156$ & $0.0676\!\pm\!0.0021$ & \cellcolor{scorebetter}$\mathbf{0.0665}\!\pm\!0.0019$\\
Private-CRC & $0.4007\!\pm\!0.0191$ & \cellcolor{scorebetter}$\mathbf{0.3748}\!\pm\!0.0152$ & $0.0698\!\pm\!0.0019$ & \cellcolor{scorebetter}$\mathbf{0.0694}\!\pm\!0.0023$\\
PANDA & $0.3775\!\pm\!0.0188$ & \cellcolor{scorebetter}$\mathbf{0.3532}\!\pm\!0.0097$ & $0.0662\!\pm\!0.0023$ & \cellcolor{scorebetter}$\mathbf{0.0659}\!\pm\!0.0031$\\
BRACS & $0.3653\!\pm\!0.0098$ & \cellcolor{scorebetter}$\mathbf{0.3455}\!\pm\!0.0167$ & $0.0691\!\pm\!0.0014$ & \cellcolor{scorebetter}$\mathbf{0.0686}\!\pm\!0.0016$\\
KIRC & $0.3671\!\pm\!0.0165$ & \cellcolor{scorebetter}$\mathbf{0.3526}\!\pm\!0.0244$ & $0.0718\!\pm\!0.0026$ & \cellcolor{scorebetter}$\mathbf{0.0716}\!\pm\!0.0028$\\
KIRP & $0.3584\!\pm\!0.0220$ & \cellcolor{scorebetter}$\mathbf{0.3328}\!\pm\!0.0158$ & $0.0705\!\pm\!0.0015$ & \cellcolor{scorebetter}$\mathbf{0.0704}\!\pm\!0.0016$\\
LUAD & $0.3552\!\pm\!0.0162$ & \cellcolor{scorebetter}$\mathbf{0.3342}\!\pm\!0.0241$ & $0.0730\!\pm\!0.0013$ & \cellcolor{scorebetter}$\mathbf{0.0727}\!\pm\!0.0021$\\
STAD & $0.3671\!\pm\!0.0204$ & \cellcolor{scorebetter}$\mathbf{0.3463}\!\pm\!0.0124$ & $\mathbf{0.0708}\!\pm\!0.0014$ & \cellcolor{scoreworse}$0.0709\!\pm\!0.0017$\\
UCEC & $0.3499\!\pm\!0.0109$ & \cellcolor{scorebetter}$\mathbf{0.3255}\!\pm\!0.0136$ & $0.0706\!\pm\!0.0019$ & \cellcolor{scorebetter}$\mathbf{0.0699}\!\pm\!0.0025$\\
\midrule
\multicolumn{5}{c}{\textbf{Excluded dataset evaluation}}\\
 & \multicolumn{2}{c}{Regional mean} & \multicolumn{2}{c}{Spatial components}\\
\cmidrule(lr){2-3}\cmidrule(l){4-5}
Excluded dataset & Base & Corrected & Base & Corrected\\\midrule
CAM16 & $\mathbf{0.4501}\!\pm\!0.0418$ & \cellcolor{scoreworse}$0.4592\!\pm\!0.0420$ & $\mathbf{0.0684}\!\pm\!0.0036$ & \cellcolor{scoreworse}$0.0690\!\pm\!0.0036$\\
CAM17 & $0.4599\!\pm\!0.0539$ & \cellcolor{scorebetter}$\mathbf{0.4528}\!\pm\!0.0478$ & $\mathbf{0.0724}\!\pm\!0.0038$ & \cellcolor{scoreworse}$0.0726\!\pm\!0.0038$\\
Private-CRC & $\mathbf{0.7721}\!\pm\!0.0848$ & \cellcolor{scoreworse}$0.7773\!\pm\!0.0735$ & $\mathbf{0.1162}\!\pm\!0.0075$ & \cellcolor{scoreworse}$0.1168\!\pm\!0.0069$\\
PANDA & $0.5602\!\pm\!0.0312$ & \cellcolor{scorebetter}$\mathbf{0.5584}\!\pm\!0.0301$ & $\mathbf{0.1106}\!\pm\!0.0069$ & \cellcolor{scoreworse}$0.1112\!\pm\!0.0069$\\
BRACS & $0.6339\!\pm\!0.0924$ & \cellcolor{scorebetter}$\mathbf{0.5908}\!\pm\!0.0751$ & $\mathbf{0.0646}\!\pm\!0.0081$ & \cellcolor{scoreworse}$0.0652\!\pm\!0.0079$\\
KIRC & $0.6799\!\pm\!0.1207$ & \cellcolor{scorebetter}$\mathbf{0.6558}\!\pm\!0.1257$ & $\mathbf{0.0657}\!\pm\!0.0053$ & \cellcolor{scoreworse}$0.0669\!\pm\!0.0053$\\
KIRP & $0.6091\!\pm\!0.0801$ & \cellcolor{scorebetter}$\mathbf{0.5872}\!\pm\!0.0958$ & $\mathbf{0.0621}\!\pm\!0.0045$ & \cellcolor{scoreworse}$0.0629\!\pm\!0.0041$\\
LUAD & $0.5535\!\pm\!0.0466$ & \cellcolor{scorebetter}$\mathbf{0.5405}\!\pm\!0.0434$ & $\mathbf{0.0738}\!\pm\!0.0065$ & \cellcolor{scoreworse}$0.0746\!\pm\!0.0064$\\
STAD & $0.5166\!\pm\!0.0253$ & \cellcolor{scorebetter}$\mathbf{0.4971}\!\pm\!0.0185$ & $\mathbf{0.0652}\!\pm\!0.0040$ & \cellcolor{scoreworse}$0.0655\!\pm\!0.0045$\\
UCEC & $0.8966\!\pm\!0.1914$ & \cellcolor{scorebetter}$\mathbf{0.8963}\!\pm\!0.2074$ & $\mathbf{0.0767}\!\pm\!0.0096$ & \cellcolor{scoreworse}$0.0776\!\pm\!0.0090$\\
\bottomrule\end{tabular}
\end{table}

For each excluded dataset, correction lowers average total MSE on independent groups from the distillation datasets, mainly by improving the regional mean (Table~\ref{tab:neural-component-correction}). The mean accounts for 95.7\% of the pooled error reduction. We compute that fraction by averaging each block's reduction over the fifty realizations, then dividing the mean reduction by the sum across all four blocks. It is not an average of realization-level fractions. Correction demonstrates remaining learnable error in the mean while producing only a small reduction in spatial error.

Generalization to excluded datasets is less consistent. Correction lowers total error in six dataset averages and improves the mean on average, with adverse mean effects on CAM16 and Private-CRC. Spatial error rises in every dataset average. Each nonconstant block remains worse than its constant reference in every realization on both evaluation populations, before and after correction. Correction improves regional mean prediction while leaving the spatial deficit unresolved.

\subsection{Spatial access to the low magnification image}
\label{app:input-access-all}
To test whether local image features improve prediction, we give a predictor either repeated global features or a spatial grid. Both inputs contain sixteen 768-coordinate DINOv3 vectors with two position coordinates each. The global version repeats the normalized class token. The spatial version averages the same encoder's normalized $16\times16$ patch-token map into a $4\times4$ grid. Both come from one image field. Flattening produces 12,320 inputs for a $12320\to512\to768$ GELU network. The target is a 768-coordinate projection of the regional teacher mean. It contains neither individual teacher vectors nor nonconstant spatial components. The study comprises 100 distillation fits and 200 TransMIL fits across ten datasets and five realizations.

The predictors train separately with shared architecture, initialization, distillation training and validation groups, and normalization. Each uses 4,096 updates, batch size 1,024, learning rate $.001$, weight decay $.0001$ and gradient clipping at 1. Distillation validation selects checkpoints, including early evaluations at updates 0, 32, 128 and 256. For task evaluation, the 2,304-coordinate native base receives zeros, the teacher mean or a prediction, producing 3,072 coordinates. Every model uses the same complete fields, task initialization, splits and training procedure within a realization, with at most 128 regions per slide.

\begin{table}[t]
\centering\small
\setlength{\tabcolsep}{4pt}
\caption{Task performance when a common native \five{} base receives zeros, the teacher mean or a prediction from global or spatial input. Zero augmentation is the reference. The teacher mean requires \twenty{} features during evaluation.}
\label{tab:input_access-all}
\begin{tabular}{@{}llcccc@{}}\toprule
Dataset & Metric & $[B;0]$ & $[B;M]$ & Global input & Spatial input\\\midrule
CAM16 & AUC & $0.8455\!\pm\!0.0485$ & \cellcolor{scorebetter}$\mathbf{0.8935}\!\pm\!0.0635$ & \cellcolor{scorebetter}$0.8729\!\pm\!0.0384$ & \cellcolor{scorebetter}$0.8516\!\pm\!0.0515$\\
CAM17 & $F_1$ & $0.5197\!\pm\!0.0336$ & \cellcolor{scoreworse}$0.5158\!\pm\!0.0700$ & \cellcolor{scoreworse}$0.5144\!\pm\!0.0381$ & \cellcolor{scorebetter}$\mathbf{0.5275}\!\pm\!0.0310$\\
Private-CRC & $F_1$ & $0.8305\!\pm\!0.0276$ & \cellcolor{scorebetter}$\mathbf{0.8378}\!\pm\!0.0267$ & \cellcolor{scoreworse}$0.8277\!\pm\!0.0228$ & \cellcolor{scoreworse}$0.8276\!\pm\!0.0367$\\
PANDA & $\kappa$ & $0.8386\!\pm\!0.0067$ & \cellcolor{scorebetter}$\mathbf{0.8601}\!\pm\!0.0137$ & \cellcolor{scorebetter}$0.8387\!\pm\!0.0124$ & \cellcolor{scoreworse}$0.8330\!\pm\!0.0062$\\
BRACS & $F_1$ & $\mathbf{0.4211}\!\pm\!0.0710$ & \cellcolor{scoreworse}$0.4168\!\pm\!0.0542$ & \cellcolor{scoreworse}$0.4178\!\pm\!0.0575$ & \cellcolor{scoreworse}$0.4125\!\pm\!0.0856$\\
KIRC & C-index & $0.6823\!\pm\!0.0548$ & \cellcolor{scorebetter}$0.6879\!\pm\!0.0586$ & \cellcolor{scorebetter}$0.6876\!\pm\!0.0718$ & \cellcolor{scorebetter}$\mathbf{0.7186}\!\pm\!0.0538$\\
KIRP & C-index & $0.7352\!\pm\!0.1486$ & \cellcolor{scorebetter}$0.7850\!\pm\!0.1042$ & \cellcolor{scorebetter}$\mathbf{0.8278}\!\pm\!0.0529$ & \cellcolor{scorebetter}$0.8155\!\pm\!0.0828$\\
LUAD & C-index & $0.5381\!\pm\!0.0614$ & \cellcolor{scorebetter}$0.5551\!\pm\!0.0459$ & \cellcolor{scorebetter}$\mathbf{0.5678}\!\pm\!0.0224$ & \cellcolor{scoreworse}$0.5343\!\pm\!0.0527$\\
STAD & C-index & $0.5104\!\pm\!0.0188$ & \cellcolor{scorebetter}$\mathbf{0.5553}\!\pm\!0.0669$ & \cellcolor{scorebetter}$0.5335\!\pm\!0.0561$ & \cellcolor{scoreworse}$0.4940\!\pm\!0.0619$\\
UCEC & C-index & $0.6499\!\pm\!0.1086$ & \cellcolor{scorebetter}$0.7039\!\pm\!0.1145$ & \cellcolor{scorebetter}$\mathbf{0.7105}\!\pm\!0.0840$ & \cellcolor{scorebetter}$0.7048\!\pm\!0.0618$\\
\bottomrule\end{tabular}
\end{table}

\begin{figure}[htbp]
\centering
\includegraphics[width=\linewidth]{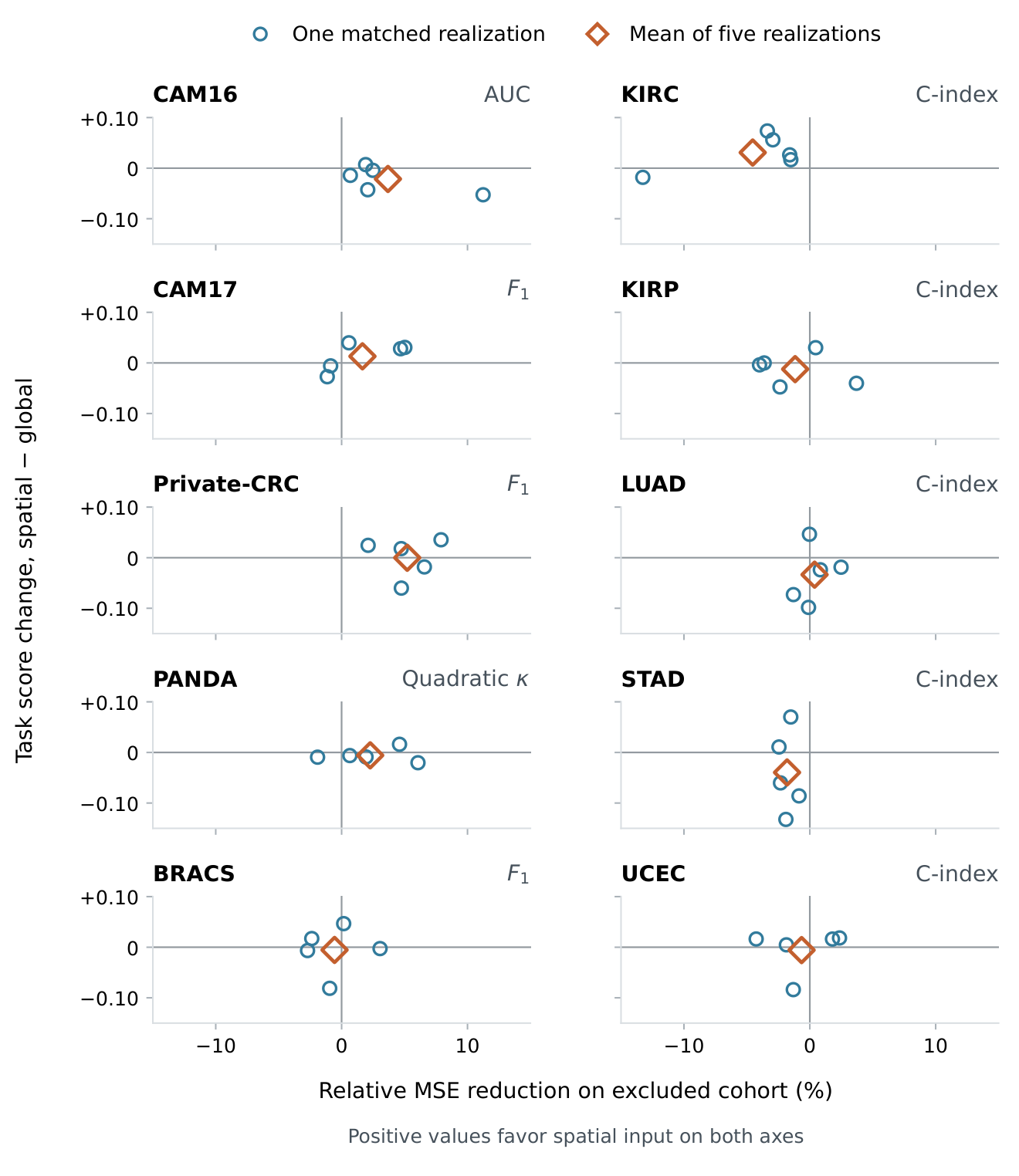}
\caption{Spatial and global input for a common regional mean target. Horizontal position is $100(\mathrm{MSE}_{\rm global}-\mathrm{MSE}_{\rm spatial})/\mathrm{MSE}_{\rm global}$. Vertical position is the spatial minus global task score. Circles show individual realizations and diamonds show their mean. Positive values favor spatial input on each axis.}
\label{fig:input-access-all}
\end{figure}

Local input changes reconstruction and task performance in different ways, as Figure~\ref{fig:input-access-all} shows. It lowers average feature error on CAM16, CAM17, Private-CRC, PANDA and LUAD, whereas only CAM17 and KIRC gain on the average task score. KIRC gains despite worse reconstruction. These descriptive orderings motivate separate feature and task evaluations without establishing a causal relationship between them.

The task model retains global UNI2-h and DINOv3 features, but these do not necessarily recover the local grid seen by the spatial predictor. Complete-input recovery, required to cancel representation loss in Theorem~\ref{thm:transfer-risk}, is therefore not established for this comparison. Appendix~\ref{app:native-retention} tests delivery separately by holding the prediction and its native input unchanged.

\clearpage
\section{Downstream Use and Attribution to Distillation}
\label{app:downstream}
A useful teacher target does not ensure a useful prediction, and a useful prediction does not by itself establish a benefit from cross-resolution distillation. This appendix separates those questions with initial predictors, output removal and native retention. Each study compares task models on common tissue support and input width. The final study tests a different route to transfer by adapting the image encoder and deploying its features.

\subsection{Learning the nonconstant spatial component}
\label{app:spatial-learning}
We ask whether Stage~1 training improves the spatial head after the regional mean is already learned. The study retaining all feature channels in Appendix~\ref{app:n3-joint} compares the trained nonconstant head with its exact initialization while retaining the same trained mean. It also omits the spatial output while retaining that mean. Table~\ref{tab:spatial-complete-arms} reports the absolute NLL for these three choices, using the same trained mean with zero spatial output as its reference. Table~\ref{tab:spatial-complete-arms} adds their primary scores, native input with zero augmentation and observed teacher summaries. Zero spatial output retains the trained mean, whereas zero augmentation removes the entire summary. Every comparison uses the same complete fields and 8,448-coordinate input.

Learning the spatial head lowers average NLL on PANDA, BRACS, KIRC, KIRP, LUAD and UCEC, while CAM16, CAM17, Private-CRC and STAD have higher averages than the initial head. These effects vary across realizations. BRACS has lower NLL after learning in three realizations, whereas PANDA, KIRC and LUAD have lower NLL in all five. Classification NLL uses class probabilities, while survival NLL retains slide weights and training-derived bins. Each loss is interpreted on its own scale.

The comparison with the trained mean alone answers a different delivery question. CAM17 has lower mean NLL when spatial predictions are supplied despite higher NLL than the initial spatial head. STAD has higher mean NLL than both references. On PANDA, the mean NLL improvement over the trained mean coexists with improvement in only two realizations. KIRC and LUAD still have higher mean NLL than native input with zero augmentation. Learning a head, benefiting from its output beyond a predicted mean and improving beyond native input are therefore distinct results. The primary scores likewise show no common ordering across datasets.

\subsection{General augmentation controls}
\label{app:augmentation-controls}
To examine model excess, we test whether distillation makes an unchanged native input easier for the task model to use. The input $B$ from Equation~\ref{eq:teacher_targets} is the bag $B=(b_i)_i$ of native vectors defined in Section~\ref{sec:experiments}. Every task input retains $b_i$. We append zeros, trained predictions $q_i$, teacher summaries $t_i$ or the network's exact initial predictions $q_i^I$,
\begin{equation}
A_0=([b_i;0])_i,\quad A_P=([b_i;q_i])_i,\quad
A_M=([b_i;t_i])_i,\quad A_I=([b_i;q_i^I])_i.
\label{eq:summary_inputs_appendix}
\end{equation}
Zero padding keeps the task input width constant. The initial predictor controls for adding the network transformation before Stage~1 training. Both initial and trained predictions are functions of the retained input, so Equation~\ref{eq:retained-input-risk} expresses their population loss difference as a difference in model excess. The observed teacher supplies a separate reference with access to $(B,T)$.

\subsection{Removing spatial predictions while retaining the same mean}
\label{app:spatial-removal}
To test delivery on unrestricted native bags, we reuse one saved predictor of the projected regional mean and three spatial components. Its full output has four blocks of 192 coordinates. The removal version keeps the mean unchanged and zeros the spatial blocks. Both append 768 coordinates to the native UNI2-h and DINOv3 base, giving a 3,072-coordinate input. A new task model trains for each version using common splits, initialization and settings. Every original \five{} row is retained, distinguishing this support from complete teacher fields and from the separate full-channel study.

Table~\ref{tab:spatial-delivery} compares the complete trained output with the same trained mean alone, exact initial prediction and native input with zeros. Fifty additional task fits supply the mean-only condition across ten datasets and five realizations. Zeroing spatial components also lowers the added feature energy, so the intervention combines omission with that change.

Omitting the spatial blocks has mixed effects. PANDA improves over the full prediction in all five realizations, and its mean score also exceeds the zero and initial references. In contrast, CAM16 and LUAD remain below the native reference after omission. Removing a harmful component can improve an augmented model without establishing a general benefit from distillation.

\begin{table}[htbp]
\centering\footnotesize
\setlength{\tabcolsep}{2.5pt}
\caption{Task scores after omitting projected spatial predictions on unrestricted native bags. Every input retains the native UNI2-h and DINOv3 base and has 3,072 coordinates. Native input with zero augmentation is the reference. Higher is better.}
\label{tab:spatial-delivery}
\begin{tabular}{@{}lcccc@{}}\toprule
Dataset / metric & \shortstack{Native \five{}\\Zero augmentation} & \shortstack{Native \five{}\\Initial prediction} & \shortstack{Native \five{}\\Trained mean + spatial} & \shortstack{Native \five{}\\Same trained mean}\\\midrule
CAM16 / AUC & $\mathbf{0.9770}\!\pm\!0.0129$ & \cellcolor{scoreworse}$0.9625\!\pm\!0.0308$ & \cellcolor{scoreworse}$0.9589\!\pm\!0.0293$ & \cellcolor{scoreworse}$0.9487\!\pm\!0.0315$\\
CAM17 / $F_1$ & $0.6300\!\pm\!0.0294$ & \cellcolor{scorebetter}$0.6397\!\pm\!0.0262$ & \cellcolor{scorebetter}$\mathbf{0.6462}\!\pm\!0.0445$ & \cellcolor{scoreworse}$0.6079\!\pm\!0.0719$\\
Private-CRC / $F_1$ & $0.8244\!\pm\!0.0482$ & \cellcolor{scorebetter}$\mathbf{0.8462}\!\pm\!0.0154$ & \cellcolor{scorebetter}$0.8426\!\pm\!0.0312$ & \cellcolor{scoreworse}$0.8238\!\pm\!0.0383$\\
PANDA / $\kappa$ & $0.8425\!\pm\!0.0121$ & \cellcolor{scorebetter}$0.8489\!\pm\!0.0149$ & \cellcolor{scorebetter}$0.8430\!\pm\!0.0103$ & \cellcolor{scorebetter}$\mathbf{0.8508}\!\pm\!0.0132$\\
BRACS / $F_1$ & $0.4141\!\pm\!0.0340$ & \cellcolor{scorebetter}$0.4336\!\pm\!0.0714$ & \cellcolor{scorebetter}$0.4321\!\pm\!0.0780$ & \cellcolor{scorebetter}$\mathbf{0.4570}\!\pm\!0.0587$\\
\midrule
KIRC / C-index & $0.7017\!\pm\!0.0456$ & \cellcolor{scorebetter}$0.7104\!\pm\!0.0127$ & \cellcolor{scorebetter}$\mathbf{0.7151}\!\pm\!0.0277$ & \cellcolor{scorebetter}$0.7026\!\pm\!0.0395$\\
KIRP / C-index & $0.8262\!\pm\!0.0836$ & \cellcolor{scoreworse}$0.7984\!\pm\!0.0342$ & \cellcolor{scoreworse}$0.7203\!\pm\!0.1028$ & \cellcolor{scorebetter}$\mathbf{0.8267}\!\pm\!0.0343$\\
LUAD / C-index & $\mathbf{0.5457}\!\pm\!0.0297$ & \cellcolor{scoreworse}$0.5099\!\pm\!0.0663$ & \cellcolor{scoreworse}$0.4866\!\pm\!0.0679$ & \cellcolor{scoreworse}$0.5004\!\pm\!0.0636$\\
STAD / C-index & $\mathbf{0.5198}\!\pm\!0.0395$ & \cellcolor{scoreworse}$0.5004\!\pm\!0.0703$ & \cellcolor{scoreworse}$0.4975\!\pm\!0.0275$ & \cellcolor{scoreworse}$0.5030\!\pm\!0.0342$\\
UCEC / C-index & $\mathbf{0.7027}\!\pm\!0.0652$ & \cellcolor{scoreworse}$0.6575\!\pm\!0.0832$ & \cellcolor{scoreworse}$0.6772\!\pm\!0.0691$ & \cellcolor{scoreworse}$0.6988\!\pm\!0.1115$\\
\bottomrule\end{tabular}
\end{table}

\subsection{Direct and residual prediction across ten datasets}
\label{app:all-cohort-fidelity}
To test whether reconstruction error ranks task representations, we compare direct and residual predictions of the same regional mean. This experiment trains task models separately for the two representations. A $3072\rightarrow256\rightarrow1536$ GELU predictor receives the standardized native \five{} vector $x_i$ and the standardized mean $c$ of native vectors across its slide. Direct output $q_i=f_i$ and residual output $q_i=x_i+f_i$ use common target scales $s_{m,d}$ to predict $m_i$,
\begin{equation}
\mathcal L=\mathbb E_{\rm train}\frac{1}{1536}
\sum_{d=1}^{1536}\left(\frac{q_{id}-m_{id}}{s_{m,d}}\right)^2.
\end{equation}
The expectation $\mathbb E_{\rm train}$ averages over the distillation training population. Scales estimated from that population balance the coordinates during training. We reverse those transformations for reconstruction error in the original UNI2-h space. Figure~\ref{fig:held_fidelity} uses the historical evaluation on downstream training fields, while Table~\ref{tab:regional-mean-reconstruction} uses independent test fields from Appendix~\ref{app:raw-test-prediction}. For population $e$ with its recorded group and field weights, error is
\begin{equation}
\operatorname{MSE}_e(q,m)=\mathbb E_e\!\left[\frac1{1536}\sum_{d=1}^{1536}(q_{id}-m_{id})^2\right].
\label{eq:raw-reconstruction-error}
\end{equation}
The historical diagnostic samples eight groups and 234 to 256 downstream training fields from each excluded dataset. Task performance is evaluated on separate test cases. The native reference sets $q_i=x_i$ on the same fields, measuring the discrepancy between native features and the teacher target. It equals the residual predictor at zero initialization before training, as confirmed by direct calculation of $\|m_i-x_i\|^2$. We count this reference once per dataset and realization.

Appendix~\ref{app:raw-test-prediction} uses the training target mean, estimated from the original distillation training sample, as a constant reference for the same selected heads on independent test fields. The native vector tests reduction of the discrepancy between magnifications, whereas the constant tests improvement over a common target centre. Because the populations differ, the two analyses retain separate errors and realization counts.

Each distillation dataset contributes at most sixteen training groups and eight validation groups, with 32 fields per group. The objective contains no cosine or native-retention term. AdamW uses batch size 1,024, learning rate $.001$, weight decay $.0001$, gradient clipping at 1 and 4,096 updates. Distillation validation chooses among checkpoints saved every 512 updates. All 200 predictors select the first candidate at update 512, so these results describe that selection procedure rather than an experiment on training duration.

We cross direct or residual output with random or zero initialization of the final layer. A zero layer starts direct output at zero and residual output at $x_i$. The native skip also changes the finite function class around the 256-coordinate bottleneck. Proposition~\ref{prop:conditional-mean} equates unrestricted optima but permits these differences in finite outputs and optimization. Across ten datasets and five realizations, each task model receives only the 1,536 predicted coordinates. Native features are not supplied as a separate block.

Direct prediction has lower mean reconstruction error in every dataset under both initializations, yet residual prediction often performs better downstream (Figure~\ref{fig:held_fidelity}). Their preferences disagree in 35 of fifty realizations with zero initialization and 28 with random initialization. These results demonstrate that reconstruction error alone cannot rank the resulting task representations. The original-coordinate analysis in Appendix~\ref{app:feature-distribution} shows that the same prediction forms also differ in variation across regions. That difference is a candidate explanation to investigate, not an isolated cause of the task outcomes. Under Theorem~\ref{thm:transfer-risk}, the standalone outputs can differ in both representation loss and model excess. Their reconstruction errors cannot separate these contributions.

\begin{figure}[t]
\centering
\includegraphics[width=\linewidth]{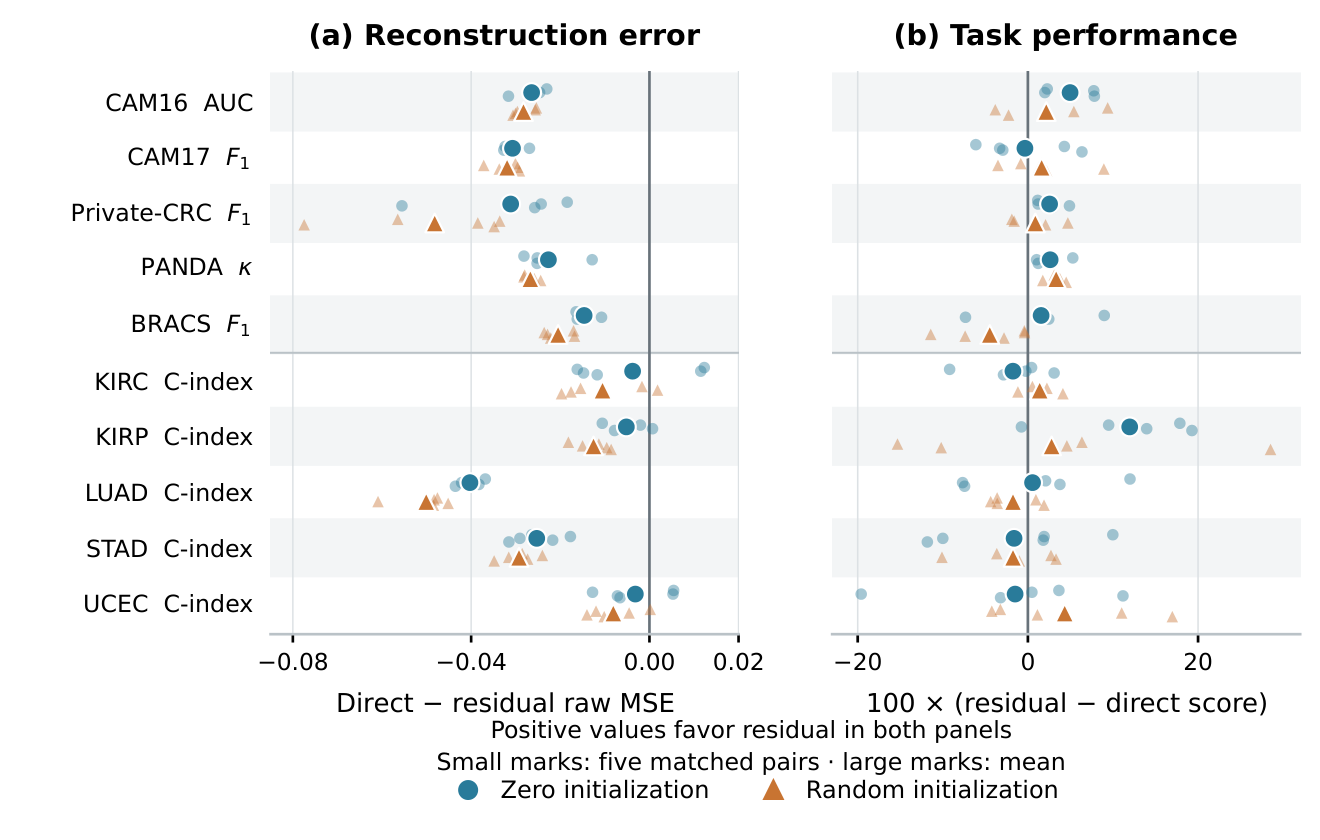}
\caption{\textbf{Reconstruction error need not rank task performance.} Small marks show five realizations and large marks their means. Positive differences favor residual prediction in both panels. Reconstruction uses direct minus residual MSE on historical downstream training fields from datasets excluded from distillation. Task performance uses residual minus direct test score multiplied by 100 in each named metric. Table~\ref{tab:all-cohort-fidelity} reports absolute task scores.}
\label{fig:held_fidelity}
\end{figure}

Table~\ref{tab:all-cohort-fidelity} places both trained prediction forms beside native input. The best form varies across datasets and initialization. A higher mean than the other predictor does not ensure a higher mean than native input. This comparison uses standalone 1,536-coordinate predictions, while the following experiment changes the separate native block at a common width of 3,072.

\begin{table}[htbp]
\centering\footnotesize
\setlength{\tabcolsep}{1.5pt}
\caption{Task scores for native \five{} features and standalone predictions of the regional \twenty{} mean. Each input has 1,536 coordinates. Zero and random specify final-layer initialization before distillation. Native \five{} input is the reference. Higher is better.}
\label{tab:all-cohort-fidelity}
\label{tab:all-cohort-fidelity-random}
\begin{tabular}{@{}lccccc@{}}\toprule
Dataset / metric & Native \five{} & \shortstack{Direct\\Zero init.} & \shortstack{Residual\\Zero init.} & \shortstack{Direct\\Random init.} & \shortstack{Residual\\Random init.}\\\midrule
CAM16 / AUC & $0.9494\!\pm\!0.0198$ & \cellcolor{scoreworse}$0.9142\!\pm\!0.0422$ & \cellcolor{scorebetter}$\mathbf{0.9638}\!\pm\!0.0244$ & \cellcolor{scoreworse}$0.9224\!\pm\!0.0464$ & \cellcolor{scoreworse}$0.9440\!\pm\!0.0417$\\
CAM17 / $F_1$ & $0.6093\!\pm\!0.0556$ & \cellcolor{scorebetter}$0.6116\!\pm\!0.0857$ & \cellcolor{scoreworse}$0.6081\!\pm\!0.0385$ & \cellcolor{scoreworse}$0.6069\!\pm\!0.0683$ & \cellcolor{scorebetter}$\mathbf{0.6231}\!\pm\!0.0494$\\
Private-CRC / $F_1$ & $0.8432\!\pm\!0.0198$ & \cellcolor{scoreworse}$0.8249\!\pm\!0.0179$ & \cellcolor{scorebetter}$\mathbf{0.8504}\!\pm\!0.0142$ & \cellcolor{scoreworse}$0.8412\!\pm\!0.0258$ & \cellcolor{scorebetter}$0.8499\!\pm\!0.0239$\\
PANDA / $\kappa$ & $0.8496\!\pm\!0.0089$ & \cellcolor{scoreworse}$0.8281\!\pm\!0.0203$ & \cellcolor{scorebetter}$\mathbf{0.8542}\!\pm\!0.0116$ & \cellcolor{scoreworse}$0.8154\!\pm\!0.0175$ & \cellcolor{scoreworse}$0.8486\!\pm\!0.0104$\\
BRACS / $F_1$ & $0.4170\!\pm\!0.0527$ & \cellcolor{scorebetter}$0.4337\!\pm\!0.0907$ & \cellcolor{scorebetter}$\mathbf{0.4493}\!\pm\!0.0443$ & \cellcolor{scoreworse}$0.4068\!\pm\!0.0790$ & \cellcolor{scoreworse}$0.3620\!\pm\!0.0708$\\
\midrule
KIRC / C-index & $0.7007\!\pm\!0.0279$ & \cellcolor{scorebetter}$\mathbf{0.7082}\!\pm\!0.0250$ & \cellcolor{scoreworse}$0.6906\!\pm\!0.0407$ & \cellcolor{scoreworse}$0.6909\!\pm\!0.0458$ & \cellcolor{scorebetter}$0.7047\!\pm\!0.0423$\\
KIRP / C-index & $0.7707\!\pm\!0.1060$ & \cellcolor{scoreworse}$0.6881\!\pm\!0.0308$ & \cellcolor{scorebetter}$\mathbf{0.8078}\!\pm\!0.1011$ & \cellcolor{scoreworse}$0.7430\!\pm\!0.1636$ & \cellcolor{scorebetter}$0.7709\!\pm\!0.0964$\\
LUAD / C-index & $0.5327\!\pm\!0.0682$ & \cellcolor{scoreworse}$0.5296\!\pm\!0.0692$ & \cellcolor{scorebetter}$0.5351\!\pm\!0.0300$ & \cellcolor{scorebetter}$\mathbf{0.5636}\!\pm\!0.0464$ & \cellcolor{scorebetter}$0.5461\!\pm\!0.0373$\\
STAD / C-index & $0.5164\!\pm\!0.0790$ & \cellcolor{scorebetter}$0.5192\!\pm\!0.0732$ & \cellcolor{scoreworse}$0.5029\!\pm\!0.0812$ & \cellcolor{scorebetter}$\mathbf{0.5321}\!\pm\!0.0551$ & \cellcolor{scoreworse}$0.5146\!\pm\!0.0770$\\
UCEC / C-index & $\mathbf{0.6756}\!\pm\!0.0830$ & \cellcolor{scoreworse}$0.6576\!\pm\!0.0973$ & \cellcolor{scoreworse}$0.6425\!\pm\!0.1134$ & \cellcolor{scoreworse}$0.6193\!\pm\!0.1241$ & \cellcolor{scoreworse}$0.6626\!\pm\!0.1135$\\
\bottomrule\end{tabular}
\end{table}

\subsection{Native retention with an unchanged teacher prediction}
\label{app:native-retention}
To isolate delivery from reconstruction, we reuse the direct and residual predictions above and change only whether the native vector is supplied alongside them. For each saved $q_i$, the retained version gives the task model $[x_i;q_i]$, while the zero native block version gives $[0_d;q_i]$. Both contain 3,072 coordinates in the same region order. The two task fits share splits, initial weights, training and selection. Their predictions and reconstruction errors are identical. Write $q_i^D=f_\theta(B)_i$ for direct prediction and $q_i^R=x_i+r_\phi(B)_i$ for residual prediction.

The study crosses prediction form, predictor final-layer initialization and native retention, producing eight conditions and 400 task fits across ten datasets and five realizations. Every predictor has undergone Stage~1 training. Zero and random describe how that training began. Fifty additional native-plus-zero references are reused across initialization panels at the same task width. They compare augmentation with the native base, while an exact initial predictor would be needed to attribute its effect to distillation.

\paragraph{Complete native input and the risk comparison.}
For this experiment, the deployment input $B$ in Equation~\ref{eq:teacher_targets} is the complete bag of native UNI2-h vectors. The predictor uses each vector and the bag's arithmetic mean, without DINOv3. Taking the first block of $[x_i;q_i]$ recovers every native vector and hence the slide context. All 14,623 recorded bags have at most 6,488 rows, below the cap of 49,152, so no native row is lost through subsampling. Projection and attention occur later inside task model $g$.

Complete recovery makes representation loss zero for native retention in Theorem~\ref{thm:transfer-risk}. Replacing the separate native block with zeros can leave some native information in $q_i$. A residual prediction, for example, contains $x_i$ in the sum $x_i+r_i(B)$ even when that sum is not invertible. The loss difference between retained and zero native block conditions therefore combines any representation loss with their difference in model excess. Keeping $q_i$ unchanged lets the experiment measure the practical effect of delivery without attributing it to better teacher reconstruction.

\begin{figure}[htbp]
\centering
\includegraphics[width=\linewidth]{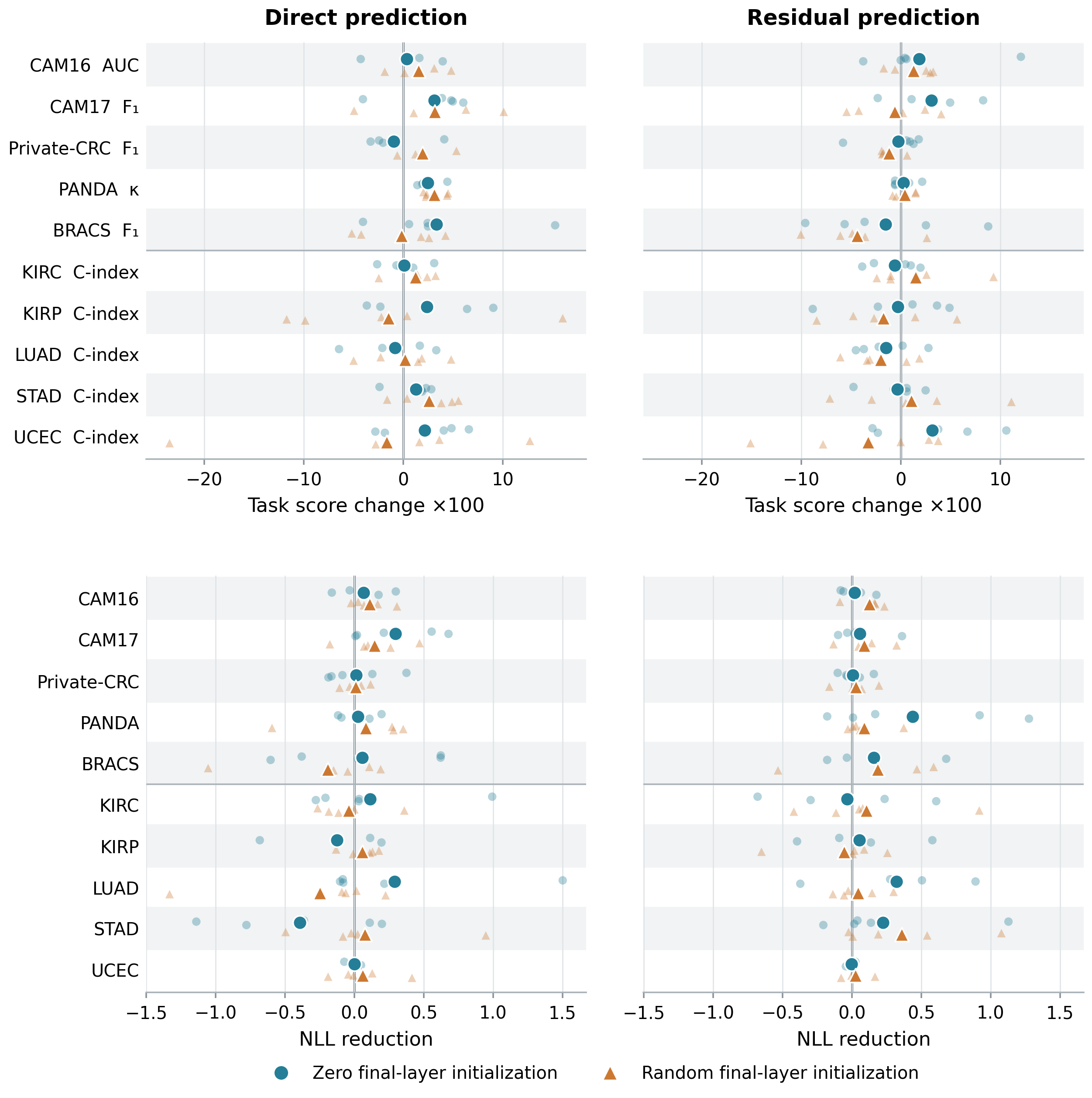}
\caption{\textbf{Native retention changes task results for unchanged predictions.} Upper panels show 100 times the primary score for $[x_i;q_i]$ minus $[0;q_i]$. Lower panels show NLL for $[0;q_i]$ minus $[x_i;q_i]$. Positive values favor retention. Columns distinguish prediction forms, while circles and triangles distinguish zero and random final-layer initialization. All predictors are trained. Small marks show five corresponding task differences and large marks their means. Appendix~\ref{app:native-retention} reports absolute scores.}
\label{fig:native-retention-realizations}
\end{figure}

\paragraph{Task effects of retaining native features.}
Figure~\ref{fig:native-retention-realizations} reports changes within each realization for both primary scores and NLL, extending the NLL comparison in Figure~\ref{fig:native-retention}. Tables~\ref{tab:native-retention-absolute-primary} and~\ref{tab:native-retention-absolute-nll} report absolute results. Native retention improves average primary scores more often for direct predictions than for residual predictions. Residual outputs already include a native skip, which makes the additional block a different intervention for the two forms. NLL favors retention across most conditions of both forms, but neither retention nor teacher prediction consistently improves on the matched native reference. The consequences of delivery depend on the task criterion and vary across the five joint distillation and task realizations.

NLL is the saved arithmetic mean of slide classification cross entropy or discrete survival loss. Survival uses the shared training-derived bins before case aggregation. Classification retains the trainer's probability convention, which differs from the saved case-analysis convention in Appendix~\ref{app:joint-teacher}. We keep these loss magnitudes separate. The reference $[x_i;0_d]$ shares support, input width, initial task weights, splits and training with all eight prediction conditions. It is a different task fit from the 1,536-coordinate native reference in Table~\ref{tab:all-cohort-fidelity}.

\begin{table}[htbp]
\centering\footnotesize
\setlength{\tabcolsep}{1.5pt}
\caption{Task scores with the same trained prediction and either a zero or retained native block. All inputs have 3,072 coordinates. The matched native \five{} reference $[x_i;0_d]$ is reused across panels. Higher is better.}
\label{tab:native-retention-absolute-primary}
\label{tab:native-retention-primary}
\begin{tabular}{@{}lccccc@{}}\toprule
Dataset / metric & \shortstack{Native \five{}\\$[x_i;0_d]$} & \shortstack{Direct\\$[0_d;q_i^D]$} & \shortstack{Direct + native\\$[x_i;q_i^D]$} & \shortstack{Residual\\$[0_d;q_i^R]$} & \shortstack{Residual + native\\$[x_i;q_i^R]$}\\\midrule
\multicolumn{6}{c}{Zero final-layer initialization before distillation}\\
CAM16 / AUC & $0.9669\!\pm\!0.0183$ & \cellcolor{scoreworse}$0.9415\!\pm\!0.0306$ & \cellcolor{scoreworse}$0.9450\!\pm\!0.0290$ & \cellcolor{scoreworse}$0.9486\!\pm\!0.0556$ & \cellcolor{scorebetter}$\mathbf{0.9672}\!\pm\!0.0146$\\
CAM17 / $F_1$ & $0.6080\!\pm\!0.0327$ & \cellcolor{scoreworse}$0.5934\!\pm\!0.0464$ & \cellcolor{scorebetter}$\mathbf{0.6248}\!\pm\!0.0339$ & \cellcolor{scoreworse}$0.5852\!\pm\!0.0608$ & \cellcolor{scorebetter}$0.6161\!\pm\!0.0505$\\
Private-CRC / $F_1$ & $\mathbf{0.8391}\!\pm\!0.0337$ & \cellcolor{scoreworse}$0.8159\!\pm\!0.0438$ & \cellcolor{scoreworse}$0.8063\!\pm\!0.0511$ & \cellcolor{scoreworse}$0.8370\!\pm\!0.0227$ & \cellcolor{scoreworse}$0.8346\!\pm\!0.0326$\\
PANDA / $\kappa$ & $\mathbf{0.8517}\!\pm\!0.0221$ & \cellcolor{scoreworse}$0.8251\!\pm\!0.0155$ & \cellcolor{scoreworse}$0.8496\!\pm\!0.0122$ & \cellcolor{scoreworse}$0.8445\!\pm\!0.0122$ & \cellcolor{scoreworse}$0.8471\!\pm\!0.0184$\\
BRACS / $F_1$ & $0.3809\!\pm\!0.1116$ & \cellcolor{scorebetter}$0.3987\!\pm\!0.0517$ & \cellcolor{scorebetter}$\mathbf{0.4323}\!\pm\!0.1115$ & \cellcolor{scorebetter}$0.4268\!\pm\!0.0741$ & \cellcolor{scorebetter}$0.4118\!\pm\!0.0515$\\
\midrule
KIRC / C-index & $0.6984\!\pm\!0.0353$ & \cellcolor{scorebetter}$0.7095\!\pm\!0.0278$ & \cellcolor{scorebetter}$0.7106\!\pm\!0.0286$ & \cellcolor{scorebetter}$\mathbf{0.7248}\!\pm\!0.0329$ & \cellcolor{scorebetter}$0.7187\!\pm\!0.0369$\\
KIRP / C-index & $\mathbf{0.8363}\!\pm\!0.0911$ & \cellcolor{scoreworse}$0.8012\!\pm\!0.0756$ & \cellcolor{scoreworse}$0.8250\!\pm\!0.1022$ & \cellcolor{scoreworse}$0.8007\!\pm\!0.0782$ & \cellcolor{scoreworse}$0.7979\!\pm\!0.0849$\\
LUAD / C-index & $0.5453\!\pm\!0.0578$ & \cellcolor{scoreworse}$0.5343\!\pm\!0.0668$ & \cellcolor{scoreworse}$0.5262\!\pm\!0.0546$ & \cellcolor{scorebetter}$\mathbf{0.5602}\!\pm\!0.0526$ & $0.5453\!\pm\!0.0686$\\
STAD / C-index & $0.5495\!\pm\!0.0290$ & \cellcolor{scoreworse}$0.5487\!\pm\!0.0463$ & \cellcolor{scorebetter}$0.5617\!\pm\!0.0502$ & \cellcolor{scorebetter}$\mathbf{0.5802}\!\pm\!0.0497$ & \cellcolor{scorebetter}$0.5768\!\pm\!0.0381$\\
UCEC / C-index & $0.6693\!\pm\!0.0576$ & \cellcolor{scoreworse}$0.6446\!\pm\!0.1337$ & \cellcolor{scoreworse}$0.6664\!\pm\!0.1133$ & \cellcolor{scoreworse}$0.6424\!\pm\!0.0568$ & \cellcolor{scorebetter}$\mathbf{0.6743}\!\pm\!0.0797$\\
\midrule
\multicolumn{6}{c}{Random final-layer initialization before distillation}\\
CAM16 / AUC & $0.9669\!\pm\!0.0183$ & \cellcolor{scoreworse}$0.9455\!\pm\!0.0336$ & \cellcolor{scoreworse}$0.9611\!\pm\!0.0285$ & \cellcolor{scoreworse}$0.9563\!\pm\!0.0160$ & \cellcolor{scorebetter}$\mathbf{0.9693}\!\pm\!0.0234$\\
CAM17 / $F_1$ & $0.6080\!\pm\!0.0327$ & \cellcolor{scoreworse}$0.6068\!\pm\!0.0334$ & \cellcolor{scorebetter}$\mathbf{0.6384}\!\pm\!0.0582$ & \cellcolor{scorebetter}$0.6164\!\pm\!0.0298$ & \cellcolor{scorebetter}$0.6105\!\pm\!0.0345$\\
Private-CRC / $F_1$ & $0.8391\!\pm\!0.0337$ & \cellcolor{scoreworse}$0.8147\!\pm\!0.0467$ & \cellcolor{scoreworse}$0.8339\!\pm\!0.0330$ & \cellcolor{scorebetter}$\mathbf{0.8494}\!\pm\!0.0298$ & \cellcolor{scoreworse}$0.8376\!\pm\!0.0404$\\
PANDA / $\kappa$ & $0.8517\!\pm\!0.0221$ & \cellcolor{scoreworse}$0.8208\!\pm\!0.0118$ & \cellcolor{scorebetter}$\mathbf{0.8521}\!\pm\!0.0109$ & \cellcolor{scoreworse}$0.8479\!\pm\!0.0169$ & \cellcolor{scorebetter}$0.8518\!\pm\!0.0100$\\
BRACS / $F_1$ & $0.3809\!\pm\!0.1116$ & \cellcolor{scorebetter}$0.4313\!\pm\!0.0258$ & \cellcolor{scorebetter}$0.4299\!\pm\!0.0373$ & \cellcolor{scorebetter}$\mathbf{0.4514}\!\pm\!0.0665$ & \cellcolor{scorebetter}$0.4077\!\pm\!0.0483$\\
\midrule
KIRC / C-index & $0.6984\!\pm\!0.0353$ & \cellcolor{scoreworse}$0.6953\!\pm\!0.0591$ & \cellcolor{scorebetter}$0.7076\!\pm\!0.0471$ & \cellcolor{scorebetter}$0.7027\!\pm\!0.0484$ & \cellcolor{scorebetter}$\mathbf{0.7177}\!\pm\!0.0470$\\
KIRP / C-index & $\mathbf{0.8363}\!\pm\!0.0911$ & \cellcolor{scoreworse}$0.7936\!\pm\!0.0832$ & \cellcolor{scoreworse}$0.7790\!\pm\!0.0789$ & \cellcolor{scoreworse}$0.8044\!\pm\!0.0917$ & \cellcolor{scoreworse}$0.7869\!\pm\!0.0934$\\
LUAD / C-index & $0.5453\!\pm\!0.0578$ & \cellcolor{scorebetter}$0.5644\!\pm\!0.0759$ & \cellcolor{scorebetter}$\mathbf{0.5663}\!\pm\!0.0585$ & \cellcolor{scoreworse}$0.5436\!\pm\!0.0525$ & \cellcolor{scoreworse}$0.5234\!\pm\!0.0400$\\
STAD / C-index & $0.5495\!\pm\!0.0290$ & \cellcolor{scoreworse}$0.5389\!\pm\!0.0356$ & \cellcolor{scorebetter}$\mathbf{0.5651}\!\pm\!0.0396$ & \cellcolor{scorebetter}$0.5524\!\pm\!0.0721$ & \cellcolor{scorebetter}$0.5629\!\pm\!0.0669$\\
UCEC / C-index & $0.6693\!\pm\!0.0576$ & \cellcolor{scorebetter}$0.6764\!\pm\!0.1530$ & \cellcolor{scoreworse}$0.6600\!\pm\!0.0783$ & \cellcolor{scorebetter}$\mathbf{0.7051}\!\pm\!0.1016$ & \cellcolor{scorebetter}$0.6726\!\pm\!0.0675$\\
\bottomrule\end{tabular}
\end{table}

\begin{table}[htbp]
\centering\footnotesize
\setlength{\tabcolsep}{1.5pt}
\caption{Task NLL with the same trained prediction and either a zero or retained native block. All inputs have 3,072 coordinates. The matched native \five{} reference $[x_i;0_d]$ is reused across panels. Lower is better.}
\label{tab:native-retention-absolute-nll}
\label{tab:native-retention-nll}
\begin{tabular}{@{}lccccc@{}}\toprule
Dataset & \shortstack{Native \five{}\\$[x_i;0_d]$} & \shortstack{Direct\\$[0_d;q_i^D]$} & \shortstack{Direct + native\\$[x_i;q_i^D]$} & \shortstack{Residual\\$[0_d;q_i^R]$} & \shortstack{Residual + native\\$[x_i;q_i^R]$}\\\midrule
\multicolumn{6}{c}{Zero final-layer initialization before distillation}\\
CAM16 & $\mathbf{0.2610}\!\pm\!0.0953$ & \cellcolor{scoreworse}$0.3580\!\pm\!0.1922$ & \cellcolor{scoreworse}$0.2924\!\pm\!0.1396$ & \cellcolor{scoreworse}$0.3253\!\pm\!0.1347$ & \cellcolor{scoreworse}$0.3048\!\pm\!0.1512$\\
CAM17 & $0.6529\!\pm\!0.2769$ & \cellcolor{scoreworse}$0.9189\!\pm\!0.3316$ & \cellcolor{scorebetter}$\mathbf{0.6241}\!\pm\!0.3428$ & \cellcolor{scoreworse}$0.7350\!\pm\!0.4200$ & \cellcolor{scoreworse}$0.6791\!\pm\!0.2495$\\
Private-CRC & $0.6192\!\pm\!0.0378$ & \cellcolor{scorebetter}$0.5823\!\pm\!0.1906$ & \cellcolor{scorebetter}$0.5690\!\pm\!0.0727$ & \cellcolor{scorebetter}$0.5371\!\pm\!0.0869$ & \cellcolor{scorebetter}$\mathbf{0.5297}\!\pm\!0.1282$\\
PANDA & $1.2043\!\pm\!0.4426$ & \cellcolor{scorebetter}$1.0014\!\pm\!0.0692$ & \cellcolor{scorebetter}$\mathbf{0.9764}\!\pm\!0.0898$ & \cellcolor{scoreworse}$1.5970\!\pm\!0.7405$ & \cellcolor{scorebetter}$1.1580\!\pm\!0.2178$\\
BRACS & $1.5287\!\pm\!0.6716$ & \cellcolor{scoreworse}$1.5428\!\pm\!0.3129$ & \cellcolor{scorebetter}$1.4855\!\pm\!0.5011$ & \cellcolor{scoreworse}$1.5966\!\pm\!0.4032$ & \cellcolor{scorebetter}$\mathbf{1.4386}\!\pm\!0.3485$\\
\midrule
KIRC & $1.8947\!\pm\!0.5592$ & \cellcolor{scorebetter}$1.7613\!\pm\!0.5343$ & \cellcolor{scorebetter}$\mathbf{1.6468}\!\pm\!0.2609$ & \cellcolor{scorebetter}$1.7733\!\pm\!0.2047$ & \cellcolor{scorebetter}$1.8066\!\pm\!0.5783$\\
KIRP & $1.1656\!\pm\!0.4045$ & \cellcolor{scorebetter}$\mathbf{1.1319}\!\pm\!0.1381$ & \cellcolor{scoreworse}$1.2582\!\pm\!0.2996$ & \cellcolor{scoreworse}$1.2343\!\pm\!0.2811$ & \cellcolor{scoreworse}$1.1800\!\pm\!0.2694$\\
LUAD & $\mathbf{2.1217}\!\pm\!0.1648$ & \cellcolor{scoreworse}$2.6486\!\pm\!0.7593$ & \cellcolor{scoreworse}$2.3585\!\pm\!0.7398$ & \cellcolor{scoreworse}$2.7231\!\pm\!0.7288$ & \cellcolor{scoreworse}$2.4006\!\pm\!0.4799$\\
STAD & $2.6422\!\pm\!0.7560$ & \cellcolor{scorebetter}$\mathbf{2.2103}\!\pm\!0.3926$ & \cellcolor{scorebetter}$2.6048\!\pm\!0.7348$ & \cellcolor{scoreworse}$2.7300\!\pm\!0.6291$ & \cellcolor{scorebetter}$2.5068\!\pm\!0.4820$\\
UCEC & $0.9924\!\pm\!0.1125$ & \cellcolor{scorebetter}$0.9382\!\pm\!0.0483$ & \cellcolor{scorebetter}$0.9375\!\pm\!0.0790$ & \cellcolor{scorebetter}$\mathbf{0.9268}\!\pm\!0.0816$ & \cellcolor{scorebetter}$0.9309\!\pm\!0.0756$\\
\midrule
\multicolumn{6}{c}{Random final-layer initialization before distillation}\\
CAM16 & $0.2610\!\pm\!0.0953$ & \cellcolor{scoreworse}$0.3762\!\pm\!0.1650$ & \cellcolor{scoreworse}$0.2669\!\pm\!0.1596$ & \cellcolor{scoreworse}$0.3478\!\pm\!0.1310$ & \cellcolor{scorebetter}$\mathbf{0.2221}\!\pm\!0.1084$\\
CAM17 & $\mathbf{0.6529}\!\pm\!0.2769$ & \cellcolor{scoreworse}$0.8497\!\pm\!0.2992$ & \cellcolor{scoreworse}$0.7054\!\pm\!0.3072$ & \cellcolor{scoreworse}$0.7946\!\pm\!0.3287$ & \cellcolor{scoreworse}$0.7055\!\pm\!0.3602$\\
Private-CRC & $0.6192\!\pm\!0.0378$ & \cellcolor{scorebetter}$0.5807\!\pm\!0.0714$ & \cellcolor{scorebetter}$0.5703\!\pm\!0.0761$ & \cellcolor{scorebetter}$0.5354\!\pm\!0.0870$ & \cellcolor{scorebetter}$\mathbf{0.5078}\!\pm\!0.0814$\\
PANDA & $1.2043\!\pm\!0.4426$ & \cellcolor{scorebetter}$1.1324\!\pm\!0.1417$ & \cellcolor{scorebetter}$1.0520\!\pm\!0.2602$ & \cellcolor{scorebetter}$1.1098\!\pm\!0.4763$ & \cellcolor{scorebetter}$\mathbf{1.0221}\!\pm\!0.3193$\\
BRACS & $\mathbf{1.5287}\!\pm\!0.6716$ & \cellcolor{scoreworse}$1.5451\!\pm\!0.4189$ & \cellcolor{scoreworse}$1.7362\!\pm\!0.5892$ & \cellcolor{scoreworse}$1.8601\!\pm\!0.5966$ & \cellcolor{scoreworse}$1.6750\!\pm\!0.4275$\\
\midrule
KIRC & $1.8947\!\pm\!0.5592$ & \cellcolor{scorebetter}$1.8774\!\pm\!0.4209$ & \cellcolor{scoreworse}$1.9181\!\pm\!0.4312$ & \cellcolor{scoreworse}$1.9248\!\pm\!0.4497$ & \cellcolor{scorebetter}$\mathbf{1.8219}\!\pm\!0.1382$\\
KIRP & $1.1656\!\pm\!0.4045$ & \cellcolor{scoreworse}$1.1808\!\pm\!0.2036$ & \cellcolor{scorebetter}$1.1251\!\pm\!0.1656$ & \cellcolor{scorebetter}$\mathbf{1.0906}\!\pm\!0.1580$ & \cellcolor{scorebetter}$1.1478\!\pm\!0.2383$\\
LUAD & $2.1217\!\pm\!0.1648$ & \cellcolor{scorebetter}$\mathbf{2.1058}\!\pm\!0.1759$ & \cellcolor{scoreworse}$2.3550\!\pm\!0.6127$ & \cellcolor{scoreworse}$2.4996\!\pm\!0.5208$ & \cellcolor{scoreworse}$2.4547\!\pm\!0.6061$\\
STAD & $2.6422\!\pm\!0.7560$ & \cellcolor{scorebetter}$2.4557\!\pm\!0.3826$ & \cellcolor{scorebetter}$\mathbf{2.3816}\!\pm\!0.6888$ & \cellcolor{scoreworse}$2.8569\!\pm\!0.4871$ & \cellcolor{scorebetter}$2.4986\!\pm\!0.3642$\\
UCEC & $0.9924\!\pm\!0.1125$ & \cellcolor{scoreworse}$1.0223\!\pm\!0.2452$ & \cellcolor{scorebetter}$0.9615\!\pm\!0.0645$ & \cellcolor{scorebetter}$0.9622\!\pm\!0.0719$ & \cellcolor{scorebetter}$\mathbf{0.9363}\!\pm\!0.0998$\\
\bottomrule\end{tabular}
\end{table}

\paragraph{A coordinate change that preserves information.}
An invertible coordinate change provides a separate check of information preservation. Starting from $x$ and residual correction $r$, we form $u=(x+r)/\sqrt2$ and $v=(x-r)/\sqrt2$, with inverse $x=(u+v)/\sqrt2$ and $r=(u-v)/\sqrt2$. Rewriting a saved task model's first affine layer expresses the same function in these coordinates without retraining. All fifty dataset and realization replays agree within the recorded tolerance of $.001$, with maximum absolute logit difference $1.91\times10^{-5}$. They cover 14,685 slide appearances, including repeated slides across realizations. This validates the conversion for saved functions and illustrates complete recovery in the theorem. It does not equate fresh optimization outcomes. Both coordinates remain available in this replay.

\subsection{Individual features or their mean as auxiliary supervision}
\label{app:encoder-target-law}
The final study tests transfer through the image encoder itself. We adapt UNI2-h by distillation on the training images, freeze it and train downstream models on its features. The references are the published encoder, an encoder trained by mean KD and that model's mean-prediction output. Two further encoders combine mean KD with a flow objective whose target is either the regional mean or a sampled individual teacher feature. Their comparison changes the auxiliary target while retaining mean supervision and the same downstream readout.

\paragraph{Architecture and objectives.}
Let $I_i$ be the raw \five{} patch and $c_i=E_\theta(I_i)\in\mathbb R^{1536}$ its adapted encoder feature. UNI2-h uses rank-8 LoRA updates on query and value projections in all 24 blocks, with pretrained weights and key projections frozen \citep{UNI,hu2022lora}. A GELU head $r_\psi$ with widths $1536\to1536\to1536$ predicts $m_i=K_i^{-1}\sum_{j=1}^{K_i}h_{ij}$ from $c_i$. The $1\le K_i\le16$ teacher vectors are the observed aligned \twenty{} patches assigned to the region. The distillation data do not require all sixteen positions. With coordinate scales $s_{m,d}$ estimated from the distillation training targets, the common mean loss is
\begin{equation}
\mathcal L_{\rm mean}
=\mathbb E_i\left[\frac1{1536}\sum_{d=1}^{1536}
\operatorname{SmoothL1}_{1}\left(\frac{r_\psi(c_i)_d-m_{i,d}}{s_{m,d}}\right)
+.1\left(1-\cos\bigl(r_\psi(c_i),m_i\bigr)\right)\right].
\label{eq:encoder-mean-kd}
\end{equation}
SmoothL1 uses unit threshold, with value $e^2/2$ for $|e|<1$ and $|e|-1/2$ otherwise. Cosine similarity uses tolerance $10^{-8}$. This objective accompanies encoder adaptation and differs from frozen-feature MSE prediction, so results across the two studies cannot isolate the loss function.

The auxiliary flow target is $a_i=m_i$ for mean supervision or $a_i=h_{iJ}$ for individual supervision, with $J$ uniform over the $K_i$ observed patches. Both use the centre $\mu$ and scale $s_h$ estimated from distillation training data to form $A_i=(a_i-\mu)\oslash s_h$, where $\oslash$ is coordinatewise division. With Gaussian noise $\epsilon\sim\mathcal N(0,I)$ and time $s\sim\operatorname{Unif}[0,1)$, the field $v_\omega$ learns the velocity along a straight path \citep{lipman2023flow},
\begin{align}
z_s&=(1-s)\epsilon+sA_i,\\
\mathcal L_{\rm flow}&=\mathbb E_{i,s,\epsilon,J}\left[\frac1{1536}
\|v_\omega(z_s,s,c_i)-(A_i-\epsilon)\|_2^2\right],\\
\mathcal L&=\mathcal L_{\rm mean}+.1\mathcal L_{\rm flow}.
\label{eq:encoder-flow-kd}
\end{align}
Only individual-feature supervision uses the draw $J$. Neither its index nor its spatial position enters the encoder or velocity field. Teacher features, path state and target velocity are detached, while the conditioning feature $c_i$ carries gradients to the encoder. The velocity field has three modulated residual MLP blocks of width 512 and a 128-dimensional sinusoidal time encoding. Changing the sampled target also changes the path input $z_s$. The shared-output premise of Lemma~\ref{lem:shared-output-mean} therefore does not hold, so replacing individual targets by their mean can change this auxiliary objective.

For both flow-trained encoders, downstream models receive only $E_\theta(I_i)$. We discard the mean head and velocity field, solve no differential equation and generate no teacher features at deployment. The separate head readout uses $r_\psi(E_\theta(I_i))$ from the mean-KD model. All task tokens have 1,536 coordinates. In Theorem~\ref{thm:transfer-risk}, $B$ now denotes raw images, which the adapted features need not recover. Both representation loss and model excess can consequently change, even though the auxiliary-target comparison holds task width constant.

\paragraph{Distillation populations and optimization.}
Three distillation partitions provide excluded-dataset evaluation across all ten datasets. The first trains on Private-CRC, PANDA, KIRC, KIRP, LUAD and STAD, then evaluates CAM16, CAM17, BRACS and UCEC. Partition A trains on CAM16, CAM17, BRACS, UCEC, KIRP and LUAD, then evaluates Private-CRC, PANDA and KIRC. Partition B trains on CAM16, CAM17, BRACS, UCEC, Private-CRC and PANDA, then evaluates KIRP, LUAD and STAD. Each partition shares its distillation sample across the three objectives and five realizations. Evaluation datasets enter neither distillation training nor validation. These experiments use six distillation datasets, while the frozen-feature studies use nine.

Each distillation dataset contributes 1,024 fitting and 256 selection occurrences. Sampling first chooses a group, then a slide and an eligible region, retaining repeated occurrences in the objective. The six datasets contribute equal mass. Means and scales are estimated from fitting occurrences only. Individual-feature moments average within each region before averaging across occurrences, so regions with more observed teacher patches receive no extra weight. The mean loss uses coordinate SD of regional targets. The flow loss uses the common centre and coordinate SD of individual features, balanced by region. Both scales are floored at $10^{-4}$.

Regions used for distillation can contain incomplete teacher grids. The first partition has 4,142 complete grids among 6,144 fitting occurrences, with two to fifteen observed patches in the remainder. Partitions A and B have 5,011 and 4,085 complete grids and include some regions with one patch. Uniformly sampling an observed teacher vector therefore weights the available patches within each region, rather than every physical position of a complete field.

Training uses 2,000 updates, effective batch size 96 and microbatches of 16. Each update includes sixteen fitting occurrences from each distillation dataset. AdamW uses learning rates $10^{-5}$ for encoder updates and $10^{-4}$ for the prediction head and field, weight decay .05 on decay-eligible parameters, ten percent warmup, cosine decay and gradient clipping at 1. BF16 forward passes retain FP32 parameters and losses, with TF32 disabled. Paired distillation realizations share their initial encoder, mean head, velocity field, occurrence schedule, noise and time streams across the models with flow supervision. Teacher indices are sampled independently of the noise and time draws.

Distillation validation compares checkpoints from updates 200 through 2,000 using each model's combined objective. Flow evaluation uses four fixed noise and time draws per occurrence, with one time from each quarter of the interval. The initial model is recorded but cannot be selected. All thirty flow fits choose update 2,000, holding selected training duration constant in the target comparison. Three objectives across three partitions and five realizations require 45 distillation fits.

\paragraph{Task comparison.}
All five task representations use the same selection of at most 128 raw \five{} patches per slide, with shared splits, task initialization and training within a realization. CAM16, CAM17, Private-CRC and BRACS train for fifteen epochs, and PANDA and survival for twenty. Coupled distillation and task seeds give 250 fits across five realizations. Test membership changes across realizations but matches across models within each. Because distillation partitions differ across evaluation datasets, variation in effects cannot be assigned solely to task category.

Table~\ref{tab:encoder-target-absolute} compares all five readouts, including the mean-KD head. The frozen reference changes the interpretation of adapted models. Individual-feature flow has a higher mean than regional-mean flow on KIRP, yet both remain below the frozen encoder. Both flow encoders exceed the frozen reference on LUAD and STAD. The prediction head can also change the ordering relative to its own encoder, so auxiliary target choice and downstream readout require separate comparisons.

\begin{table}[htbp]
\centering\footnotesize
\setlength{\tabcolsep}{1.5pt}
\caption{Task scores from frozen UNI2-h, the mean-KD encoder and its prediction head, and encoders trained with mean KD plus flow supervision. Every readout has 1,536 coordinates on the same sampled \five{} patches. Published frozen UNI2-h is the reference. Higher is better.}
\label{tab:encoder-target-absolute}
\begin{tabular}{@{}lccccc@{}}\toprule
Dataset / metric & \shortstack{Frozen\\UNI2-h} & \shortstack{Mean KD\\Encoder} & \shortstack{Mean KD\\Mean head} & \shortstack{Mean KD + flow\\Regional mean} & \shortstack{Mean KD + flow\\Individual feature}\\\midrule
CAM16 / AUC & $0.8808\!\pm\!0.0568$ & \cellcolor{scoreworse}$0.8734\!\pm\!0.0787$ & \cellcolor{scorebetter}$\mathbf{0.8982}\!\pm\!0.0548$ & \cellcolor{scorebetter}$0.8827\!\pm\!0.0775$ & \cellcolor{scoreworse}$0.8749\!\pm\!0.0768$\\
CAM17 / $F_1$ & $0.5025\!\pm\!0.0435$ & \cellcolor{scorebetter}$0.5115\!\pm\!0.0539$ & \cellcolor{scorebetter}$\mathbf{0.5403}\!\pm\!0.0355$ & \cellcolor{scorebetter}$0.5077\!\pm\!0.0423$ & \cellcolor{scorebetter}$0.5211\!\pm\!0.0606$\\
Private-CRC / $F_1$ & $0.8411\!\pm\!0.0262$ & \cellcolor{scoreworse}$0.8353\!\pm\!0.0509$ & \cellcolor{scoreworse}$0.8353\!\pm\!0.0408$ & \cellcolor{scoreworse}$0.8225\!\pm\!0.0434$ & \cellcolor{scorebetter}$\mathbf{0.8492}\!\pm\!0.0282$\\
PANDA / $\kappa$ & $0.8499\!\pm\!0.0155$ & \cellcolor{scorebetter}$\mathbf{0.8519}\!\pm\!0.0131$ & \cellcolor{scoreworse}$0.8458\!\pm\!0.0117$ & \cellcolor{scoreworse}$0.8496\!\pm\!0.0122$ & \cellcolor{scoreworse}$0.8495\!\pm\!0.0067$\\
BRACS / $F_1$ & $0.4589\!\pm\!0.0541$ & \cellcolor{scoreworse}$0.4331\!\pm\!0.0477$ & \cellcolor{scorebetter}$\mathbf{0.4608}\!\pm\!0.0606$ & \cellcolor{scoreworse}$0.4204\!\pm\!0.0339$ & \cellcolor{scoreworse}$0.4347\!\pm\!0.0363$\\
\midrule
KIRC / C-index & $0.7321\!\pm\!0.0407$ & \cellcolor{scoreworse}$0.7206\!\pm\!0.0176$ & \cellcolor{scoreworse}$0.7186\!\pm\!0.0451$ & \cellcolor{scorebetter}$\mathbf{0.7378}\!\pm\!0.0176$ & \cellcolor{scoreworse}$0.7069\!\pm\!0.0332$\\
KIRP / C-index & $\mathbf{0.7921}\!\pm\!0.0721$ & \cellcolor{scoreworse}$0.7519\!\pm\!0.0963$ & \cellcolor{scoreworse}$0.7504\!\pm\!0.1031$ & \cellcolor{scoreworse}$0.6861\!\pm\!0.1161$ & \cellcolor{scoreworse}$0.7744\!\pm\!0.1057$\\
LUAD / C-index & $0.5307\!\pm\!0.0481$ & \cellcolor{scorebetter}$0.5674\!\pm\!0.0424$ & \cellcolor{scorebetter}$0.5453\!\pm\!0.0380$ & \cellcolor{scorebetter}$0.5537\!\pm\!0.0268$ & \cellcolor{scorebetter}$\mathbf{0.5783}\!\pm\!0.0396$\\
STAD / C-index & $0.5532\!\pm\!0.0235$ & \cellcolor{scorebetter}$0.5944\!\pm\!0.0312$ & \cellcolor{scorebetter}$0.5628\!\pm\!0.0607$ & \cellcolor{scorebetter}$0.5912\!\pm\!0.0316$ & \cellcolor{scorebetter}$\mathbf{0.5953}\!\pm\!0.0331$\\
UCEC / C-index & $\mathbf{0.6561}\!\pm\!0.0591$ & \cellcolor{scoreworse}$0.5980\!\pm\!0.0711$ & \cellcolor{scoreworse}$0.6348\!\pm\!0.1202$ & \cellcolor{scoreworse}$0.5924\!\pm\!0.0718$ & \cellcolor{scoreworse}$0.5976\!\pm\!0.0671$\\
\bottomrule\end{tabular}
\end{table}

\begin{table}[htbp]
\centering\footnotesize
\setlength{\tabcolsep}{2.5pt}
\caption{Validation task scores at the selected downstream checkpoints. Both encoders retain mean KD. Regional-mean flow is the reference for individual-feature flow. Higher is better.}
\label{tab:encoder-target-details}
\begin{tabular}{@{}lcc@{}}\toprule
Dataset / metric & Flow toward regional mean & Flow toward individual feature\\\midrule
CAM16 / AUC & $\mathbf{0.8598}\!\pm\!0.0900$ & \cellcolor{scoreworse}$0.8538\!\pm\!0.0921$\\
CAM17 / $F_1$ & $0.5468\!\pm\!0.0535$ & \cellcolor{scorebetter}$\mathbf{0.5549}\!\pm\!0.1129$\\
Private-CRC / $F_1$ & $\mathbf{0.8889}\!\pm\!0.0303$ & \cellcolor{scoreworse}$0.8800\!\pm\!0.0367$\\
PANDA / $\kappa$ & $\mathbf{0.8709}\!\pm\!0.0114$ & \cellcolor{scoreworse}$0.8691\!\pm\!0.0136$\\
BRACS / $F_1$ & $0.4201\!\pm\!0.0934$ & \cellcolor{scorebetter}$\mathbf{0.4333}\!\pm\!0.0992$\\
\midrule
KIRC / C-index & $0.6957\!\pm\!0.0499$ & \cellcolor{scorebetter}$\mathbf{0.6965}\!\pm\!0.0566$\\
KIRP / C-index & $\mathbf{0.8618}\!\pm\!0.0556$ & \cellcolor{scoreworse}$0.8587\!\pm\!0.0366$\\
LUAD / C-index & $\mathbf{0.6386}\!\pm\!0.0513$ & \cellcolor{scoreworse}$0.6357\!\pm\!0.0734$\\
STAD / C-index & $\mathbf{0.6428}\!\pm\!0.0863$ & \cellcolor{scoreworse}$0.6390\!\pm\!0.0849$\\
UCEC / C-index & $\mathbf{0.8026}\!\pm\!0.0777$ & $\mathbf{0.8026}\!\pm\!0.0777$\\
\bottomrule\end{tabular}
\end{table}

The preferred flow target varies across datasets. Individual-feature flow has higher test macro $F_1$ than regional-mean flow on CAM17, Private-CRC and BRACS, while CAM16 and PANDA favor regional-mean flow. Survival cohorts show both directions. Private-CRC favors individual-feature flow in all five test realizations, whereas KIRC favors regional-mean flow in all five. Their preferred targets reverse on average validation scores in Table~\ref{tab:encoder-target-details}. KIRP's positive test average for individual-feature flow combines two gains, two losses and one tie. These patterns support sensitivity to the auxiliary target without a rule for choosing it by task category or establishing recovery of the conditional distribution of teacher patches.

\end{document}